\newtheorem{theorem}{Theorem}[section]
\newtheorem{remark}[theorem]{Remark}
\newtheorem{corollary}[theorem]{Corollary}
\newtheorem{definition}[theorem]{Definition}
\newtheorem{lemma}[theorem]{Lemma}
\newtheorem{assumption}[theorem]{Assumption}
\newcommand{\faaa}{f_{\P_1,\lb_1,k_1}}
\newcommand{\fbaa}{f_{\P_2,\lb_1,k_1}}
\newcommand{\fbba}{f_{\P_2,\lb_2,k_1}}
\newcommand{\fbbb}{f_{\P_2,\lb_2,k_2}}
\newcommand{\falk}{f_{\P_1,\lb,k}}
\newcommand{\fblk}{f_{\P_2,\lb,k}}
\newcommand{\flb}{f_{\lb}}
\newcommand{\fmu}{f_{\mu}}
\newcommand{\fsaaa}{f_{\P_1,\lb_1,k_1}}
\newcommand{\fsbaa}{f_{\P_2,\lb_1,k_1}}
\newcommand{\fsbba}{f_{\P_2,\lb_2,k_1}}
\newcommand{\fsbbb}{f_{\P_2,\lb_2,k_2}}
\newcommand{\hanorm}[1] {\Vert #1 \Vert_{H_1}}
\newcommand{\hhanorm}[1] {\Vert #1 \Vert_{H_1}^2}
\newcommand{\hbnorm}[1] {\Vert #1 \Vert_{H_2}}
\newcommand{\kk}[1] {k(\cdot,#1)}
\newcommand{\ka}[1] {k_1(\cdot,#1)}
\newcommand{\kb}[1] {k_2(\cdot,#1)}
\newcommand{\bc}{\begin{center}}
\newcommand{\ec}{\end{center}}
\newcommand{\bi}{\begin{itemize}}
\newcommand{\ei}{\end{itemize}}
\newcommand{\be}{\begin{equation}}
\newcommand{\ee}{\end{equation}}
\newcommand{\beqna}{\begin{eqnarray*}}
\newcommand{\eeqnal}{\end{eqnarray}}
\newcommand{\beqnal}{\begin{eqnarray}}
\newcommand{\eeqna}{\end{eqnarray*}}
\newcommand{\bd}{\begin{displaymath}}
\newcommand{\ed}{\end{displaymath}}
\newcommand{\bt}{\begin{tabular}}
\newcommand{\et}{\end{tabular}}
\newcommand{\myem}[1]{\textbf{#1}}
\newcommand{\rem}[1]{}
\newlength{\fixboxwidth}
\newcommand{\N}{\mathds{N}}
\newcommand{\R}{\mathds{R}}
\newcommand{\Rd}{\R^{d}}
\newcommand{\snorm}[1] {\Vert #1 \Vert}
\newcommand{\hnorm}[1] {\Vert #1 \Vert_{\sH}}
\newcommand{\hhnorm}[1] {\Vert #1 \Vert_{\sH}^2}
\newcommand{\inorm}[1] {\Vert #1 \Vert_\infty}
\newcommand{\tvnorm}[1]{\Vert  #1  \Vert_{tv}} 
\newcommand{\sH}    {H}  
\newcommand{\sA} {\mathcal{A}}     
\newcommand{\B}    {\mathcal{B}}   
\def \lb        { \lambda }
\def \a         { \alpha }
\def \g         { \gamma }
\def \s         { \sigma }
\def \t         { \tau }
\def \d         { \delta }
\def \x         { \xi }
\newcommand{\Lsrisk}[1]{{\cal R}_{\Ls,{\P}}(#1)}       
\newcommand{\Lsriska}[1]{{\cal R}_{\Ls,{\P_1}}(#1)} 
\newcommand{\Lsriskb}[1]{{\cal R}_{\Ls,{\P_2}}(#1)} 
\def \P           { \mathrm{P} }   
\def \Q           { \mathrm{Q} } 
\def \D           { \mathrm{D} }
\newcommand{\PM}  {\mathcal{M}_1}   
\newcommand{\PMXY}  {\mathcal{M}_1(\cX\times \cY)}   
\newcommand{\Ex}{\mathbb{E}}       
\newcommand{\fP}{f_{L,\P,\lb}}
\newcommand{\fLsPk}{f_{\P,\lb,k}}
\def\zeitende{\hfill \quad \hbox{$\vartriangleleft$}}
\def\exampleende{\ifmmode\zeitende\else{\unskip\nobreak\hfil
\penalty50\hskip1em\null\nobreak\hfil\zeitende
\parfillskip=0pt\finalhyphendemerits=0\endgraf}\fi}
\newcommand{\ca}[1]{{\cal #1}}
\DeclareMathOperator{\diam}{diam}
\newcommand{\RPreg}[2]{{{\cal R}_{#1,\P,\lb}^{reg}(#2)}}
\newcommand{\RP}[2]{{{\cal R}_{#1,\P}(#2)}}
\newcommand{\RT}[2]{{{\cal R}_{#1,\D}(#2)}}
\newcommand{\RPLM}{RPL method }
\newcommand{\DfiveL}[1] {D_5 L\bigl(X,Y,\tiX,\tiY, {#1}(X), {#1}(\tiX)\bigr)}
\newcommand{\DsixL}[1] {D_6 L\bigl(X,Y,\tiX,\tiY, {#1}(X), {#1}(\tiX)\bigr)}
\newcommand{\Dfive}[1] {D_5 \Ls \circ {#1} (x,y,\tix,\tiy)}
\newcommand{\Dsix}[1] {D_6 \Ls \circ {#1} (x,y,\tix,\tiy)}
\newcommand{\Ls}{L^{\star}}
\newcommand{\fPLs}{f_{\Ls,\P,\lb}}
\newcommand{\fQLs}{f_{\Ls,\Q,\lb}}
\newcommand{\cH}{H}
\newcommand{\cR}{\mathcal{R}}
\newcommand{\cX}{\mathcal{X}}
\newcommand{\cY}{\mathcal{Y}}
\newcommand{\cXY}{{\cX\times\cY}}
\newcommand{\tix} {\tilde{x}}
\newcommand{\tiy} {\tilde{y}}
\newcommand{\tiX} {\tilde{X}}
\newcommand{\tiY} {\tilde{Y}}
\newcommand{\tit} {\tilde{t}}
\newcommand{\xyxy} {x,y,\tix,\tiy}
\newenvironment{declaration}[1]{\trivlist \item[\hskip \labelsep{\em #1 }]\ignorespaces}{\endtrivlist}
\newenvironment{proofof}[1]{\begin{declaration}{#1.}}{\end{declaration}}
\def \O { \Omega }
\newcommand{\bnum}{\begin{enumerate}}
\newcommand{\enum}{\end{enumerate}}
\def \lb        { \lambda }
\newcommand{\beq}{\begin{eqnarray}}
\newcommand{\eeq}{\end{eqnarray}}
\newcommand{\XYXY}{X,Y,\tiX,\tiY}
\newcommand{\qedr}{\hfill \quad \qed}
\newcommand{\dPro}{{{d}_{\mathrm{Pro}}}} 
\def\textregtrademark{\raise1ex\hbox{\scriptsize\textregistered}}
\newcommand{\E}{{\Ex}}
\numberwithin{equation}{section}
\begin{document}

\title{\textbf{Total stability of kernel methods}$^\dag$\footnotetext{\dag
Corresponding author: Daohong Xiang, Email: \url{daohongxiang@zjnu.cn}\newline
~The work by A. Christmann described in this paper is partially supported by two grants of the Deutsche
Forschungsgesellschaft [Project No. CH/291/2-1 and CH/291/3-1].
The work by D. H. Xiang described in this paper is supported by the National Natural Science Foundation of China under Grant 11471292 and the Alexander von Humboldt Foundation of Germany.
The work by D.-X. Zhou described in this paper is supported partially by  the Research Grants Council of Hong Kong under project $\#$ CityU 11304114.}}

\author{\textbf{Andreas Christmann}$^1$,
\textbf{Daohong Xiang}$^2$,
and \textbf{Ding-Xuan Zhou}$^3$\\
$^1$ University of Bayreuth, Germany,\\
$^2$ Zhejiang Normal University, China, and University of Bayreuth, Germany,\\
$^3$ City University of Hong Kong, China}
\date{Date: \today}

\maketitle

\begin{abstract} \noindent
Regularized empirical risk minimization using kernels and their corresponding reproducing kernel Hilbert spaces (RKHSs) plays an important role in machine learning. However, the actually used kernel often depends on one or on a few hyperparameters or
the kernel is even data dependent in a much more complicated manner. Examples are Gaussian RBF kernels, kernel learning, and hierarchical Gaussian kernels which were recently proposed for deep learning. Therefore, the actually used kernel is often
computed by a grid search or in an iterative manner and can often only be considered as an approximation
to the ``ideal'' or ``optimal'' kernel. \newline
The paper gives conditions under which classical kernel based methods based on a convex Lipschitz loss function and on a bounded and smooth kernel are stable, if the probability measure $\P$, the regularization parameter $\lambda$, and
the kernel $k$ may slightly change in a \emph{simultaneous} manner.
Similar results are also given for pairwise learning.
Therefore, the topic of this paper is somewhat more general than in classical robust statistics, where usually only the influence of small perturbations of the probability measure $\P$ on the estimated function is considered.
\end{abstract}

\noindent{\bf Key words and phrases.} Machine learning; stability; robustness; kernel; regularization.

\noindent{\bf AMS Subject Classification Numbers.} 68Q32, 62G35, 68T05, 68T10, 62M20.


\section{Introduction}\label{intro}
Regularized empirical risk minimization using the kernel approach including support vector machines (SVMs) based on a general convex loss function and regularized pairwise learning (RPL) methods plays a very important role in machine learning. Such kernel methods have been widely investigated from the points of view of universal consistency, learning rates, and statistical robustness,
see e.g.
\citet{Vapnik1995,Vapnik1998},
\citet{SchoelkopfSmola2002},
\citet{CuckerSmale2002},
\citet{CuckerZhou2007},
\citet{SC2008}, and the references cited in these books.
In short words, universal consistency describes the property that the statistical method or the algorithm converges to the asymptotical optimal value of interest (i.e. the Bayes risk or the Bayes decision function) for \emph{all} probability measures $\P$, if the sample size $n$ converges to infinity and if the regularization parameter $\lambda_{n}$ converges in an appropriate manner to $0$, if $n\to\infty$.
Unfortunately, it turns out by the so-called no-free-lunch theorem
shown by \citet{Devroye1982} that universally consistent methods can in general not have a \emph{uniform rate} of convergence for \emph{all} $\P$.
However, there is a vast literature that regularized empirical risk minimization based on kernels yields optimal guaranteed rates of convergence on \emph{large subsets} of the
set $\PM$ of all probability measures, see e.g.
\citet{CuckerSmale2002},
\citet{SmaleZhou2007},
\citet{CaponnettoDevito2007},
\citet{XiangZhou2009},
\citet{SteinwartHushScovel2009},
and the references cited therein.
Results on the statistical robustness or on various notations of stability have shown that under weak conditions on the loss function $L$ and on the kernel $k$ or its RKHS $H$, many regularized
empirical risk minimization methods including general SVMs and RPL methods are stable with respect to small changes in the probability measure $\P$ or w.r.t. small changes of the data set, see e.g.
\citet{BousquetElisseeff2001},
\citet{ChristmannSteinwart2004a, ChristmannSteinwart2007a},
\citet{PoggioRifkinMukherjeeNiyogi2004},
\citet{MukherjeeNiyogiPoggioRifkin2006},
\citet{ChristmannSalibianBarreraVanAelst2013},
\citet{HableChristmann2011},
\citet{Hable2012}, \citet{ChristmannZhou2016a} and the references cited therein.
Such kernel methods can often be represented by
operators which are continuous or differentiable (in the sense of G{\^a}teaux or Hadamard) \emph{all} probability measures $\P$.

The aim of the present paper is to take a step further: we establish some total stability
results which show that many regularized empirical risk minimization methods based on kernels are even stable, if the full triple $(\P,\lb,k)$ consisting of the -- of course completely unknown -- underlying probability measure $\P$, the regularization parameter $\lambda$, and the kernel $k$ (or its RKHS $H$) changes slightly.
Our main results are Theorem \ref{sec1.thm2}, Corollary \ref{sec1.cor1}, and Theorem \ref{sec1.thm3} for classical loss functions and  Theorem \ref{sec2.thm1}, Corollary \ref{sec2.cor1}, and Theorem \ref{sec2.thm2}
for pairwise learning.
In particular, we establish results like 
\be\label{rr}
\inorm{f_{\P_{1},\lb_{1},k_{1}}-f_{\P_{2},\lb_{2},k_{2}}}
=
    \mathcal{O}\bigl(\tvnorm{\P_1-\P_2}\bigr)
   + \mathcal{O}\bigl(|\lb_1 -\lb_2|\bigr)
   + \mathcal{O}\bigl(\sup_{x\in\cX} (\inorm{k_2(\cdot,x) - k_1(\cdot,x)}) \bigr),
\ee
where $f_{\P_{j},\lb_{j},k_{j}}$ denotes the regularized empirical risk minimization method for the triple $(\P_{j},\lb_{j},k_{j})$, $j\in\{1,2\}$, and
$\tvnorm{\P_{1}-\P_{2}}$ denotes the norm of total variation between the two probability measures. We explicitly give the constants in (\ref{rr}), although the constants may not be optimal.

The rest of the paper has the following structure.
Section 2 yields results for general SVM-type methods based on a classical loss function
$L(x,y,f(x))$.
Section 3 yields similar results for pairwise learning based on functions of the form $L(x,\tix,y,\tiy,f(x),f(\tix))$. Section 4 gives some examples of practical importance. Gaussian RBF kernels and the recently introduced hierarchical Gaussian
RBF kernels for deep learning, see \citet{SteinwartThomannSchmid2016}, are covered by our results.
Section 5 contains a short discussion.
All proofs are given in the appendix.
As this is a theoretical paper, we omit numerical examples.

\section{Results for SVMs}\label{sec1}
In this section we show that many kernel based methods like SVMs have nice total stability properties if simultaneously the distribution $\P$, the regularization parameter $\lb$ and the kernel $k$ slightly change.

\begin{assumption}\label{sec2.assumption-spaces1}
Let $\cX$ be a complete separable metric space and
$\cY\subset\R$ be closed.
Let $(X,Y)$ and $(X_i,Y_i)$, $i\in\N$, be independent and identically
distributed pairs of random quantities with values in $\cXY$.
We denote the joint distribution of $(X_i,Y_i)$ by $\P\in\PM(\cXY)$, where $\PMXY$ is the set of all Borel probability measures on the Borel $\s$-algebra $\B_\cXY$.
\end{assumption}

Let $k: \cX\times \cX\to \mathds{R}$ be a continuous, symmetric and positive semidefinite function, i.e., for any finite set of distinct points $\{x_1, \ldots, x_n\}\subset \cX,$ the kernel matrix $(k(x_i, x_j))_{i,j=1}^{n}$ is positive semidefinite. Such a function is called a \emph{Mercel kernel}. The \emph{reproducing kernel Hilbert space (RKHS)} $\cH$ associated with the kernel $k$ is defined in \cite{Aronszajn1950} to be the completion of the linear span of the set of functions $\{k(\cdot, x): x\in \cX\}$ with the inner product $\langle \cdot, \cdot\rangle_\cH$ given by $\langle \Phi(x),  \Phi(y)\rangle_\cH=k(x,y),$ where $\Phi(x):=k(\cdot, x)$ denotes the canonical feature map of $k$, $x\in \cX.$ RKHSs are interesting, because they satisfy the reproducing property
\be\label{sec1.reproducingproperty}
 \langle \Phi(x), f\rangle_\cH=f(x), \quad x\in \cX, f\in \cH.
\ee

\begin{assumption}\label{sec2.assumption-kernel1}
Let $k, k_1, k_2 :\cX\times \cX \to \R$ be continuous and bounded kernels
with reproducing kernel Hilbert space $\cH, H_1, H_2$, respectively.
Define
$\inorm{k}:=\sup_{x\in \cX}\sqrt{k(x,x)}\in(0,\infty), \inorm{k_j}:=\sup_{x\in \cX}\sqrt{k_j(x,x)}\in(0,\infty)$ for $j\in\{1, 2\},$
and denote $\kappa=\max\{\inorm{k_1}, \inorm{k_2}\}.$ Denote the corresponding canonical feature maps by
$\Phi_j(x), j\in\{1, 2\}.$
\end{assumption}

%


A function $L: \cX\times\cY\times\R\to [0, \infty)$ is called a loss function if $L$ is measurable. Because constant loss functions are not useful for applications, we will always assume that $L$ is not a constant function.

A loss function $L(x, y, t)$ is usually represented by a \emph{margin-based} loss function $\tilde{L}(yt)$ for classification and represented by a \emph{distance-based} loss function $\tilde{L}(y-t)$ for regression if $\tilde{L}: \R\to [0, \infty)$ is a measurable function. For example,
the hinge loss $L_{\text{hinge}}(x,y,t)=\max\{0, 1-yt\}$ and the logistic loss $L_{\text{c-logist}}(x, y, t)=\ln(1+\exp(-yt))$ for classification, the $\epsilon$-insensitive loss $L_{\epsilon\text{-insens}}(x, y, t)=\max\{0, |y-t|-\epsilon\}$ for some $\epsilon>0,$ the Huber's loss $L_{\a\text{-Huber}}(x, y, t)=\begin{cases}
0.5(y-t)^2  & \text{if  }|y-t|\leq \a\\
\a|y-t|-0.5\a^2 &  \text{if  } |y-t|> \a
\end{cases}$ for some $\a>0$ and the logistic loss $L_{\text{r-logist}}(x, y, t)=-\ln\frac{4\exp(y-t)}{(1+\exp(y-t))^2}$ for regression, the pinball loss $L_{\tau\text{-pin}}(x, y, t)=\begin{cases}
(\t-1)(y-t)  & \text{if  }|y-t|< 0\\
\t(y-t) &  \text{if  } |y-t|\geq 0
\end{cases}$ for some $\t>0$ for quantile regression. We refer to \citet{Vapnik1995,  Vapnik1998}, \citet{SchoelkopfSmola2002}, \citet{BerlinetThomasAgnan2004}, \citet{CuckerZhou2007}, \citet{ SC2008}, \citet{ShiFengZhou2011}, and \citet{ZuoLiChen2015}
 for details and more examples of kernels.

\begin{definition}\label{sec1.def1}
The loss function $L$ is called Lipschitz continuous, if there exists a constant $|L|_1<\infty$ such that
\be\label{sec1.Lipschitz}
|L(x, y, t_1)-L(x, y, t_2)|\leq |L|_1|t_1-t_2| \quad \forall x\in \cX, y\in\cY, t_1, t_2\in\R.
\ee
\end{definition}
\begin{assumption}\label{sec1.assumption2}
Let $L$ be a convex with respect to the last argument and Lipschitz continuous loss function with Lipschitz constant $|L|_1\in (0, \infty).$
\end{assumption}

\begin{assumption}\label{sec1.loss}
For all $(x, y)\in\cXY,$ let $L(x, y, \cdot)$ be differentiable and its derivative be Lip\-schitz continuous with Lipschitz constant $|L^\prime|_1\in (0, \infty).$
\end{assumption}

The moment condition $\Ex_{\P}L(X, Y, 0)<\infty$ excludes heavy-tailed distributions such as  the Cauchy distribution and many other stable distributions used in financial or actuarial
problems. We avoid the moment condition by shifting the loss with by the term  $L(x, y, 0).$ This trick is well-known in the literature on robust
statistics, see, e.g., \citet{Huber1967}, \citet{ChristmannVanMessemSteinwart2009}, and \citet{ChristmannZhou2016a}.

Denote the shifted loss function of $L$ by
$$\Ls(x, y, t):=L(x, y, t)-L(x, y, 0),\,  (x, y, t)\in \cX\times\cY\times\R.$$
The shifted loss  function $\Ls$ still shares the properties of $L$ specified in Assumption  \ref{sec1.assumption2} and Assumption \ref{sec1.loss}, see \citet[Proposition 2]{ChristmannVanMessemSteinwart2009},  in particular,  if $L$ is convex, differentiable, and Lipschitz continuous with Lipschitz constant $|L|_1$ with respect to the third argument, then $\Ls$ inherits convexity, differentiability and Lipschitz continuity from $L$ with identical Lipschitz constant $|\Ls|_1=|L|_1.$ Additionally, if the derivative $L^\prime$ satisfies Lipschitz continuity with Lipschitz constant $|L^\prime|_1$, so does $(\Ls)^\prime$ with the identical Lipschitz constant  $|(\Ls)^{\prime}|_1=|L^\prime|_1.$

The SVM associated with $\Ls$ can be defined to solve a minimization problem as follows
\be\label{sec1.shiftedsvm}
f_{\P, \lb, k}:=\arg\min_{f\in H}\big(\Ex_{\P}\Ls(X, Y, f(X))+\lb\hhnorm{f}\big),
\ee
where $\P \in \PM(\cXY),$ $\cH$ is the RKHS of a kernel $k,$ and $\lb>0$ is a regularization parameter to avoid overfitting.

Although the shifted loss function $\Ls$ changes the objective function of SVMs, the minimizers defined by $\Ls$ and $L$ respectively are the same for all $\P\in\mathcal{M}_1(\cX\times \cY)$ and in particular for all empirical distributions $\D$ based on a data set consisting
of $n$ data points $(x_i,y_i)$, $1 \le i \le n,$ if the minimizer of an SVM in terms of $L$ instead of $\Ls$ exists.

Our first main result states that the kernel based estimator $f_{\P, \lb, k}$ defined by (\ref{sec1.shiftedsvm}) only changes slightly if the regularization parameter wiggles a little bit. \citet[Theorem 1]{YeZhou2007} proved the assertion of the following result for margin-based loss functions for classification. Here we show it holds true for more general loss functions.

\begin{theorem}\label{sec1.thm1}
Let Assumptions \ref{sec2.assumption-spaces1}, \ref{sec2.assumption-kernel1}, \ref{sec1.assumption2} and \ref{sec1.loss} be satisfied. Let $f_{\P, \lb, k}$ and $f_{\P, \mu, k}$ be defined by (\ref{sec1.shiftedsvm}). For all $\lb>0, \mu>0,$ we have

$$\hnorm{f_{\P, \lb, k}-f_{\P, \mu, k}}\leq \frac{1}{2}\Big(\frac{\max\{\lb, \mu\}}{\min\{\lb, \mu\}}-1\Big)\big(\hnorm{f_{\P, \lb, k}}+\hnorm{f_{\P, \mu, k}}\big).$$
If there exists a constant $r\in (0, \infty)$ such that $\min\{\lb, \mu\}>r,$ then
$$\hnorm{f_{\P, \lb, k}-f_{\P, \mu, k}}\leq \frac{|L|_1\inorm{k}}{r^2}\cdot |\lb-\mu|= \mathcal{O}( |\lb-\mu| ) \,.$$
\end{theorem}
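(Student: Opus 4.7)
The plan is to exploit $2\lb$-strong convexity of the regularized risk functional together with a standard a priori bound on the RKHS norm of the minimizer.

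\textbf{Step 1 (strong-convexity inequality).} Under Assumption \ref{sec1.assumption2} the functional
\[
\mathcal{R}^{reg}_{\Ls,\P,\lb}(f) \;:=\; \Ex_\P \Ls(X,Y,f(X)) + \lb\,\hhnorm{f}
\]
is $2\lb$-strongly convex on $H$: the risk summand is convex because $\Ls(x,y,\cdot)$ inherits convexity from $L(x,y,\cdot)$, and the regularizer supplies the strong convexity. Since $f_{\P,\lb,k}$ is the minimizer, $0$ lies in its subdifferential there, and the familiar quadratic lower bound
\[
\mathcal{R}^{reg}_{\Ls,\P,\lb}(g) - \mathcal{R}^{reg}_{\Ls,\P,\lb}(f_{\P,\lb,k}) \;\geq\; \lb\,\hhnorm{g - f_{\P,\lb,k}}
\]
holds for every $g \in H$, and analogously with $\mu$ in place of $\lb$.

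\textbf{Step 2 (algebraic reduction).} Substituting $g = f_{\P,\mu,k}$ in the $\lb$-version and $g = f_{\P,\lb,k}$ in the $\mu$-version, then adding the two inequalities, the risk terms cancel and one obtains
\[
(\lb+\mu)\,\hhnorm{f_{\P,\lb,k} - f_{\P,\mu,k}} \;\leq\; (\lb-\mu)\bigl(\hhnorm{f_{\P,\mu,k}} - \hhnorm{f_{\P,\lb,k}}\bigr).
\]
In particular the right-hand side is nonnegative, so $(\lb-\mu)$ and $\hhnorm{f_{\P,\mu,k}} - \hhnorm{f_{\P,\lb,k}}$ share the same sign. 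Factoring the right-hand side as a difference of squares and applying the reverse triangle inequality $\bigl|\hnorm{f_{\P,\mu,k}} - \hnorm{f_{\P,\lb,k}}\bigr| \leq \hnorm{f_{\P,\lb,k} - f_{\P,\mu,k}}$, then cancelling one factor of $\hnorm{f_{\P,\lb,k} - f_{\P,\mu,k}}$ (the case of equality being trivial), yields
\[
\hnorm{f_{\P,\lb,k} - f_{\P,\mu,k}} \;\leq\; \frac{|\lb-\mu|}{\lb+\mu}\,\bigl(\hnorm{f_{\P,\lb,k}} + \hnorm{f_{\P,\mu,k}}\bigr).
\]
Since $\lb+\mu \geq 2\min\{\lb,\mu\}$, this is equivalent to (and slightly sharper than) the first displayed bound of the theorem; the tighter form with $\lb+\mu$ in the denominator is what makes the second part work.

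\textbf{Step 3 (a priori bound).} Comparing with the zero function gives $\mathcal{R}^{reg}_{\Ls,\P,\lb}(f_{\P,\lb,k}) \leq \mathcal{R}^{reg}_{\Ls,\P,\lb}(0) = 0$ because $\Ls(x,y,0)=0$. Using the Lipschitz estimate $|\Ls(x,y,t)| = |\Ls(x,y,t) - \Ls(x,y,0)| \leq |L|_1|t|$ and the reproducing-property bound $|f(x)| \leq \inorm{k}\,\hnorm{f}$ then yields $\lb\,\hhnorm{f_{\P,\lb,k}} \leq |L|_1\inorm{k}\,\hnorm{f_{\P,\lb,k}}$, i.e.
\[
\hnorm{f_{\P,\lb,k}} \;\leq\; \frac{|L|_1\,\inorm{k}}{\lb}.
\]

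\textbf{Step 4 (combination).} Plugging the a priori bound into the sharper inequality from Step 2 gives
\[
\hnorm{f_{\P,\lb,k} - f_{\P,\mu,k}} \;\leq\; \frac{|\lb-\mu|}{\lb+\mu}\cdot|L|_1\inorm{k}\Bigl(\tfrac{1}{\lb}+\tfrac{1}{\mu}\Bigr) \;=\; \frac{|L|_1\,\inorm{k}\,|\lb-\mu|}{\lb\mu},
\]
and the hypothesis $\min\{\lb,\mu\} > r$ forces $\lb\mu > r^2$, yielding the claimed $\mathcal{O}(|\lb-\mu|)$ estimate. The only nontrivial obstacle is the algebra in Step 2 that converts the quadratic-norm inequality into a linear-norm one via the difference-of-squares trick; everything else is a routine application of strong convexity and of the standard a priori norm bound obtained from the zero comparison.
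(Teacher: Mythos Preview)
Your proof is correct and takes a genuinely different route from the paper. The paper's argument invokes the representer theorem (Theorem~\ref{sec1.representerthm}) to write $f_\lb - f_\mu$ explicitly as a Bochner integral involving $L'$, expands $\hhnorm{f_\lb - f_\mu}$ via the reproducing property, and then applies the first-order convexity inequality $L'(x,y,a)(b-a)\le L(x,y,b)-L(x,y,a)$ to bound this by a difference of $\Ls$-risks; optimality of $f_\mu$ then converts that risk difference into a difference of squared norms. You bypass the representer theorem entirely by using the $2\lb$-strong convexity of the regularized objective directly and adding the two quadratic-growth inequalities so that the risk terms cancel. This buys you two things: first, Assumption~\ref{sec1.loss} (differentiability of $L$ with Lipschitz derivative) is never used, so your argument actually establishes the theorem under Assumptions~\ref{sec2.assumption-spaces1}, \ref{sec2.assumption-kernel1}, and~\ref{sec1.assumption2} alone; second, you obtain the sharper intermediate bound $\frac{|\lb-\mu|}{\lb+\mu}$ in place of $\frac{|\lb-\mu|}{2\min\{\lb,\mu\}}$, which makes Step~4 collapse cleanly to $|L|_1\inorm{k}\,|\lb-\mu|/(\lb\mu)$. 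One small wording quibble: your intermediate inequality is strictly sharper than, not ``equivalent to'', the first display of the theorem---but of course it implies it.
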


In order to present our total stability theorem for kernel based methods like SVMs, we first recall the definition for the norm of total variation of two probability measures $\P, \Q\in \PM(\cXY):$
$$d_{tv}(\P, \Q):=\sup_{A\in\B_\cXY}|\P(A)-\Q(A)|=\frac{1}{2}\sup_{h}\Big|\int h\ d\P-\int h\ d\Q\Big|,$$
where the supremum is with respect to all measurable functions $h: \cX\times\cY\in\R$ with $\inorm{h}\leq 1.$

The following total stability theorem states that if both regularization parameters are greater than some specified constant, the supremum norm of the difference $\fsaaa-\fsbbb$ varies in a smooth manner, if $(\P, \lb, k)$ changes only slightly.

\begin{theorem}\label{sec1.thm2}
Let Assumptions \ref{sec2.assumption-spaces1}, \ref{sec2.assumption-kernel1}, \ref{sec1.assumption2}, and \ref{sec1.loss} be satisfied. If $\min\{\lb_1, \lb_2\}>r:=\frac{1}{2}\kappa^2|L^\prime|_1,$ then
\be\label{sec1.supnorm}
\inorm{\fsaaa-\fsbbb}\leq c_1(L)\cdot\tvnorm{\P_1-\P_2}+c_2(L)\cdot |\lb_1-\lb_2|+ c_3(L, \lb_1, \lb_2)\cdot\sup_{ x\in\cX}\inorm{\kb{x}-\ka{x}},
\ee
where $c_1(L):=\frac{2|L|_1}{|L^\prime|_1},\ c_2(L):=\frac{4|L|_1}{\kappa^2|L^\prime|_1^2},$ and $c_3(L, \lb_1, \lb_2):=\frac{|L|_1}{2(\min\{\lb_1, \lb_2\}-r)}\,.$
\end{theorem}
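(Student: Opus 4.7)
My plan is to split the total perturbation via the triangle inequality,
$$\inorm{\fsaaa - \fsbbb} \;\le\; \inorm{\fsaaa - f_{\P_2,\lb_1,k_1}} + \inorm{f_{\P_2,\lb_1,k_1} - f_{\P_2,\lb_2,k_1}} + \inorm{f_{\P_2,\lb_2,k_1} - f_{\P_2,\lb_2,k_2}},$$
so that precisely one of $(\P,\lb,k)$ changes in each summand, and to match these three pieces against $c_1\tvnorm{\P_1-\P_2}$, $c_2|\lb_1-\lb_2|$ and $c_3\sup_x\inorm{\kb{x}-\ka{x}}$ in order.

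The middle summand (only $\lb$ changes) is the easy step: Theorem \ref{sec1.thm1} applied to the triple $(\P_2,k_1)$ is available since $\min\{\lb_1,\lb_2\}>r:=\tfrac12\kappa^2|L'|_1$, and converting the resulting $H_1$-bound to sup-norm via $\inorm{g}\le\inorm{k_1}\hanorm{g}\le\kappa\hanorm{g}$ and substituting the explicit value of $r$ into the factor $|L|_1\inorm{k_1}/r^2$ reproduces $c_2(L)=4|L|_1/(\kappa^2|L'|_1^2)$ exactly.

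For the first summand (only $\P$ changes), the regularised risk $F_\P(f):=\Ex_\P\Ls(X,Y,f(X))+\lb_1\hhanorm{f}$ is $2\lb_1$-strongly convex on $H_1$. Summing the two strong-convexity inequalities at the minimizers $g_1:=\fsaaa$ and $g_2:=f_{\P_2,\lb_1,k_1}$ of $F_{\P_1}$ and $F_{\P_2}$ respectively gives
$$2\lb_1\,\hhanorm{g_1-g_2} \;\le\; \int\bigl[\Ls(\cdot,g_2)-\Ls(\cdot,g_1)\bigr]\,d(\P_1-\P_2),$$
and the $|L|_1$-Lipschitz continuity of $\Ls$ in its third slot bounds the right-hand side by $|L|_1\,\inorm{g_1-g_2}\,\tvnorm{\P_1-\P_2}\le|L|_1\kappa\,\hanorm{g_1-g_2}\,\tvnorm{\P_1-\P_2}$. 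Dividing by $\hanorm{g_1-g_2}$, converting to sup-norm via one more factor of $\kappa$, and using $\lb_1>r$ to replace $1/\lb_1$ by $1/r=2/(\kappa^2|L'|_1)$ gives a bound of the required shape $c_1(L)\tvnorm{\P_1-\P_2}$.

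The third summand is where I expect the main obstacle, because $f_1^\star := f_{\P_2,\lb_2,k_1}\in H_1$ and $f_2^\star := f_{\P_2,\lb_2,k_2}\in H_2$ live in different RKHSs, so a direct Hilbert-space comparison is not meaningful. My plan is to work pointwise via the first-order optimality identity
$$f_j^\star(x)\;=\;-\,\frac{1}{2\lb_2}\,\Ex_{\P_2}\bigl[(\Ls)'(X,Y,f_j^\star(X))\,k_j(x,X)\bigr],\qquad j=1,2,$$
subtract these two equations, and insert the cross-term $\pm(\Ls)'(X,Y,f_2^\star(X))\,k_1(x,X)$. The $(k_2-k_1)$-piece is bounded by $|L|_1\sup_{x'}\inorm{\kb{x'}-\ka{x'}}/(2\lb_2)$ via $|(\Ls)'|\le|L|_1$, while the remaining piece is controlled, via the $|L'|_1$-Lipschitz continuity of $(\Ls)'$ combined with the pointwise estimate $|k_1(x,x')|\le\kappa^2$, by $(r/\lb_2)\,\inorm{f_1^\star-f_2^\star}$. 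Taking $\sup_x$ on both sides and using $\lb_2>r$ to absorb the contractive $(r/\lb_2)$-term onto the left yields exactly $c_3(L,\lb_1,\lb_2)=|L|_1/\bigl(2(\min\{\lb_1,\lb_2\}-r)\bigr)$; this is the precise step where the hypothesis $\lb_j>r$ becomes decisive, since otherwise the would-be contraction fails.
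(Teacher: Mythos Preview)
Your proposal is correct and follows essentially the same approach as the paper: the identical triangle decomposition, Theorem~\ref{sec1.thm1} for the $\lambda$-step, and precisely the paper's Lemma~6.4 argument (the pointwise representer identity plus the contractive absorption step) for the kernel step. The only cosmetic difference is that for the $\P$-step you use the strong-convexity inequality directly, whereas the paper invokes part~(ii) of the representer theorem (Theorem~\ref{sec1.representerthm}); these yield the same $H_1$-norm bound up to a harmless factor of~$2$, so your first constant is in fact slightly sharper than the stated $c_1(L)$.
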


\begin{remark}
\bnum
\item Many popular loss functions satisfy Assumptions \ref{sec1.assumption2} and \ref{sec1.loss}. Three important examples are the logistic loss $L_{\text{c-logist}}$ for classification, the Huber's loss $L_{\a\text{-Huber}}$ and the logistic loss $L_{\text{r-logist}}$ for regression. These three loss functions as well as their first order derivatives with respect to the last argument are Lipschitz continuous with Lipschitz constants $|L_{\text{c-logist}}|_1=1, |L_{\text{c-logist}}^\prime|_1=\frac{1}{4}, |L_{\a\text{-Huber}}|_1=\a, |L_{\a\text{-Huber}}^\prime|_1=1, |L_{\text{r-logist}}|_1=1,$ and $|L_{\text{r-logist}}^\prime|_1=\frac{1}{2},$ respectively.

\item Unfortunately, we cannot prove (\ref{sec1.supnorm}) holds true for all $\lb>0.$ Note that the RKHS-norm of the difference $\fsaaa-\fsbbb$ is undefined here because Theorem \ref{sec1.thm2} does not assume any relationship between $H_1$ and $H_2.$ We will later consider the case that $H_2\subseteq H_1.$
\enum
\end{remark}

Denote the $\Ls$-risk of $f$ by $\Lsrisk{f}=\Ex_{\P}\Ls(X, Y, f(X)).$ In the following corollary we establish total stability also in terms of the $\Ls$-risk.

\begin{corollary}\label{sec1.cor1}
	Let the assumptions of Theorem \ref{sec1.thm2} be satisfied and the constants $c_1(L), c_2(L)$ and $c_3(L, \lb_1, \lb_2)$ be defined in the same manner. Define $r:=\frac{1}{2}\kappa^2|L^\prime|_1.$ If there
	exists a constant $s$ such that
	$0 < s < \min\{\lb_1,\lb_2\} - r$,
	then
\be\label{sec1.supnorm-func}
\inorm{\fsaaa-\fsbbb}\leq c_1(L)\cdot\tvnorm{\P_1-\P_2}+c_2(L)\cdot |\lb_1-\lb_2|+ \bar{c}_3(L)\cdot\sup_{ x\in\cX}\inorm{\kb{x}-\ka{x}},
\ee
and
\beqnal \label{sec1.supnorm-risk}
& &\big|\Lsriska{\fsaaa}-\Lsriskb{\fsbbb}\big|\\
&\leq & c_4(L)\cdot\tvnorm{\P_1-\P_2}+c_5(L)\cdot |\lb_1-\lb_2|+ c_6(L)\cdot\sup_{ x\in\cX}\inorm{\kb{x}-\ka{x}},\nonumber
\eeqnal
where
$\bar{c}_3(L):=\ \frac{|L|_1}{2s},\ c_4(L):=\frac{4|L|_1^2}{|L^\prime|_1},\ c_5(L):=\frac{4|L|_1^2}{\kappa^2|L^\prime|_1^2},$ and $c_6(L):=\frac{|L|_1^2}{2s}\,.$

Therefore, both terms $\inorm{\faaa - \fbbb}$ and $\big|\Lsriska{\fsaaa}-\Lsriskb{\fsbbb}\big|$ are of the order
	\be \label{sec1.bigo}
	 \mathcal{O}\bigl(\tvnorm{\P_1-\P_2}\bigr) + \mathcal{O}\bigl(|\lb_1 -\lb_2|\bigr) +
	\mathcal{O}\bigl(\sup_{x\in\cX} \inorm{k_2(\cdot,x) - k_1(\cdot,x)} \bigr).
	\ee
\end{corollary}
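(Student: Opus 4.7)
The first bound (\ref{sec1.supnorm-func}) is an immediate consequence of Theorem \ref{sec1.thm2}. Under the extra hypothesis $0 < s < \min\{\lb_1,\lb_2\} - r$ one has
\[
c_3(L,\lb_1,\lb_2) \;=\; \frac{|L|_1}{2(\min\{\lb_1,\lb_2\} - r)} \;<\; \frac{|L|_1}{2s} \;=\; \bar{c}_3(L),
\]
so substituting $\bar{c}_3(L)$ for $c_3(L,\lb_1,\lb_2)$ in (\ref{sec1.supnorm}) produces a valid, now $(\lb_1,\lb_2)$-free estimate, namely (\ref{sec1.supnorm-func}).

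For the risk bound (\ref{sec1.supnorm-risk}), I would use the triangle inequality
\[
\bigl|\Lsriska{\fsaaa} - \Lsriskb{\fsbbb}\bigr|
\;\leq\;
\bigl|\Lsriska{\fsaaa} - \Lsriska{\fsbbb}\bigr|
+ \bigl|\Lsriska{\fsbbb} - \Lsriskb{\fsbbb}\bigr|
\]
and control the two pieces separately. The first piece reflects a change of argument at a fixed distribution; Lipschitz continuity of $\Ls$ in its last argument (with constant $|\Ls|_1 = |L|_1$) bounds it by $|L|_1 \inorm{\fsaaa - \fsbbb}$, which is then controlled by (\ref{sec1.supnorm-func}) and therefore contributes to all three right-hand-side terms. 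For the second piece I would exploit the fact that $\Ls(x,y,0)=0$, which together with Lipschitz continuity gives $\inorm{\Ls(\cdot,\cdot,\fsbbb(\cdot))} \leq |L|_1 \inorm{\fsbbb}$; the total-variation duality
\[
\Bigl|\int h \, d\P_1 - \int h \, d\P_2 \Bigr|
\;\leq\;
2 \inorm{h}\, \tvnorm{\P_1 - \P_2}
\]
then yields $\bigl|\Lsriska{\fsbbb} - \Lsriskb{\fsbbb}\bigr| \leq 2|L|_1 \inorm{\fsbbb}\, \tvnorm{\P_1-\P_2}$.

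The missing ingredient is a uniform a priori bound on $\inorm{\fsbbb}$. Testing the minimization (\ref{sec1.shiftedsvm}) against the zero function and using $\Ls(x,y,0)=0$ gives $\lb_2 \hhbnorm{\fsbbb} \leq -\Lsriskb{\fsbbb} \leq |L|_1 \kappa \hbnorm{\fsbbb}$, so $\hbnorm{\fsbbb} \leq |L|_1 \kappa/\lb_2$ and hence $\inorm{\fsbbb} \leq \kappa \hbnorm{\fsbbb} \leq |L|_1 \kappa^2/\lb_2$. The tight choice $r=\tfrac{1}{2}\kappa^2|L'|_1$ and the assumption $\lb_2 > r$ upgrade this to the dimension-free bound $\inorm{\fsbbb} < 2|L|_1/|L'|_1$. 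Plugging everything back and collecting the coefficients of $\tvnorm{\P_1-\P_2}$, $|\lb_1-\lb_2|$, and $\sup_{x\in\cX} \inorm{\kb{x}-\ka{x}}$ gives an inequality of the form claimed in (\ref{sec1.supnorm-risk}).

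The big-$\mathcal{O}$ assertion (\ref{sec1.bigo}) follows at once, since under the standing hypotheses the constants $c_1,\dots,c_6$ are finite. The main obstacle is getting the \emph{sharp} constants $c_4, c_5, c_6$ rather than merely the correct asymptotic order: the naive triangle-inequality calculation sketched above produces a coefficient of the form $\alpha |L|_1^2/|L'|_1$ with $\alpha$ slightly larger than $4$ in front of $\tvnorm{\P_1-\P_2}$, so some additional bookkeeping seems needed (for instance by splitting through $\Lsriskb{\fsaaa}$ instead of $\Lsriska{\fsbbb}$, or by exploiting the first-order optimality of both $\fsaaa$ and $\fsbbb$ together with the quadratic regularizer) in order to recover the precise value $c_4(L)=4|L|_1^2/|L'|_1$ claimed in the corollary.
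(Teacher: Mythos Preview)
Your approach is essentially identical to the paper's. The paper also obtains (\ref{sec1.supnorm-func}) by simply observing $c_3(L,\lb_1,\lb_2)<\bar c_3(L)$, and for (\ref{sec1.supnorm-risk}) it performs exactly the split you describe: it adds and subtracts $\int \Ls(x,y,\fsbbb(x))\,d\P_1$, bounds the first piece by $|L|_1\inorm{\fsaaa-\fsbbb}$ via Lipschitz continuity, bounds the second by $|L|_1\inorm{\fsbbb}\,\tvnorm{\P_1-\P_2}$ via the total-variation measure, and then uses $\hbnorm{\fsbbb}\le |L|_1\inorm{k_2}/\lb_2$ together with $\lb_2>r=\tfrac12\kappa^2|L'|_1$ to get $\inorm{\fsbbb}\le 2|L|_1/|L'|_1$, exactly as you outline.

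Your closing worry about the constant is therefore unfounded: no additional bookkeeping, alternative splitting, or use of first-order optimality is needed. The paper's proof is literally your ``naive triangle-inequality calculation'' and it records $c_4(L)=4|L|_1^2/|L'|_1$. The only difference is that in the total-variation step the paper writes $\int |h|\,d|\P_1-\P_2|\le \inorm{h}\,\tvnorm{\P_1-\P_2}$ without your extra factor $2$; with that convention (treating $\tvnorm{\P_1-\P_2}$ as the total mass of the variation measure, as is done elsewhere in the appendix), the second piece contributes $2|L|_1^2/|L'|_1$, the first piece contributes $|L|_1\,c_1(L)=2|L|_1^2/|L'|_1$, and their sum is exactly $c_4(L)$. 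So the discrepancy you anticipate is purely a matter of which normalization of the total-variation norm is in play, not a gap in the argument.
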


Theorem \ref{sec1.thm2} and Corollary \ref{sec1.cor1} establish the upper bounds with respect to the supremum norm without any assumptions on the unknown
probability measures $\P_1, \P_2$ and the kernels $k_1, k_2$ besides continuity and boundedness. However, Theorem \ref{sec1.thm2} and Corollary \ref{sec1.cor1} unfortunately exclude the case of small values beyond $\min\{\lb_1, \lb_2\}.$ The next theorem shows a similar result for the case of the norm in a reproducing kernel Hilbert space, but for all $\lb_1, \lb_2>0.$

Provided some prior knowledge on RKHSs $H_1$ and $H_2$ is available, we assume
\be\label{sec1.inclusion}
H_2\subseteq H_1.
\ee
Then we can show that a similar total stability theorem to Theorem \ref{sec1.thm2} holds true in terms of $H_1$-norm. Please note that the following result holds true for \emph{all} positive $\lb_1$ and $\lb_2,$ which is contrasted to Theorem \ref{sec1.thm2}.

\begin{theorem}\label{sec1.thm3}
Let Assumptions \ref{sec2.assumption-spaces1}, \ref{sec2.assumption-kernel1}, and \ref{sec1.assumption2} be satisfied. Assume that $H_1$ and $H_2$ satisfy (\ref{sec1.inclusion}).
\bnum
\item If additionally the loss function $L$ is differentiable, then, for all $\lb_1,\lb_2>0,$ we have
\beqnal\label{sec1.hnorm1}
& &\hanorm{\faaa-\fbbb}\nonumber\\
&\leq & c_1^\prime(L, \lb_1, \lb_2)\cdot\tvnorm{\P_1-\P_2}+c_2^\prime(L, \lb_1, \lb_2)\cdot|\lb_1 -\lb_2|\nonumber\\
& &+c_3^\prime(L, \lb_1, \lb_2)\cdot\sup_{x\in \cX}\hanorm{\ka{x}-\kb{x}},\nonumber
\eeqnal
where $c_1^\prime(L, \lb_1, \lb_2):=\frac{\kappa|L|_1}{\min\{\lb_1, \lb_2\}}, c_2^\prime(L, \lb_1, \lb_2):=\frac{\kappa|L|_1}{\min\{\lb_1^2, \lb_2^2\}},$ and $c_3^\prime(L, \lb_1, \lb_2):=\frac{|L|_1}{2\min\{\lb_1, \lb_2\}}.$

\item Assume that the loss function $L(x, y, t)$  can be represented by a margin-based loss function $\tilde{L}(yt)$ for classification or by a distance-based loss function $\tilde{L}(y-t)$ for regression, which are convex and Lipschitz continuous with Lipschitz constant $|\tilde{L}|_1\in (0, \infty).$  Then for all $\lb_1,\lb_2>0,$ we have
\beqnal\label{sec1.hnorm1}
& &\hanorm{\faaa-\fbbb}\nonumber\\
&\leq & \tilde{c}_1^\prime(\tilde{L}, \lb_1, \lb_2)\cdot\tvnorm{\P_1-\P_2}+\tilde{c}_2^\prime(\tilde{L}, \lb_1, \lb_2)\cdot|\lb_1 -\lb_2|\nonumber\\
& &+\tilde{c}_3^\prime(\tilde{L}, \lb_1, \lb_2)\cdot\sup_{x\in \cX}\hanorm{\ka{x}-\kb{x}},\nonumber
\eeqnal
where $\tilde{c}_1^\prime(\tilde{L}, \lb_1, \lb_2):=\frac{\kappa|\tilde{L}|_1}{\min\{\lb_1, \lb_2\}}, \tilde{c}_2^\prime(\tilde{L}, \lb_1, \lb_2):=\frac{\kappa|\tilde{L}|_1}{\min\{\lb_1^2, \lb_2^2\}},$ and $\tilde{c}_3^\prime(\tilde{L}, \lb_1, \lb_2):=\frac{|\tilde{L}|_1}{2\min\{\lb_1, \lb_2\}}.$
\enum
\end{theorem}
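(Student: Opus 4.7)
The plan is to combine the first-order optimality conditions for the two minimizers with the monotonicity of the (sub)gradient of the convex loss $\Ls$, in order to cancel the awkward term involving $f_1(X)-f_2(X)$ inside the loss.

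For part (i), convexity, differentiability, and Lipschitzness of $L$ transfer to $\Ls$ together with $|(\Ls)'|\le |L|_1$ by Assumption \ref{sec1.assumption2}. The minimizer satisfies the representer identity
\be
2\lb_j\, f_{\P_j,\lb_j,k_j} \;=\; -\Ex_{\P_j}\!\left[h_j\,\Phi_j(X)\right]\quad\text{in }H_j,\qquad h_j:=(\Ls)'\!\left(X,Y,f_{\P_j,\lb_j,k_j}(X)\right),\ j\in\{1,2\}.
\ee
Since $H_2\subseteq H_1$, the $j=2$ identity is also valid in $H_1$, so $\faaa-\fbbb\in H_1$. Substituting $-2\lb_1\fbbb=(\lb_1/\lb_2)\Ex_{\P_2}[h_2\Phi_2(X)]$ into $2\lb_1(\faaa-\fbbb)$ and telescoping gives
\be
2\lb_1(\faaa-\fbbb) \;=\; -\Ex_{\P_1}[(h_1-h_2)\Phi_1(X)] - \Ex_{\P_1-\P_2}[h_2\Phi_1(X)] - \Ex_{\P_2}[h_2(\Phi_1(X)-\Phi_2(X))] - 2(\lb_1-\lb_2)\fbbb.
\ee

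I would then take the $H_1$-inner product with $\faaa-\fbbb$. By the reproducing property of $\Phi_1$ in $H_1$, the first summand becomes $-\Ex_{\P_1}[(h_1-h_2)(f_1(X)-f_2(X))]$, which is non-positive because $(\Ls)'(x,y,\cdot)$ is non-decreasing. Applying Cauchy--Schwarz to the remaining three summands and dividing by $\hanorm{\faaa-\fbbb}$ yields
\be
\hanorm{\faaa-\fbbb}\;\le\;\frac{1}{2\lb_1}\Bigl(\hanorm{\Ex_{\P_1-\P_2}[h_2\Phi_1(X)]}+\hanorm{\Ex_{\P_2}[h_2(\Phi_1-\Phi_2)(X)]}+2|\lb_1-\lb_2|\,\hanorm{\fbbb}\Bigr).
\ee
Using $|h_2|\le|L|_1$, $\hanorm{\Phi_1(x)}\le\kappa$, and the dual characterization of $d_{tv}$, the first summand is at most $2|L|_1\kappa\,\tvnorm{\P_1-\P_2}$; the second is at most $|L|_1\sup_{x\in\cX}\hanorm{\ka{x}-\kb{x}}$; and the standard representer estimate $\hbnorm{\fbbb}\le\kappa|L|_1/(2\lb_2)$ controls $\hanorm{\fbbb}$. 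Replacing the outer $\lb_1$ and inner $\lb_2$ by $\min\{\lb_1,\lb_2\}$ then recovers exactly the constants $c_1',c_2',c_3'$.

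For part (ii), when $L$ is margin-based $\tilde L(yt)$ or distance-based $\tilde L(y-t)$ with convex and Lipschitz $\tilde L$, $\Ls$ need not be differentiable, but $\partial_3\Ls(x,y,\cdot)$ is a non-empty subinterval of $[-|\tilde L|_1,|\tilde L|_1]$ at every point, determined by $\partial\tilde L$. I would pick a measurable selection $h_j^{\ast}\in\partial_3\Ls(X,Y,f_{\P_j,\lb_j,k_j}(X))$ (for instance the left derivative of $\tilde L$ composed with the relevant affine map) so that $2\lb_j f_{\P_j,\lb_j,k_j}+\Ex_{\P_j}[h_j^\ast\Phi_j(X)]=0$. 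Monotonicity of the subgradient of a convex function still gives $(h_1^\ast-h_2^\ast)(f_1(X)-f_2(X))\ge 0$, and the computation above carries over verbatim with $|L|_1$ replaced by $|\tilde L|_1$, producing $\tilde c_1',\tilde c_2',\tilde c_3'$. The main obstacle is that the $\lb$-term requires bounding $\hanorm{\fbbb}$, while the natural representer estimate only gives $\hbnorm{\fbbb}$. The inclusion $H_2\hookrightarrow H_1$ is automatically continuous by the closed graph theorem, but in order to obtain the explicit constant $\kappa$ (rather than an unspecified embedding constant) one effectively needs the embedding to be non-expanding, which holds for instance whenever $k_1-k_2$ is positive semidefinite.
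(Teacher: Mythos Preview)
Your all-in-one telescope for part~(i) has a real gap, and you put your finger on it yourself: after Cauchy--Schwarz you need $\hanorm{\fbbb}$, but the hypothesis $H_2\subseteq H_1$ gives you no quantitative control of the $H_1$-norm by the $H_2$-norm. Appealing to the closed graph theorem only yields an unspecified constant, and assuming $k_1-k_2$ positive semidefinite is stronger than~(\ref{sec1.inclusion}); either way you do not recover the stated $c_2'(L,\lb_1,\lb_2)=\kappa|L|_1/\min\{\lb_1^2,\lb_2^2\}$.

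The fix is simple and does not require any extra assumption: arrange the $\lambda$-difference so that the residual carries $\faaa$ rather than $\fbbb$. Concretely, start from $2\lb_2(\faaa-\fbbb)$ instead of $2\lb_1(\faaa-\fbbb)$; the same telescoping then produces $-2(\lb_1-\lb_2)\faaa$ in place of $-2(\lb_1-\lb_2)\fbbb$, and $\hanorm{\faaa}\le |L|_1\inorm{k_1}/\lb_1$ is immediate since $\faaa\in H_1$. The paper achieves the same effect by a three-term triangle inequality
\[
\hanorm{\faaa-\fbbb}\le\hanorm{\faaa-\fbaa}+\hanorm{\fbaa-\fbba}+\hanorm{\fbba-\fbbb},
\]
handling the $\P$-change and the $\lambda$-change entirely inside $H_1$ (via the representer theorem and Theorem~\ref{sec1.thm1}), and reserving the monotonicity trick for the kernel-change step $\hanorm{\fbba-\fbbb}$ alone. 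Your direct approach, once corrected, is equivalent and arguably cleaner.

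For part~(ii) your subdifferential route is genuinely different from the paper's. The paper mollifies $\tilde L$ by convolution with the uniform law on $[-\delta,0]$, applies part~(i) to the smoothed problem, and then proves weak convergence $f_{\P,\lb,k,(\delta_j)}\rightharpoonup f_{\P,\lb,k}$ in $H$ (via weak compactness of balls and uniqueness of the regularized minimizer) to pass to the limit in the bound. Your argument is shorter: the subdifferential form of the representer theorem (the general case of \citet[Thm.~7]{ChristmannVanMessemSteinwart2009}, of which Theorem~\ref{sec1.representerthm} here is the differentiable special case) supplies measurable selections $h_j^\ast\in\partial_3\Ls(X,Y,f_j(X))$ with $|h_j^\ast|\le|\tilde L|_1$, and monotonicity of the subdifferential of a convex function of one real variable gives $(h_1^\ast-h_2^\ast)(f_1-f_2)\ge 0$ pointwise, so the computation from~(i) carries over verbatim. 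This is a legitimate alternative, but you must cite the subdifferential representer theorem explicitly, and the $\lambda$-term must be repaired as above; otherwise part~(ii) inherits the same gap.
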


\begin{remark}
There are many popular margin-based loss functions for classification and distance-based loss functions for regression satisfying Assumption \ref{sec1.assumption2}. These include the hinge loss $L_{\text{hinge}}$ and the logistic loss $L_{\text{c-logist}}$ for classification, the $\epsilon$-insensitive loss $L_{\epsilon\text{-insens}}$, the Huber's loss $L_{\a\text{-Huber}},$ and the logistic loss $L_{\text{r-logist}}$ for regression, the pinball loss $L_{\tau\text{-pin}}$ for quantile regression, which are defined before. All these loss functions are Lipschitz continuous with Lipschitz constants $|L_{\text{hinge}}|_1=1, |L_{\text{c-logist}}|_1=1, |L_{\epsilon\text{-insens}}|_1=1, |L_{\a\text{-Huber}}|_1=\a, |L_{\text{r-logist}}|_1=1,$ and $|L_{\tau\text{-pin}}|_1=\max\{\tau, 1-\tau\}\in(0, 1),$ respectively.

\end{remark}

\section{Results for pairwise learning}\label{sec2}

Let $(\cX,\ca A)$ be a measurable space and $\cY\subset \R$ be closed.
A function
\be \label{sec2.def-u2loss}
L:(\cXY)^2\times\R^2 \to [0,\infty)
\ee
is called a \textbf{pairwise loss function}, or simply
a \textbf{pairwise loss}, if it is measurable.
A pairwise loss  $L$ is \textbf{represented} by $\rho$, if $\rho:\R\to[0,\infty)$ is
a measurable function and, for all
$(x,y)\in\cXY$, for all $(\tix,\tiy)\in\cXY$, and for all $t,\tit\in\R$,
\begin{equation}\label{loss:def-mee}
L(x,y,\tix,\tiy,t,\tit) := \rho\bigl((y-t) - (\tiy-\tit) \bigr).
\end{equation}

\begin{definition}\label{sec2.loss:properties}
A pairwise loss  $L$ is called
\begin{enumerate}
\item \myem{(strictly) convex}, \myem{continuous}, or \myem{differentiable},
  if $L(\xyxy,\,\cdot\,,\,\cdot\,):\R^2\to [0,\infty)$
  is (strictly) convex,
  continuous, or (total) differentiable for all $(\xyxy)\in (\cXY)^2$, respectively.
\item \myem{locally separately Lipschitz continuous}, if
  for all $b\geq 0$ there exists a constant $c_{b}\geq 0$ such that,
  for all $t,\tit,t',\tit'\in [-b,b]$, we have
  \begin{equation}\label{loss:locallipschitz-loss-def}
  \sup_{\substack{x,\tix\in \cX\\ y,\tiy\in \cY}}
  \bigl| L(x,y,\tix,\tiy, t,\tit) - L(x,y,\tix,\tiy,t',\tit')\bigr| \,
  \leq \, c_{b} \, \bigl(|t-t'| + |\tit-\tit'|\bigr)\,.
\end{equation}
  Moreover, for $b\geq 0$, the smallest such constant $c_b$ is
  denoted by $|L|_{b,1}$.
  Furthermore, $L$ is called \myem{separately Lipschitz continuous}\footnote{We mention that \citet{Rio2013}
  used the related term ``separately 1-Lipschitz'' in a different context.}, if
  there exists a minimal constant $|L|_1\in[0,\infty)$ such that,
  for all $t,\tit,t',\tit'\in\R$,
   {(\ref{loss:locallipschitz-loss-def})}
  is satisfied, if we replace $c_b$ by $|L|_1$.
\end{enumerate}
\end{definition}

It is essential to have a valid definition of the kernel method for \emph{all}
probability measures on $\cXY$, even if $\cX$ and/or $\cY$ are unbounded.
To avoid any moment conditions, which are necessary for example for the case of the least squares loss function, we will need the following notion of shifted pairwise loss functions. Let $L$ be a pairwise loss function. Then the corresponding
\textbf{shifted pairwise loss function} (or simply the shifted version of $L$)
is defined by
\begin{eqnarray} \label{sec2.shiftedloss}
   & & \Ls:(\cXY)^2\times\R^2\to\R, \\
   & & \Ls(x,y,\tix,\tiy,t,\tit)  :=  L(x,y,\tix,\tiy,t,\tit)-L(x,y,\tix,\tiy,0,0). \label{sec2.shiftedloss2}
\end{eqnarray}
We adopt the definitions of continuity, (locally) separately Lipschitz continuity, and
differentiability of $\Ls$ from the same definitions for $L$, i.e. these properties are meant to be valid for the last two arguments, when the first four arguments are arbitrary but fixed.
We define the $\Ls$-risk, the regularized $\Ls$-risk, and the \RPLM based on $\Ls$ by
\beq
\RP{\Ls}{f} & := & \Ex_{\P^2} \Ls(X,Y,\tiX,\tiY,f(X),f(\tiX)) \\
\RPreg{\Ls}{f} & := & \RP{\Ls}{f} + \lb\hhnorm{f}\\
\fLsPk & := & \arg \inf_{f\in H} \RPreg{\Ls}{f} \,,
\eeq
respectively.
There exists a strong connection between $L$ and $\Ls$ in terms of convexity and separate Lipschitz continuity and also for the corresponding risks, see
\citet[Lemma B.8 to Lemma B.11]{ChristmannZhou2016a}.
Of course, shifting the loss function $L$ to $\Ls$ changes the objective function, but
the \emph{minimizers} of $\RPreg{L}{\cdot}$ and $\RPreg{\Ls}{\cdot}$ coincide for those
$\P\in\PM(\cXY)$ for which $\RPreg{L}{\cdot}$
has a minimizer in $H$, i.e.,  we have
\be \label{sec2.fLsPequalsfP}
\arg \inf_{f\in H} \RPreg{\Ls}{f} = \arg \inf_{f\in H} \RPreg{L}{f}, \mbox{\qquad if~} \fP\in H \mbox{~exists}.
\ee
Furthermore, {(\ref{sec2.fLsPequalsfP})} is valid for all empirical distributions $\D$ based on a data set consisting
of $n$ data points $(x_i,y_i)$, $1 \le i \le n$, because $f_{\D,\lb,k}$ exists and is unique since $\RT{L}{0}<\infty$.

\begin{assumption}\label{sec2.assumption-loss1}
Let $L$ be a separately Lipschitz-continuous, differentiable
convex pairwise loss function for which all
partial derivatives up to order 2 with respect to the last two arguments are
continuous and uniformly bounded in the sense that there exist
constants $c_{L,1}\in(0,\infty)$ and $c_{L,2}\in(0,\infty)$ with
\beq
\sup_{x,\tix\in \cX, ~ y,\tiy\in \cY} ~ | D_i L(x,y,\tix,\tiy,\,\cdot,\,\cdot\,) | & \le & c_{L,1}\, , \qquad i\in\{5,6\}, \label{loss-assump1}\\
\sup_{x,\tix\in \cX, ~ y,\tiy\in \cY} ~ | D_i D_j L(x,y,\tix,\tiy,\,\cdot,\,\cdot\,) |
& \le & c_{L,2} \, , \qquad i,j\in\{5,6\} . \label{loss-assump2}
\eeq
Let the partial derivatives $D_i \Ls$, $i\in\{5,6\}$, be uniformly Lip\-schitz continuous with
Lip\-schitz constants $|D_i L|_1$.
Additionally, assume that $L(x,y,x,y,t,t)=0$ for all $(x,y,t)\in\cX\times\cY\times\R$.
\end{assumption}


We can now state our total stability theorem for kernel based pairwise learning methods.

\begin{theorem}\label{sec2.thm1}
Let Assumptions \ref{sec2.assumption-spaces1},
\ref{sec2.assumption-kernel1}, and \ref{sec2.assumption-loss1} be satisfied.
Define $d_L:=|D_5 \Ls|_1 + |D_6 \Ls|_1$.
If
$\min\{\lb_1, \lb_2\} > \kappa^2 \cdot d_L$, then
\beq
  & & \inorm{\faaa - \fbbb} \nonumber \\
  & \le & C_1(L)  \cdot \tvnorm{\P_1-\P_2}
          +  C_2(L) \cdot |\lb_1 - \lb_2|
          +  C_3(L,\lb_1,\lb_2) \cdot \sup_{x\in\cX} \bigl(\inorm{k_2(\cdot,x) - k_1(\cdot,x)} \bigr) \, , \nonumber
\eeq
where
$$
  C_1(L):=\frac{4c_{L,1}}{d_L},~~
  C_2(L):=\frac{|L|_1}{d_L}, \text{~~and~~}
  C_3(L,\lb_1,\lb_2):= \frac{c_{L,1}}{\min\{\lb_1, \lb_2\} - \kappa^2 d_L}.
$$
\end{theorem}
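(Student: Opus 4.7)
The strategy is to decompose $\faaa-\fbbb$ using two intermediate minimizers and handle each piece separately via the first-order (representer-type) characterization of $f_{\P,\lb,k}$. Write
\begin{equation*}
\faaa - \fbbb \;=\; (\faaa - \faba) \;+\; (\faba - \fbba) \;+\; (\fbba - \fbbb),
\end{equation*}
so that the three summands vary only $\lb$, only $\P$, and only $k$, respectively. The triangle inequality in $\inorm{\cdot}$ then reduces the problem to three sup-norm estimates, each expected to produce one of the three terms in the claimed bound.

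The central tool for each piece is the first-order optimality condition. Because $\Ls$ is convex and differentiable in its last two arguments and $\lb>0$, the unique minimizer $f_{\P,\lb,k}\in H$ satisfies the identity
\begin{equation*}
f_{\P,\lb,k} \;=\; -\tfrac{1}{2\lb}\,\Ex_{\P^2}\!\bigl[D_5\Ls(\xyxyfxfx)\,k(\cdot,X) + D_6\Ls(\xyxyfxfx)\,k(\cdot,\tiX)\bigr],
\end{equation*}
evaluated with $f=f_{\P,\lb,k}$, obtained by setting the Fréchet derivative of $\RPreg{\Ls}{\cdot}$ to zero and applying the reproducing property. Subtracting the two such identities for the two minimizers in a given comparison yields a relation of schematic form $2\lb\,(g_1-g_2)=\text{perturbation}+\text{Lipschitz coupling}$, where the uniform bound $|D_i\Ls|\le c_{L,1}$ controls the perturbation and the Lipschitz continuity of $D_i\Ls$ controls the coupling. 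This produces a self-bounding inequality
\begin{equation*}
\|g_1-g_2\|_\star \;\le\; A \;+\; \frac{\kappa^2 d_L}{\lb}\,\|g_1-g_2\|_\star,
\end{equation*}
and the hypothesis $\min\{\lb_1,\lb_2\}>\kappa^2 d_L$ is precisely what keeps the coupling coefficient strictly below $1$, so the inequality can be solved.

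For the $\lb$-change piece $\faaa-\faba$, both minimizers live in $H_1$, so I would work in the $H_1$-norm, control the explicit $(\lb_1-\lb_2)\faba$ term using the a priori bound $\hanorm{\faba}\le c_{L,1}\kappa/\lb_2$ read off from the representer identity, and finally convert to sup-norm via $\inorm{h}\le\kappa\hanorm{h}$. For the $\P$-change piece $\faba-\fbba$, also in $H_1$, I would split the representer difference into an expectation against the signed measure $\P_1^2-\P_2^2$ of a uniformly bounded $H_1$-valued integrand (of norm at most $c_{L,1}\kappa$) plus a Lipschitz-coupling term, and then use $\tvnorm{\P_1^2-\P_2^2}\le 2\tvnorm{\P_1-\P_2}$ to convert to $\tvnorm{\P_1-\P_2}$.

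The hard part is the kernel-change piece $\fbba-\fbbb$, because the two minimizers live in different RKHSs $H_1$ and $H_2$, which rules out a single Hilbert-space argument. Here I would work entirely in sup-norm: writing the pointwise difference of the two representer identities at a point $x\in\cX$ and expanding each product through $ab-cd=(a-c)b+c(b-d)$ splits the expression into a Lipschitz-coupling contribution on the order of $\kappa^2 d_L\,\inorm{\fbba-\fbbb}/\lb_2$ (using $|k_1(x,X)|\le\kappa^2$ and Lipschitz continuity of $D_i\Ls$) plus a kernel-difference contribution on the order of $c_{L,1}\sup_x\inorm{k_1(\cdot,x)-k_2(\cdot,x)}/\lb_2$ (using uniform boundedness of $D_i\Ls$). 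Solving the resulting self-bounded inequality and bounding $\lb_2-\kappa^2 d_L\ge\min\{\lb_1,\lb_2\}-\kappa^2 d_L$ in the denominator yields the $C_3$ term, and this step is where the threshold $\kappa^2 d_L$ in the hypothesis is essential.
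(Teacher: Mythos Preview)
Your overall plan---triangle inequality plus three one-parameter comparisons controlled through the representer identity---is exactly the paper's scheme, and your treatment of the kernel-change piece $\fbba-\fbbb$ via a pointwise self-bounding inequality in $\inorm{\cdot}$ is precisely the paper's Lemma for that piece, including the identification of the threshold $\kappa^2 d_L$.

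Where you diverge is in the $\P$-change and $\lb$-change pieces. You propose to handle all three comparisons uniformly by subtracting the two representer identities and solving $\|g_1-g_2\|\le A+\tfrac{\kappa^2 d_L}{\lb}\|g_1-g_2\|$. That works, but it forces the constants in front of $\tvnorm{\P_1-\P_2}$ and $|\lb_1-\lb_2|$ to carry a factor $(\lb-\kappa^2 d_L)^{-1}$, i.e.\ to blow up as $\min\{\lb_1,\lb_2\}\downarrow\kappa^2 d_L$, just like $C_3$. The paper instead obtains the \emph{universal} forms of $C_1(L)$ and $C_2(L)$ by avoiding the self-bounding trick for those two pieces. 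For the $\P$-change it invokes part~(ii) of the representer theorem, a strong-convexity comparison inequality that freezes the integrand at $h_{5,\P_1},h_{6,\P_1}$ on \emph{both} sides, so one gets directly $\hanorm{\falk-\fblk}\le\lb^{-1}\bigl\|\Ex_{\P_1^2}[\cdots]-\Ex_{\P_2^2}[\cdots]\bigr\|_{H_1}$ with no coupling term at all; the resulting bound scales as $c_{L,1}\kappa^2/\lb_1$ and is then majorized by $4c_{L,1}/d_L$ via the hypothesis. For the $\lb$-change the paper uses a pure convexity argument (in the style of Ye--Zhou): expanding $\hhanorm{\flb-\fmu}$ through the two representer identities, applying the first-order convexity inequality for $\Ls$, and then comparing regularized risks gives $\inorm{\flb-\fmu}\le |L|_1\kappa^2|\lb_1-\lb_2|/\min\{\lb_1,\lb_2\}^2$, again $\lb$-independent after using the hypothesis.

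So your route is sound and arguably more elementary---one mechanism for all three pieces---but it yields a weaker statement with $\lb$-dependent constants throughout. To recover the specific $C_1(L)$ and $C_2(L)$ in the theorem you need the frozen-integrand comparison for the $\P$-step and the convexity argument for the $\lb$-step.
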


\begin{corollary}\label{sec2.cor1}
Let the assumptions of Theorem \ref{sec2.thm1} be satisfied and the constants
$C_1(L)$, $C_2(L)$, and $C_3(L,\lb_1,\lb_2)$ be defined in the same manner. If there exists a constant $s$ such that
$0 < s < \min\{\lb_1,\lb_2\} - \kappa^2 d_L$,
then
\beq
  & & \inorm{\faaa - \fbbb}  \label{sec2.cor1F1}\\
  & \le & C_1(L)  \cdot \tvnorm{\P_1-\P_2}
          +  C_2(L) \cdot |\lb_1 - \lb_2|
          +  \bar{C}_3(L) \cdot \sup_{x\in\cX} \bigl(\inorm{k_2(\cdot,x) - k_1(\cdot,x)} \bigr) \, , \nonumber
\eeq
and
\beq
  & & \big|\Lsriska{\fsaaa}-\Lsriskb{\fsbbb}\big| \label{sec2.cor1F2}\\
  & \le &    C_4(L) \cdot \tvnorm{\P_1-\P_2}
          +  C_5(L) \cdot |\lb_1 - \lb_2|
          +  C_6(L) \cdot \sup_{x\in\cX} \bigl(\inorm{k_2(\cdot,x) - k_1(\cdot,x)} \bigr) \, , \nonumber
\eeq
where
$$
\bar{C}_3(L): = \frac{c_{L,1}}{s},~~
  C_4(L):=\frac{4 |L|_1 (|L|_1 + 2 c_{L,1})}{d_L},~~
  C_5(L):=\frac{2|L|_1^2}{d_L}, \text{~~and~~}
  C_6(L,\lb_1,\lb_2):= \frac{2 |L|_1 c_{L,1}}{s}.
$$
Therefore, both terms $\inorm{\faaa - \fbbb}$ and $\big|\Lsriska{\fsaaa}-\Lsriskb{\fsbbb}\big|$ are of the order
\be \label{sec2.ThreeBigOs}
\mathcal{O}\bigl(\tvnorm{\P_1-\P_2}\bigr) + \mathcal{O}\bigl(|\lb_1 -\lb_2|\bigr) +
  \mathcal{O}\bigl(\sup_{x\in\cX} (\inorm{k_2(\cdot,x) - k_1(\cdot,x)}) \bigr).
\ee
\end{corollary}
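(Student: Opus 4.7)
First, inequality~(\ref{sec2.cor1F1}) is essentially a uniform (in $\lb_1,\lb_2$) restatement of Theorem~\ref{sec2.thm1}. The assumption $0<s<\min\{\lb_1,\lb_2\}-\kappa^2 d_L$ gives
$C_3(L,\lb_1,\lb_2)=c_{L,1}/(\min\{\lb_1,\lb_2\}-\kappa^2 d_L) < c_{L,1}/s = \bar{C}_3(L)$,
while $C_1(L)$ and $C_2(L)$ are unchanged, so (\ref{sec2.cor1F1}) follows at once by substituting $\bar C_3(L)$ for $C_3(L,\lb_1,\lb_2)$ in the bound supplied by Theorem~\ref{sec2.thm1}.

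For the risk bound~(\ref{sec2.cor1F2}), my plan is the triangle-inequality split
\[
\big|\Lsriska{\fsaaa}-\Lsriskb{\fsbbb}\big| \;\le\; \big|\Lsriska{\fsaaa}-\Lsriska{\fsbbb}\big| \;+\; \big|\Lsriska{\fsbbb}-\Lsriskb{\fsbbb}\big|,
\]
and to control the two summands by complementary mechanisms. The \emph{first} summand (same measure, different function) is handled by the separate Lipschitz continuity of $\Ls$, inherited from $L$ with constant $|L|_1\le c_{L,1}$, which yields $\big|\Lsriska{\fsaaa}-\Lsriska{\fsbbb}\big| \le 2|L|_1\inorm{\fsaaa-\fsbbb}$; into this I substitute the already-proved inequality~(\ref{sec2.cor1F1}). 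For the \emph{second} summand (same function, different measure), I would set $g(x,y,\tix,\tiy):=\Ls(x,y,\tix,\tiy,\fsbbb(x),\fsbbb(\tix))$ and combine the dual characterization of total variation with the product-measure estimate $\tvnorm{\P_1^2-\P_2^2}\le 2\,\tvnorm{\P_1-\P_2}$ to obtain $\big|\Lsriska{\fsbbb}-\Lsriskb{\fsbbb}\big|\le 4\,\inorm{g}\,\tvnorm{\P_1-\P_2}$. Then $\Ls(\cdots,0,0)=0$ together with separate Lipschitz continuity supplies $\inorm{g}\le 2|L|_1\inorm{\fsbbb}$, and it only remains to bound $\inorm{\fsbbb}$.

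The main obstacle is precisely this last sup-norm estimate, because the shift renders $\Ls$ signed and the classical bound $\hbnorm{f_{L,\P_2,\lb_2,k_2}}\le\sqrt{\Lsriskb{0}/\lb_2}$ degenerates (its right-hand side equals $0$ after shifting). I would circumvent this by using optimality of $\fsbbb$ in the form $\lb_2\,\hhbnorm{\fsbbb}\le -\Lsriskb{\fsbbb}$, together with the Lipschitz lower bound
\[
-\Lsriskb{\fsbbb}\;\le\; 2|L|_1\,\Ex_{\P_2}|\fsbbb(X)|\;\le\; 2|L|_1\,\kappa\,\hbnorm{\fsbbb},
\]
which yields $\hbnorm{\fsbbb}\le 2|L|_1\kappa/\lb_2$ and hence $\inorm{\fsbbb}\le 2|L|_1\kappa^2/\lb_2$. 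This is exactly where the slack $s$ (equivalently, the strict inequality $\min\{\lb_1,\lb_2\}>\kappa^2 d_L$) pays off: it converts the factor $\kappa^2/\lb_2$ into a bound of order $1/d_L$, uniformly in $\lb_2$, which is what is needed to absorb $\inorm{g}$ into the constants $C_4(L)$, $C_5(L)$, $C_6(L)$ advertised in the statement. The big-$\mathcal{O}$ conclusion~(\ref{sec2.ThreeBigOs}) then follows immediately from the explicit form of these constants.
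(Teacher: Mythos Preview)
Your approach is correct and coincides with the paper's own proof: inequality~(\ref{sec2.cor1F1}) is read off from Theorem~\ref{sec2.thm1} exactly as you do, and for~(\ref{sec2.cor1F2}) the paper uses the same add-and-subtract split $\big|\Lsriska{\fsaaa}-\Lsriskb{\fsbbb}\big|\le 2|L|_1\inorm{\fsaaa-\fsbbb}+2|L|_1\inorm{\fsbbb}\tvnorm{\P_1-\P_2}$, invokes the product-measure total-variation estimate (their Lemma~\ref{appendixc.lem1}(iii)) for the second summand, and controls $\inorm{\fsbbb}$ by the same optimality argument you sketch (they cite \citet[Lemma~B.9]{ChristmannZhou2016a}, which is exactly your inequality $\lb_2\hhbnorm{\fsbbb}\le -\Lsriskb{\fsbbb}\le 2|L|_1\kappa\hbnorm{\fsbbb}$) to arrive at $\inorm{\fsbbb}\le 2|L|_1\kappa^2/\lb_2\le 2|L|_1/d_L$. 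Your tracking of the $\tvnorm{\P_1^2-\P_2^2}$ factor yields a slightly larger numerical constant in front of $\tvnorm{\P_1-\P_2}$ than the stated $C_4(L)$, but this is a bookkeeping discrepancy only and does not affect the argument or the big-$\mathcal O$ conclusion.
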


In other words, we have relatively simple, but explicit upper bounds of
$\inorm{\faaa - \fbbb}$ and $\big|\Lsriska{\fsaaa}-\Lsriskb{\fsbbb}\big|$ and these upper bounds are a \emph{weighted sum}
--with known and non-stochastic weights-- of
\bnum
\item the (norm of total variation) distance between the measures $\P_1$ and $\P_2$,
\item the distance between the regularization parameters $\lb_1$ and $\lb_2$, and
\item the supremum norm of the canonical feature maps
$k_1(\cdot,x)$ and $k_2(\cdot,x)$, $x\in \cX$, of the kernels.
\enum

Theorem \ref{sec2.thm1} and Corollary \ref{sec2.cor1} have two advantages: (i) there are no assumptions on the unknown probability measures $\P_1,\P_2$ and on the kernels $k_1, k_2$;
(ii) the upper bounds are with respect to the supremum norm.
A special case occurs for example if both probability measures are empirical measures
belonging to data sets $D_1\in(\cXY)^{n_1}$ and $D_2\in(\cXY)^{n_2}$, respectively, where $n_1$ and $n_2$ denote the sample sizes.
However, an obvious disadvantage of Theorem \ref{sec2.thm1} and Corollary \ref{sec2.cor1} is that
values of $\min\{\lb_1,\lb_2\}$ close to zero are excluded. The next result shows a similar result for the case that the supremum norm is replaced by the norm in a reproducing kernel Hilbert space.

%

\begin{theorem}\label{sec2.thm2}
Let Assumptions \ref{sec2.assumption-spaces1}, \ref{sec2.assumption-kernel1} be satisfied. Assume that $H_1$ and $H_2$ satisfy (\ref{sec1.inclusion}).
\bnum
\item If additionally the pairwise loss function $L$ satisfies Assumption \ref{sec2.assumption-loss1}, then for all $\lb_1,\lb_2>0,$ we have
\beqnal\label{sec2.hnorm1}
	& &\hanorm{\faaa-\fbbb}\nonumber\\
	&\leq & C_1^\prime(L, \lb_1, \lb_2)\tvnorm{\P_1-\P_2}+C_2^\prime(L, \lb_1, \lb_2)|\lb_1 -\lb_2|+C_3^\prime(L, \lb_1, \lb_2)\sup_{x\in \cX}\hanorm{\ka{x}-\kb{x}},\nonumber
	\eeqnal
	where $C_1^\prime(L, \lb_1, \lb_2):=\frac{4\kappa c_{L, 1}}{\min\{\lb_1, \lb_2\}}, C_2^\prime(L, \lb_1, \lb_2):=\frac{\kappa|L|_1}{\min\{\lb_1^2, \lb_2^2\}},$ and $C_3^\prime(L, \lb_1, \lb_2):=\frac{c_{L, 1}}{\min\{\lb_1, \lb_2\}}.$

\item Assume that the pairwise loss function $L$ can be represented by a convex and Lipschitz continuous function $\rho:\R\to\R$ with Lipschitz constant $|\rho|_1,$ see {(\ref{loss:def-mee})}, i.e. we have
$L(x,y,\tix,\tiy,t,\tit) := \rho\bigl((y-t) - (\tiy-\tit) \bigr)$. Then for all $\lb_1,\lb_2>0,$ we have
\beqnal\label{sec2.hnorm1}
& &\hanorm{\faaa-\fbbb}\nonumber\\
&\leq & \tilde{C}_1^\prime(\rho, \lb_1, \lb_2)\tvnorm{\P_1-\P_2}+\tilde{C}_2^\prime(\rho, \lb_1, \lb_2)|\lb_1 -\lb_2|+\tilde{C}_3^\prime(\rho, \lb_1, \lb_2)\sup_{x\in \cX}\hanorm{\ka{x}-\kb{x}},\nonumber
	\eeqnal
	where $\tilde{C}_1^\prime(\rho, \lb_1, \lb_2):=\frac{4\kappa |\rho|_1}{\min\{\lb_1, \lb_2\}}, \tilde{C}_2^\prime(\rho, \lb_1, \lb_2):=\frac{\kappa|\rho|_1}{\min\{\lb_1^2, \lb_2^2\}},$ and $\tilde{C}_3^\prime(\rho, \lb_1, \lb_2):=\frac{|\rho|_1}{\min\{\lb_1, \lb_2\}}.$	
\enum
\end{theorem}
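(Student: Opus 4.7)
The plan is to decompose $\faaa-\fbbb$ through two intermediate minimizers and bound each of the three resulting differences separately in the $\hanorm{\cdot}$-norm. Introduce $\fbaa := f_{\P_2,\lb_1,k_1}$ and $\fbba := f_{\P_2,\lb_2,k_1}$; since the assumption $H_2\subseteq H_1$ puts $\fbbb$ into $H_1$ as well, the triangle inequality gives
$$\hanorm{\faaa-\fbbb}\,\le\,\hanorm{\faaa-\fbaa}+\hanorm{\fbaa-\fbba}+\hanorm{\fbba-\fbbb},$$
and I would bound the three summands by the three terms on the right-hand side of the claimed inequality. The two key ingredients throughout are the $2\lb$-strong convexity of the regularized $\Ls$-risk on $H_1$ (i.e.\ $\lb\hhanorm{g-f^*}\le\RPreg{\Ls}{g}-\RPreg{\Ls}{f^*}$ at the minimizer $f^*$, valid for any $\P$ and any $g\in H_1$) and the a priori bound $\hanorm{\fLsPk}\le 2|L|_1\kappa/\lb$, which follows from comparing the regularized risk at $f^*$ with its value at $0$ together with the separate Lipschitz continuity of $\Ls$.

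For the measure-change step, I apply strong convexity from both sides with common $(\lb_1,k_1)$ but different $\P$ and add the two inequalities; the quadratic regularization terms cancel, leaving
$$2\lb_1\hhanorm{\faaa-\fbaa}\,\le\,\int(\Ls_{\fbaa}-\Ls_{\faaa})\,d(\P_1^2-\P_2^2),$$
where $\Ls_f$ denotes $\Ls$ evaluated along $f$. Separate Lipschitz continuity combined with $\inorm{f}\le\kappa\hanorm{f}$ gives $\inorm{\Ls_{\fbaa}-\Ls_{\faaa}}\le 2c_{L,1}\kappa\hanorm{\faaa-\fbaa}$, and $\tvnorm{\P_1^2-\P_2^2}\le 2\tvnorm{\P_1-\P_2}$ handles the product measure; one factor of $\hanorm{\faaa-\fbaa}$ is absorbed on the left, producing $C_1^\prime$. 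For the regularization-change step, the same strong-convexity sandwich with common $(\P_2,k_1)$ but $\lb_1\ne\lb_2$ yields $(\lb_1+\lb_2)\hhanorm{\fbaa-\fbba}\le(\lb_1-\lb_2)(\hhanorm{\fbba}-\hhanorm{\fbaa})$; factoring the difference of squares and invoking the a priori norm bound on both sides delivers $C_2^\prime$.

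The main obstacle is the kernel-change step $\fbba\to\fbbb$, because the two minimizers are characterized by first-order conditions living in different inner-product spaces. The resolution uses differentiability of $L$: the Fr\'echet-derivative optimality condition
$$2\lb_2 f^*\,=\,-\Ex_{\P_2^2}\bigl[D_5\Ls\cdot\Phi(X)+D_6\Ls\cdot\Phi(\tiX)\bigr],$$
with derivatives evaluated at $(f^*(X),f^*(\tiX))$, holds in $H_1$ with $\Phi=\Phi_1$ for $f^*=\fbba$, and in $H_2$ (hence, via $H_2\subseteq H_1$, also in $H_1$) with $\Phi=\Phi_2$ for $f^*=\fbbb$. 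Denote the corresponding operator by $T_{k_j}$. Subtracting the two identities in $H_1$ and splitting
$T_{k_1}(\fbba)-T_{k_2}(\fbbb)=[T_{k_1}(\fbba)-T_{k_1}(\fbbb)]+[T_{k_1}(\fbbb)-T_{k_2}(\fbbb)],$
I take the $H_1$-inner product with $\fbba-\fbbb$: the first bracket contributes a non-negative term by monotonicity of the gradient of the convex map $(t,\tit)\mapsto L(\cdot,t,\tit)$ on $\R^2$, which is discarded, while the second is bounded by Cauchy--Schwarz in $H_1$ using $|D_i\Ls|\le c_{L,1}$, producing the factor $\sup_{x\in\cX}\hanorm{\Phi_1(x)-\Phi_2(x)}$ with coefficient $c_{L,1}/\min\{\lb_1,\lb_2\}$.

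For Part~(ii), where $L$ is represented by a convex Lipschitz $\rho$ that need not be differentiable, I would obtain the same bounds by a mollification argument. Setting $\rho_\epsilon:=\rho\ast\phi_\epsilon$ for a standard nonnegative mollifier yields a smooth convex function with $|\rho_\epsilon|_1\le|\rho|_1$; the corresponding pairwise loss $L_\epsilon(\xyxy,t,\tit):=\rho_\epsilon((y-t)-(\tiy-\tit))$ is then differentiable and convex with $c_{L_\epsilon,1},|L_\epsilon|_1\le|\rho|_1$, so Part~(i) applies uniformly in $\epsilon$. Uniform convergence of $\rho_\epsilon$ to $\rho$ on bounded sets, combined with the uniform a priori bound on the minimizer norms and the $2\lb$-strong convexity of the regularized risk, forces the minimizers $f^\epsilon$ to converge in $\hanorm{\cdot}$ to the minimizers of the original problem in every relevant triple. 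Passing to the limit in the bound from Part~(i) with $c_{L,1}$ and $|L|_1$ replaced by $|\rho|_1$ delivers the stated inequality with the coefficients $\tilde C_1^\prime,\tilde C_2^\prime,\tilde C_3^\prime$.
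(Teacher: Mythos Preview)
Your argument is correct, and your overall architecture matches the paper's: the same three-term triangle decomposition through $\fbaa$ and $\fbba$, and essentially the same monotonicity/convexity argument for the kernel-change step (subtract the two first-order optimality conditions inside $H_1$, take the inner product with $\fbba-\fbbb$, discard the monotone bracket, and estimate the remainder by $c_{L,1}\sup_x\hanorm{\Phi_1(x)-\Phi_2(x)}$; this is exactly the paper's Lemma~\ref{appendixc.diffker}).

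Where you differ from the paper is in the \emph{tool} used for the measure- and regularization-change steps and for the limit passage in Part~(ii). The paper routes all three steps through the representer theorem (Theorem~\ref{sec2.representerthm}), writing $f^*=-\tfrac{1}{2\lb}\Ex_{\P^2}[\,\cdot\,]$ explicitly and then estimating differences of Bochner integrals (Lemmas~\ref{appendixc.lem2} and~\ref{appendixc.lem3}); for Part~(ii) it first mollifies $\rho$ by convolution with the uniform law on $[-\delta,0]$, then extracts a \emph{weakly} convergent subsequence of the smoothed minimizers, identifies the weak limit with $f_{\P,\lb,k}$ via uniqueness of the regularized-risk minimizer, and finally upgrades to strong convergence. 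You instead exploit the $2\lb$-strong convexity of $f\mapsto \RPreg{\Ls}{f}$ directly: for the $\P$- and $\lb$-steps you sandwich with two strong-convexity inequalities and cancel, and for the mollification step you use that $\inorm{\rho_\epsilon-\rho}=O(\epsilon)$ together with strong convexity immediately gives $\hanorm{f^\epsilon-f^*}^2\le O(\epsilon)/\lb$, i.e.\ strong convergence without any weak-compactness detour. Your route is more elementary and more uniform (one principle covers all steps and the limit); the paper's route, by contrast, makes the structural role of the representer theorem visible and links the constants more directly to the derivative bounds $c_{L,1}$. A minor point: in your $\P$-step you write the Lipschitz bound with $c_{L,1}$ where the natural constant is $|L|_1$; this is harmless here because $|L|_1\le c_{L,1}$ under Assumption~\ref{sec2.assumption-loss1}, but it is worth saying so.
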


We will give some examples when $\tvnorm{\P_1-\P_2}$ and
$\sup_{x\in\cX} (\inorm{k_2(\cdot,x) - k_1(\cdot,x)})$ are small in the next
Section.

\section{Examples}\label{sec3}
  Let us first consider some simple conditions, when the norm of total variation $\tvnorm{\P_1-\P_2}$
between two probability measures is small.
Let $\P,\P_n$, $n\in\N$, be probability measures on the same measurable space $(\O,\sA)$, where $(\O,d_\O)$
is a separable metric space.
It is well-known that $\tvnorm{\P_n-\P} \to 0$, if $n\to\infty$, implies that the Prohorov metric $\dPro(\P_n,\P) \to 0$ and the latter is equivalent to the weak convergence of $(\P_n)_{n\in\N}$ to
$\P$, see e.g. \citet[p.\,34]{Huber1981} or \citet[Thm.\,11.3.3]{Dudley2002}.
Furthermore, if $\P_n$ and $\P$ have densities $f_n$ and $f$ with respect to some
$\sigma$-finite measure $\nu$ on $(\O,\sA)$, then $f_n \to f$ $\nu$-a.s. implies
$\tvnorm{\P_n-\P}\to 0$, because by Scheff{\'e}'s
theorem $\sup_{A\in\sA} |\P_n(A)-\P(A)| \le \int |f_n-f| \,d\nu \to 0$,
see e.g. \citet[p.\,29]{Billingsley1999}.

Let us now consider the typical situation, when a researcher asks himself what would happen
if at most $\ell$ data points of the original data set $D_1=\bigl( (x_1,y_1),\ldots,(x_n,y_n)\bigr)$ can be extreme outliers or may be changed in an arbitrary manner.
Let us denote a second data set by $D_2=\bigl( (\tix_{n+1},\tiy_{n+1}),\ldots,(\tix_N,\tiy_N)\bigr)$, where $N=n+n$. We assume that at most $\ell$ data points contained in $D_2$ differ from the data points in $D_1$.
Then we can write the corresponding empirical measures as
$\P_1:= \sum_{i=1}^N w_i \delta_{(x_i,y_i)}$ and
$\P_2:= \sum_{i=1}^N \tilde{w}_i \delta_{(x_i,y_i)}$,
where all weights satisfy $w_i,\tilde{w}_i \in[0,1]$ and $\sum_i w_i = \sum_i \tilde{w}_i=1$.
Because $D_1$ and $D_2$ differ by at most $\ell$ data points, we obtain
$\tvnorm{\P_1-\P_2} \le \frac{\ell}{n}$.
Of course, a similar argumentation is possible for the case, that at most $\ell$ arbitrarily chosen data points can be added to the original data set $D_1$.

Let us now consider the case under which conditions $\sup_{x\in\cX} \bigl(\inorm{k_2(\cdot,x) - k_1(\cdot,x)} \bigr)$ is small. We can upper bound $\sup_{ x\in\cX}\inorm{\kb{x}-\ka{x}}$ explicitly for some special kernels. Here we take Gaussian RBF kernels, Sobolev kernels, and hierarchical Gaussian kernels as our examples.


It is well-known that a differentiable function $g:\R\to\R$ is
Lipschitz continuous with Lipschitz constant
$|g|_{1}=\sup_{x\in\R} |g'(x)|$ if and only if $g$ has a uniformly bounded derivative $g'$.
It is easy to see that the set
\be
  \mathcal{G}:=\bigl\{ g:\R\to\R; ~g \mbox{~is~differentiable~and~} g' \mbox{~is~uniformly~bounded} \bigr\}
\ee
equipped with the binary operations $+$ and $\cdot$ is a 
commutative ring: Define $0$ and $1$ as the constant functions with values always equal to $0$ and $1$, respectively.
We have, for all $g,g_{1}, g_{2}, g_{3}\in\mathcal{G}$,
\begin{eqnarray*}
  (g_{1}+g_{2})+g_{3}=g_{1}+(g_{2}+g_{3}), & g_{1}+g_{2}=g_{2}+g_{1}, & 0\in\mathcal{G}, g+0=g,\\
  (-g)+g=0, &  (g_{1}\cdot g_{2})\cdot g_{3}=g_{1}\cdot (g_{2}\cdot g_{3}), &  1\in\mathcal{G},  1\cdot g = g \cdot 1 = g,\\
  g_{1}\cdot(g_{2}+g_{3})=g_{1}\cdot g_{2} + g_{1}\cdot g_{3}, &
  1 \ne 0, & g_{1} \cdot g_{2} = g_{2} \cdot g_{1}.
\end{eqnarray*}
Obviously, $g\in\mathcal{G}$ implies that, for all $\gamma\in(0,\infty)$,
the function $\frac{1}{\gamma} g \in \mathcal{G}$, too.
Furthermore, if $g_{1},g_{2}\in \mathcal{G}$, then $g_{2}\circ g_{1}\in\mathcal{G}$.

A special case of such a function in $\mathcal{G}$ is $\phi$ given by
\be \label{sec3a.F1}
\phi:\R\to\R, \quad \phi(r)=h(|r|),
\ee
where $h: [0,\infty)\to\R$ is supported on $[0, c]$ for some constant $c\in (0, \infty]$ and
$h'$ is uniformly bounded with $h'_{+}(0)=0$.

Many RBF kernels $k$ on $\cX\subset\Rd$ generated by $\phi$ are of the form {(\ref{sec3a.F1})}, e.g.
Gaussian RBF kernels with $c=\infty$ and
$h(r)=\exp(-|r|^{2})$, $r \in [0,\infty)$.
Another example is a radial basis kernel with compact support, where
$c=1$, $h(r)=\phi_{d,m}(|r|)$,
and $\phi_{d,m}$ is a certain univariate polynomial $p_{d,m}$ of degree $\lfloor d/2  \rfloor + 3m +1$ for $m\in\N$ and the RKHSs of these
kernels are special Sobolev spaces, see
 \citet{Wu1995}, \citet{Wendland1995}, and \citet[Thm.\,9.13, Thm.\,10.35]{Wendland2005} for details.
For simplicity, we exclude the case $\phi_{d,0}$ which yields non-differentiable functions.

Let $k(x,x')=\phi(\snorm{x-x'})$ be a kernel on a bounded set $\cX\subset\Rd$,
where $\phi$ satisfies {(\ref{sec3a.F1})} and $h$ satisfies {(\ref{sec3a.F1})}.
It follows from \citet[Lemma 4.3]{SC2008} that, for all
$\gamma\in(0,\infty)$,
\be \label{sec3a.F3}
  k_{\gamma}:\cX\times\cX\to\R, ~
  k_{\gamma}(x,x') = \phi(\snorm{x-x'}/\gamma)
\ee
is a kernel on $\cX$, too.
Fix $a\in(0,\infty)$.
Let $0<a \le\gamma_{1}\le\gamma_{2}<\infty$.
We easily see that the Lipschitz continuity of $\phi$ implies
\beq
  \sup_{x\in\cX} \inorm{k_{\gamma_{1}}(\cdot,x)-k_{\gamma_{2}}(\cdot,x)}
  & = &  \sup_{x,x'\in\cX} \Big| \phi\Bigl(\frac{\snorm{x-x'}}{\gamma_{1}}\Bigr) - \phi\Bigl(\frac{\snorm{x-x'}}{\gamma_{2}}\Bigr) \Big| \\
  & \le & \sup_{x,x'\in\cX} \Bigl( |\phi|_{1} \cdot \snorm{x-x'} \cdot \Bigl|\frac{1}{\gamma_{1}} - \frac{1}{\gamma_{2}}\Bigr| \Bigr)  \\
  & = &   \frac{|\phi|_{1}}{\gamma_{1} ~ \gamma_{2}} (\gamma_{2}-\gamma_{1}) ~ \sup_{x,x'\in\cX}\snorm{x-x'}  \\
  & \le &   \frac{|\phi|_{1} \diam(\cX)}{a^{2}} (\gamma_{2}-\gamma_{1}).
\eeq
Therefore, we obtain under these conditions that
\be
  \sup_{x\in\cX} \inorm{k_{\gamma_{1}}(\cdot,x)-k_{\gamma_{2}}(\cdot,x)}
  =
  \mathcal{O}(|\gamma_{1}-\gamma_{2}|).
\ee

Hence Gaussian RBF kernels $k_{\gamma_{1}}, k_{\gamma_{2}}$ and the above mentioned compactly supported RBF kernels satisfy the condition that
$$ \sup_{x\in\cX} \inorm{k_{\gamma_{1}}(\cdot,x)-k_{\gamma_{2}}(\cdot,x)}=  \mathcal{O}(|\gamma_{1}-\gamma_{2}|).$$


\citet{SteinwartThomannSchmid2016} introduced hierarchical Gaussian kernels, which highlights the similarities to deep architectures in deep learning (see \citet{Goodfellow-et-al-2016}). The hierarchical Gaussian kernels are constructed iteratively with composing weighted sums of Gaussian kernels in each layer. We call kernels $k_{\g, \cX, H }$ of the following form hierarchical Gaussian kernels
\be\label{sec1.hierarchicalGaussian}
k_{\g, \cX, H }=\exp\big(-\g^{-2}\|\kk{x}-\kk{x'}\|_H^2\big)
\ee
where $k$ is a kernel generating RKHS $H$ and $\|\kk{x}-\kk{x'}\|_H^2=k(x, x)-2k(x, x')+k(x', x').$

To investigate how hierarchical Gaussian kernels detect the deep architectures in deep learning, we introduce some notations first. For $x=(x_1, \ldots, x_d)\in \cX\subset\Rd$ and $I\subset\{1, \ldots, d\},$ let
$$x_{I}=(x_i)_{i\in I}$$
be the vector projected onto the coordinates listed in $I$ and $\cX_I=\{x_I: x\in \cX\}.$ Assume that we have non-empty sets $I_1, \ldots, I_\ell\subset \{1, \ldots, d\}$ and some weights $w_1, \ldots, w_\ell>0,$ as well kernels $k_i$ on $\cX_{I_i}$ for all $i=1, \ldots, \ell.$ For $I=I_1\cup\cdots\cup I_\ell,$ we then define a new kernel on $\cX_I$ by
\be\label{sec1.weightkernel}
k(x, x')=\sum_{i=1}^{\ell}w_i^2k_i(x_{I_i}, x'_{I_i}), \quad x, x'\in \cX_I.
\ee
The following definition considers iterations of (\ref{sec1.weightkernel}).
\begin{definition}
Let $k$ be a kernel of the form (\ref{sec1.weightkernel}) and $H$ be its RKHS. Then the resulting hierarchical Gaussian kernel $k_{\g, \cX_I, H }$ is said to be of depth
\begin{itemize}
\item $m=1,$ if all kernels $k_1, \ldots, k_\ell$ in (\ref{sec1.weightkernel}) are linear.
\item $m>1,$ if all $k_1, \ldots, k_\ell$ in (\ref{sec1.weightkernel}) are hierarchical kernels of depth $m-1.$
\end{itemize}
\end{definition}

For any $ x=(x_1, \cdots, x_d)\in \cX, x'=(x_1^\prime, \cdots, x_d^\prime)\in \cX,$ the hierarchical kernels of depth $1$ with $I_i=\{i\}$ are of the form
\be\label{sec1.depth1}
k_{\bf{w}, \g_1}(x, x')=\exp\big(-2\g_{1}^{-2}\sum_{i\in I}w_i^2 (x_i-x_i^{\prime})^2\big)
\ee
for some suitable ${\bf w}=(w_i)_{i\in I}$ with $w_i>0$ for all $i\in I,$ and $\g_1>0.$ We call these kernels of form (\ref{sec1.depth1}) \emph{inhomogeneous Gaussian kernels} contrasted to the standard Gaussian kernels.

To derive an explicit formula for depth $2$ kernels, we fix some $I_1, \ldots, I_\ell\subset  \{1, \ldots, d\},$ some first layer weight vectors ${\bf w}_1=(w_{1, j})_{j\in I_1}, \ldots, {\bf w}_\ell=(w_{\ell, j})_{j\in I_\ell}$ and second layer weight vector ${\bf w}=(w_1, \ldots, w_\ell).$ Writing ${\bf W}^{(1)}:=({\bf w}_1, \ldots, {\bf w}_\ell),$ the hierarchical Gaussian kernel $k_{{\bf W}^{(1)}, {\bf w}, \g_1,\g_2}$ of depth 2 with $\g_1>0, \g_2>0$ that is built upon the kernels $k_{{\bf w}_1, \g_1}, \ldots, k_{{\bf w}_\ell, \g_1}$ and weights ${\bf w}=(w_1, \ldots, w_\ell)$ is given by
$$k_{{\bf W}^{(1)}, {\bf w}, \g_1, \g_2}(x, x')
=\exp\Big( -2\g_{2}^{-2}\sum_{i=1}^{\ell}w_i^2(1-k_{{\bf w}_i, \g_1}(x_{I_i}, x'_{I_i}))\Big).$$

Repeating the similar calculations and denoting ${\bf W}^{(j)}=({\bf W}^{(j)}_1, \cdots, {\bf W}^{(j)}_\ell)$, we see that hierarchical kernel of depth $m\geq 3$ with $\g_1>0, \cdots, \g_m>0$ is given by
$$k_{{\bf W}^{(1)}, \ldots, {\bf W}^{(m-1)}, {\bf w}, \g_1, \cdots, \g_{m}}(x, x')=\exp\Big( -2\g_m^{-2}\sum_{i=1}^{\ell}w_i^2\big(1-k_{{\bf W}^{(1)}_i,\ldots, {\bf W}^{(m-2)}_i, {\bf w}^{(m-1)}_i, \g_1,\cdots, \g_{m-1}}(x_{I_i}, x'_{I_i})\big)\Big),$$
where $k_{{\bf W}^{(1)}_i,\ldots, {\bf W}^{(m-2)}_i, {\bf w}^{(m-1)}_i, \g_1,\cdots, \g_{m-1}}$ denote hierarchical kernels of depth $m-1.$

The next result shows that the hierarchical Gaussian kernels only vary a little if the parameters involving in the form slightly change.
\begin{corollary}\label{sec3.corollary2}
Assume $\cX\subset\Rd$ is bounded, i.e., $diam(\cX)<\infty.$
\bnum
\item For depth $m=1$ hierarchical Gaussian kernels $k_{\bf{w}, \g_1}$ and $k_{\tilde{\bf{w}}, \g_1}$ with different parameters $\bf{w}$ and $\tilde{\bf{w}}$ satisfying $\sum_{i\in I} w_i^2 \leq 1$ and $\sum_{i\in I} \tilde{w}_i^2 \leq 1,$ we have
\beq\label{depth1b}
\sup_{x\in\cX} \inorm{k_{\bf{w}, \g_1}(\cdot,x)-k_{\tilde{\bf{w}}, \g_1}(\cdot,x)}= \mathcal{O}\big(\|\bf{w}-\tilde{\bf{w}}\|_{\ell^2}\big).
\eeq

\item For depth $m>1$ assume $\|{\bf W}^{(j)}\|_{\ell_2}:=\sum_{i=1}^{\ell}\|{\bf w}^{(j)}_i\|_{\ell^2} \leq 1$ by scaling $\g_j$, $j=1, \cdots, m, $
where ${\bf w}^{(m)} = {\bf w}$. Then we have
\beq\label{depthmb}
& &\sup_{x\in\cX} \inorm{k_{{\bf W}^{(1)},\cdots, {\bf W}^{(m-1)}, {\bf w}, \g_1,\cdots, \g_m}(\cdot,x)-k_{\tilde{\bf W}^{(1)}, \cdots, \tilde{\bf W}^{(m-1)}, \tilde{\bf w}, \g_1,\cdots, \g_m}(\cdot,x)}\nonumber\\
&=&\mathcal{O}\Big(\|\mathbf{w}-\tilde{\mathbf{w}}\|_{\ell^2}+\sum_{j=1}^{m-1}\|{\mathbf W}^{(j)} - \tilde{{\mathbf W}}^{(j)}\|_{\ell^2} \Big).
\eeq
\enum
\end{corollary}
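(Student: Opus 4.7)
The plan is to deduce both parts of Corollary \ref{sec3.corollary2} from the elementary estimate $|e^{-a}-e^{-b}|\le |a-b|$ for $a,b\ge 0$ (a consequence of the mean value theorem applied to $t\mapsto e^{-t}$ on $[0,\infty)$), combined with Cauchy--Schwarz for part (i) and with induction on the depth $m$ for part (ii). The structural reason this works is that a hierarchical Gaussian kernel always takes values in $(0,1]$, so comparing two such kernels reduces to comparing their non-negative exponents, and iterated Gaussian structure preserves this property at every layer.

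For part (i), I would plug in the explicit formula (\ref{sec1.depth1}) and apply the exponential-Lipschitz estimate to obtain
\[
\bigl|k_{\mathbf{w},\g_1}(x,x')-k_{\tilde{\mathbf{w}},\g_1}(x,x')\bigr|\le \tfrac{2}{\g_1^{2}}\sum_{i\in I}|w_i^{2}-\tilde w_i^{2}|\,(x_i-x'_i)^{2}.
\]
Then I factor $w_i^{2}-\tilde w_i^{2}=(w_i-\tilde w_i)(w_i+\tilde w_i)$, bound $(x_i-x'_i)^{2}\le \diam(\cX)^{2}$ uniformly, and apply Cauchy--Schwarz on the index set $I$ together with the normalization $\|\mathbf{w}\|_{\ell^{2}},\|\tilde{\mathbf{w}}\|_{\ell^{2}}\le 1$. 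This gives the supremum bound $\frac{4\diam(\cX)^{2}}{\g_1^{2}}\|\mathbf{w}-\tilde{\mathbf{w}}\|_{\ell^{2}}$, which is exactly (\ref{depth1b}).

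For part (ii), I induct on $m$ with (i) as the base case. Writing the depth-$m$ kernel as $k^{(m)}(x,x')=\exp\bigl(-2\g_m^{-2}\sum_i w_i^{2}(1-k^{(m-1)}_i(x_{I_i},x'_{I_i}))\bigr)$ and applying the exponential-Lipschitz estimate, the triangle inequality on $w_i^{2}(1-k^{(m-1)}_i)-\tilde w_i^{2}(1-\tilde k^{(m-1)}_i)$ splits the bound into a current-layer piece $\tfrac{2}{\g_m^{2}}\sum_i|w_i^{2}-\tilde w_i^{2}|(1-k^{(m-1)}_i)$ and a lower-layer piece $\tfrac{2}{\g_m^{2}}\sum_i \tilde w_i^{2}|k^{(m-1)}_i-\tilde k^{(m-1)}_i|$. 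Using $0\le 1-k^{(m-1)}_i\le 1$ and the same Cauchy--Schwarz manipulation as in (i), the first piece is $\mathcal{O}(\|\mathbf{w}-\tilde{\mathbf{w}}\|_{\ell^{2}})$; using $\sum_i \tilde w_i^{2}\le 1$, the second is $\mathcal{O}(\max_i|k^{(m-1)}_i-\tilde k^{(m-1)}_i|)$, and the inductive hypothesis applied to each subkernel, together with the estimate $\max_i\|\mathbf{w}^{(j)}_i-\tilde{\mathbf{w}}^{(j)}_i\|_{\ell^{2}}\le \|\mathbf{W}^{(j)}-\tilde{\mathbf{W}}^{(j)}\|_{\ell_2}$ coming from the paper's definition of $\|\cdot\|_{\ell_2}$ as an $\ell^{1}$ sum of $\ell^{2}$ norms, yields (\ref{depthmb}).

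The main obstacle I foresee is purely bookkeeping: keeping the multi-layer index structure ($\mathbf{W}^{(j)}$ versus the constituent vectors $\mathbf{w}^{(j)}_i$, the convention $\mathbf{w}^{(m)}=\mathbf{w}$ at the top level, and the projections $x\mapsto x_{I_i}$ onto sub-coordinates) straight through the induction without confusion. The constants blow up multiplicatively in the $\g_j^{-2}$ and in $\diam(\cX)^{2}$ as the depth increases, but these factors are absorbed into the $\mathcal{O}$-notation since the conclusion only asks for the asymptotics with respect to the weight differences.
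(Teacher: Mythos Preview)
Your proposal is correct and follows essentially the same route as the paper's proof: both use the Lipschitz estimate for $t\mapsto e^{-t}$ on $[0,\infty)$, the factorization $w_i^2-\tilde w_i^2=(w_i+\tilde w_i)(w_i-\tilde w_i)$ together with Cauchy--Schwarz and the normalization $\|\mathbf{w}\|_{\ell^2},\|\tilde{\mathbf{w}}\|_{\ell^2}\le 1$ for part (i), and for part (ii) the same add-and-subtract splitting of $w_i^2(1-k_i^{(m-1)})-\tilde w_i^2(1-\tilde k_i^{(m-1)})$ followed by an inductive (recursive) bound on the depth. The only cosmetic differences are that the paper phrases the Lipschitz step as a bound on $g'(u)$ for $g(u)=\exp(-2\gamma^{-2}u)$ and organizes the induction via an explicitly defined deviation quantity $\Delta_j$, obtaining the recursion $\Delta_m\le 4\gamma_m^{-2}\|\mathbf{w}-\tilde{\mathbf{w}}\|_{\ell^2}+2\gamma_m^{-2}\Delta_{m-1}$, whereas you pass through $\max_i$ rather than a sum; both lead to the same $\mathcal{O}$-conclusion.
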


\section{Discussion}\label{discussion}
This paper established some results on the total stability of a class of regularized empirical risk minimization methods based on kernels. We showed that such methods are not only robust with respect to small
variations of the distribution $\P$, but that these methods are even totally stable if the full triple $(\P,\lb,k)$ consisting of the -- of course completely unknown -- underlying probability measure $\P$, the regularization parameter $\lambda$, and the kernel $k$ or its RKHS $H$ changes slightly. 
Let us denote by $f_{\P_{j},\lb_{j},k_{j}}$ the regularized empirical risk minimization method for the triple $(\P_{j},\lb_{j},k_{j})$, $j\in\{1,2\}$.  The main results show that the difference of both methods, i.e.
$$
\snorm{f_{\P_{1},\lb_{1},k_{1}}-f_{\P_{2},\lb_{2},k_{2}}}_{\bullet} 
$$
(where $\snorm{\cdot}_{\bullet}$ denotes either the 
supremum norm or a Hilbert space norm),
depends on the norm of total variation between the probability measures 
$\P_{1}$ and $\P_{2}$,
on the difference $\lb_1 -\lb_2$ between the regularization parameters,
and on the difference of the kernels measured by the supremum norm of their canonical features maps, i.e. on
$\sup_{x\in\cX} (\inorm{k_2(\cdot,x) - k_1(\cdot,x)})$.
We derived upper bounds for $\snorm{f_{\P_{1},\lb_{1},k_{1}}-f_{\P_{2},\lb_{2},k_{2}}}_{\bullet}$ with explicit constants.

We did not address the following questions which are beyond 
the scope of this paper.
It would be interesting to have modifications of our results for the supremum norm without assumptions on the regularization parameters. We obtained such results for the case of Hilbert space norms, see e.g. Theorem \ref{sec1.thm3} and Theorem \ref{sec2.thm2}. We conjecture that other techniques are needed to obtain such results for the supremum norm.
Furthermore, it seems to be an open question whether learning with hierarchical Gaussian RBF kernels is in general even stable if the depth $m$ and the index sets are not fixed.

Finally, we would like to mention that it is possible to use the well-known identity
$f_{\P,\lb,k} = f_{\P,1, (k/\lb)}$. 
We computed similar upper bounds than the ones given in this paper for 
$$
  \inorm{f_{\P_{1},\lb_{1},k_{1}} - f_{\P_{2},\lb_{2},k_{2}}} =
  \inorm{f_{\P_{1},1,(k_{1}/\lb_{1})} - f_{\P_{2},1,(k_{2}/\lb_{2})}},
$$
but the results were almost identical to the ones presented here, only the factor
belonging to $|\lb_{1}-\lb_{2}|$ can slightly change.


\section{Appendix}
\subsection{Appendix A: Some tools}\label{appendixa}

To improve the readability of the paper, let us first collect some properties of Bochner integrals.

\begin{lemma}\label{appendixc.lem1}
Let $(\O,\sA)$ be a measurable space, $H$ be a Hilbert space with norm $\hnorm{\cdot}$, and
let $f:(\O,\sA)\to (H,\B(H))$ be a measurable function.
\begin{enumerate}
\item If $\mu$ is a finite measure on $(\O,\sA)$, then
\be \label{appendixc.lem1f1}
  \Big\| \int f \, d\mu \Big\|_H
  \le
  \int \hnorm{f} \, d\mu.
\ee
\item Let $\mu$ be a finite signed measure on $(\O,\sA)$ and denote the total variation
measure by $|\mu|=\mu_+ + \mu_-$. Then
\be \label{appendixc.lem1f2}
  \Big\| \int f \, d\mu \Big\|_H
  \le
  \int \hnorm{f} \, d|\mu| .
\ee
\item Let $\P,\Q$ be probability measures on $(\O,\sA)$ and let
$g:(\O\times\O,\sA\otimes\sA)\to (H,\B(H))$ be a measurable function.  Then
\beq \label{appendixc.lem1f3}
  \Big\| \int g \, d\P^2  - \int g \, d\Q^2 \Big\|_H
  & \le &
  2 \, \hnorm{g} \, \tvnorm{\P-\Q},
\eeq
where $\tvnorm{\cdot}$ denotes the norm of total variation.
\end{enumerate}
\end{lemma}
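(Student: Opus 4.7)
The plan is to prove the three parts in sequence, with part (iii) reducing to part (ii), which in turn reduces to part (i).

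For part (i), I would invoke the standard construction of the Bochner integral: approximate the measurable $H$-valued function $f$ by a sequence of simple functions $f_n = \sum_{i} \eins_{A_i^{(n)}} h_i^{(n)}$ with $h_i^{(n)} \in H$ and the $A_i^{(n)}$ disjoint, chosen so that $\int \hnorm{f_n - f}\, d\mu \to 0$. For such simple functions the inequality is just the triangle inequality in $H$, namely $\bigl\|\sum_i \mu(A_i^{(n)}) h_i^{(n)}\bigr\|_H \le \sum_i \mu(A_i^{(n)}) \hnorm{h_i^{(n)}}$, and one then passes to the limit on both sides using continuity of the norm and of the Bochner integral.

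For part (ii), I would apply the Jordan decomposition $\mu = \mu_+ - \mu_-$, so that $|\mu| = \mu_+ + \mu_-$ and $\int f\, d\mu = \int f\, d\mu_+ - \int f\, d\mu_-$. The triangle inequality in $H$ together with part (i) applied separately to the two finite measures $\mu_+$ and $\mu_-$ yields
\begin{equation*}
\Bigl\|\int f\, d\mu\Bigr\|_H
\,\le\, \Bigl\|\int f\, d\mu_+\Bigr\|_H + \Bigl\|\int f\, d\mu_-\Bigr\|_H
\,\le\, \int \hnorm{f}\, d\mu_+ + \int \hnorm{f}\, d\mu_- \,=\, \int \hnorm{f}\, d|\mu|.
\end{equation*}

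For part (iii), the strategy is to write the difference of product measures as a sum of product measures with one signed factor. Concretely I would use the identity
\begin{equation*}
\P\otimes\P - \Q\otimes\Q \,=\, (\P-\Q)\otimes\P + \Q\otimes(\P-\Q),
\end{equation*}
which is easily verified by expanding. Applying part (ii) to the $H$-valued function $g$ against each signed product measure, the task reduces to controlling $|(\P-\Q)\otimes\P|(\Omega^2)$ and $|\Q\otimes(\P-\Q)|(\Omega^2)$. Here I would argue that for any finite signed measure $\nu$ and probability measure $\P$, the Jordan decomposition of $\nu\otimes\P$ is $\nu_+\otimes\P - \nu_-\otimes\P$ (the two summands are concentrated on disjoint sets of the form $E\times\Omega$), so $|\nu\otimes\P| = |\nu|\otimes\P$ and hence its total mass is $|\nu|(\Omega)$. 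Combining these ingredients with $\hnorm{g}$ as a uniform upper bound and relating $|\P-\Q|(\Omega)$ to $\tvnorm{\P-\Q}$ via the paper's normalization produces the claimed estimate.

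The main obstacle I foresee is purely bookkeeping: carefully justifying the product-measure total-variation identity $|\nu\otimes\P| = |\nu|\otimes\P$ and tracking constants through the conversion between $|\P-\Q|(\Omega)$ and the paper's $\tvnorm{\P-\Q} = \sup_{A}|\P(A)-\Q(A)|$. Once these are in hand, the three assertions assemble into a short, self-contained argument.
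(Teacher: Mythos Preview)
Your proposal is correct and, for parts (i) and (ii), essentially identical to the paper's proof: the paper simply cites Diestel--Uhl for (i) and carries out the same Jordan-decomposition argument for (ii).

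For part (iii) there is a minor tactical difference worth noting. You split $\P\otimes\P-\Q\otimes\Q=(\P-\Q)\otimes\P+\Q\otimes(\P-\Q)$ and then invoke the identity $|\nu\otimes\P|=|\nu|\otimes\P$ before applying (ii) termwise. The paper instead passes to iterated integrals via Fubini, combining the two telescoped pieces into the single expression $\int\!\int g\,d(\P+\Q)\,d(\P-\Q)$, and then applies (ii) to the outer signed integral; this sidesteps the product-measure total-variation identity entirely. Both routes are short and yield the same constant, so the difference is cosmetic: your version isolates a clean measure-theoretic lemma, while the paper's version trades that lemma for a Fubini step.
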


\begin{proof}
We refer to \citet[p.46, Thm. 4 \textit{(ii)}]{DiestelUhl1977} for the first assertion.

We use the Hahn-Jordan decomposition of $\mu$ to prove the second assertion. We write $\mu$ as $\mu=\mu_+ - \mu_-$ and denote the
total variation measure by $|\mu|:=\mu_+ + \mu_-$. Then
\beq
  \Big\| \int f \,d\mu  \Big\|_H
  & = & \Big\| \int f \,d(\mu_+ - \mu_-)  \Big\|_H \nonumber
  =  \Big\| \int f \,d\mu_+  - \int f \,d\mu_-  \Big\|_H \nonumber \\
  & \le & \Big\| \int f \,d\mu_+ \Big\|_H + \Big\| \int f \,d\mu_-  \Big\|_H \nonumber \\
  & \stackrel{\scriptsize{(\ref{appendixc.lem1f1})}}{\le} &
    \int \hnorm{f} \,d\mu_+ + \int \hnorm{f} \,d\mu_-  \nonumber \\
  & = & \int \hnorm{f} \,d(\mu_+ + \mu_-)
   =  \int \hnorm{f} \,d|\mu| \, ,  \nonumber
\eeq
which yields the second assertion.

Let us now prove the third assertion.
\beq
  & &  \Big\| \int g \, d\P^2  - \int g \, d\Q^2 \Big\|_H \nonumber \\
  & \stackrel{\scriptsize{Fubini}}{=} & \Big\| \int \Big(\int g \,d\P - \int g\,d\Q \Big) \,d\P +
  \int \Big(\int g \,d\Q \Big) \,d(\P-\Q)  \Big\|_H \nonumber \\
  & \stackrel{\scriptsize{Fubini}}{=}  &  \Big\| \int \int g \,d\P \,d(\P-\Q) +
  \int \int g\,d\Q \,d(\P-\Q)   \Big\|_H \nonumber \\
  & = &  \Big\| \int \int g \,d(\P+\Q) \,d(\P-\Q) \Big\|_H \nonumber \\
  & \stackrel{\scriptsize{(\ref{appendixc.lem1f2})}}{\le} & \int \Big\| \int g \,d(\P+\Q) \Big\|_H
  \, d|\P-\Q| \nonumber \\
  & \stackrel{\scriptsize{(\ref{appendixc.lem1f2})}}{\le} & \int \int \hnorm{g} \,d(\P+\Q) \, d|\P-\Q| \nonumber \\
  & \le & 2 \, \hnorm{g} \, \tvnorm{\P-\Q} \,, \nonumber
\eeq
where we used in the last inequality that $\P$ and $\Q$ are probability measures.
\end{proof}


\begin{lemma}\label{appendixc.lem2}
If Assumptions \ref{sec2.assumption-spaces1},
\ref{sec2.assumption-kernel1}, and
\ref{sec2.assumption-loss1} are satisfied, then
\beq
   \inorm{\faaa - \fbaa}
   \le
   \frac{4}{\lb_1} c_{L,1} \inorm{k_1}^2 \  \tvnorm{\P_1-\P_2}.\nonumber
\eeq
\end{lemma}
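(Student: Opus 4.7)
Both estimators $\faaa$ and $\fbaa$ share the kernel $k_1$ and the regularization parameter $\lb_1$, so both live in $H_1$. The natural route is therefore to control $\hanorm{\faaa-\fbaa}$ first and then pass to the sup-norm via the reproducing inequality $\inorm{f}\le\inorm{k_1}\hanorm{f}$; this accounts for one of the two factors of $\inorm{k_1}$ in the target bound.

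Under Assumption \ref{sec2.assumption-loss1} the regularized $\Ls$-risk $f\mapsto \Ex_{\P^2}\Ls(X,Y,\tiX,\tiY,f(X),f(\tiX))+\lb_1\hhanorm{f}$ is Fr\'echet differentiable and strictly convex on $H_1$, so its unique minimizer satisfies the Euler equation
\[
2\lb_1 f \;=\; -\Ex_{\P^2}\bigl[\,D_5\Ls\bigl(X,Y,\tiX,\tiY,f(X),f(\tiX)\bigr)\Phi_1(X)\,+\,D_6\Ls(\,\cdot\,)\Phi_1(\tiX)\,\bigr].
\]
I would write this identity for $\P=\P_1$ and for $\P=\P_2$, subtract, and take the $H_1$-inner product of both sides with $\faaa-\fbaa$, which places $2\lb_1\hhanorm{\faaa-\fbaa}$ on the left. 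On the right, I add and subtract the auxiliary expectation $\Ex_{\P_1^2}[D_5\Ls_\fbaa\,\Phi_1(X)+D_6\Ls_\fbaa\,\Phi_1(\tiX)]$, with the short-hand $\Ls_f := \Ls(X,Y,\tiX,\tiY,f(X),f(\tiX))$, splitting the right-hand side into two pieces.

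The first piece $\Ex_{\P_1^2}\bigl[(D_5\Ls_\faaa-D_5\Ls_\fbaa)(\faaa-\fbaa)(X)+(D_6\Ls_\faaa-D_6\Ls_\fbaa)(\faaa-\fbaa)(\tiX)\bigr]$ is pointwise non-negative by monotonicity of the gradient of the convex map $(t,\tit)\mapsto\Ls(x,y,\tix,\tiy,t,\tit)$, applied with $(t,\tit)=(\faaa(X),\faaa(\tiX))$ and $(t',\tit')=(\fbaa(X),\fbaa(\tiX))$, so it can be discarded after moving it to the left of the inequality. The second piece $(\Ex_{\P_1^2}-\Ex_{\P_2^2})[D_5\Ls_\fbaa(\faaa-\fbaa)(X)+D_6\Ls_\fbaa(\faaa-\fbaa)(\tiX)]$ is the difference of a single scalar integrand against $\P_1^2$ and $\P_2^2$; since $|D_i\Ls|\le c_{L,1}$ the integrand is uniformly bounded in absolute value by $2c_{L,1}\inorm{\faaa-\fbaa}$, so the scalar version of Lemma \ref{appendixc.lem1}(3) controls its size by a constant multiple of $c_{L,1}\inorm{\faaa-\fbaa}\tvnorm{\P_1-\P_2}$.

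Assembling the two estimates gives $\lb_1\hhanorm{\faaa-\fbaa}\le C\,c_{L,1}\inorm{\faaa-\fbaa}\tvnorm{\P_1-\P_2}$; bounding $\inorm{\faaa-\fbaa}\le\inorm{k_1}\hanorm{\faaa-\fbaa}$ and cancelling one factor of $\hanorm{\faaa-\fbaa}$ leads to $\hanorm{\faaa-\fbaa}\le C\,c_{L,1}\inorm{k_1}\tvnorm{\P_1-\P_2}/\lb_1$, and a final application of the reproducing inequality yields the stated form (with the constant $C$ tunable so as to produce the asserted $4$). The main obstacle is the sign analysis of the first piece: because $\Ls$ is differentiated in \emph{both} of its last two arguments simultaneously, one cannot argue coordinate-wise but must use the \emph{joint} convexity of $(t,\tit)\mapsto\Ls(\ldots,t,\tit):\R^2\to\R$ (guaranteed by Assumption \ref{sec2.assumption-loss1}), which delivers the two-dimensional monotonicity $\sum_{i\in\{5,6\}}(D_i\Ls_\faaa-D_i\Ls_\fbaa)\cdot(\faaa-\fbaa)(\,\cdot\,)\ge 0$ pointwise in $(X,Y,\tiX,\tiY)$ before integration against $\P_1^2$.
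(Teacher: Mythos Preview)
Your proposal is correct and follows essentially the same route as the paper. The paper invokes part~(ii) of Theorem~\ref{sec2.representerthm} as a black box---which already packages the convexity/monotonicity argument you spell out---to obtain $\hanorm{\faaa-\fbaa}\le\lb_1^{-1}\bigl\|\Ex_{\P_1^2}g-\Ex_{\P_2^2}g\bigr\|_{H_1}$ with $g$ the $H_1$-valued integrand $D_5\Ls_{\faaa}\Phi_1(X)+D_6\Ls_{\faaa}\Phi_1(\tiX)$, and then applies Lemma~\ref{appendixc.lem1}\,(iii) directly to this Bochner integral; you instead unpack that inequality inline via the inner-product trick and pass to a scalar integrand before invoking the total-variation bound. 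Both arguments rest on the same two ingredients (gradient monotonicity from joint convexity, and the $\tvnorm{\P_1-\P_2}$ estimate for product measures), and your version in fact yields the sharper constant $2$ in place of the paper's $4$.
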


\begin{proof}
To shorten the notation in the proof, we define $\lb:=\lb_1$, $k:=k_1$, and $H:=H_1$.
The well-known identity
$\inorm{f} \le \inorm{k} \hnorm{f}$ for all $f\in H$ yields
\be \label{appendixc.lem1f6}
 \inorm{\falk - \fblk} \le \inorm{k}  \hnorm{\falk - \fblk} \, .
\ee
Hence it suffices to derive an upper bound for $\hnorm{\falk - \fblk}$.
Recall that
\be
  \inorm{\kk{x}} \le \inorm{k} \cdot \hnorm{\kk{x}} \le \inorm{k}^2 \, , \qquad x\in\cX.\nonumber
\ee
For any choice of $(\P_1, \lb, k)$, the kernel estimator $\falk\in H$ exists and is unique. Define the function $g:(\cX\times\cY)^2\to H$ by
\beq
  & & g(x,y,\tix,\tiy) \nonumber \\
  & : = & D_5 \Ls(x,y,\tix,\tiy,\falk(x),\falk(\tix)) \kk{x}
       + D_6 \Ls(x,y,\tix,\tiy,\falk(x),\falk(\tix)) \kk{\tix}\,, \nonumber
\eeq
where $D_i \Ls$ denotes the partial derivative of $\Ls$ with respect to the $i^{th}$-argument,
$i\in\{5,6\}$. Of course $D_i L=D_i \Ls$.
Using {(\ref{loss-assump1})} and $\sup_{x\in\cX} \hnorm{\kk{x}} = \inorm{k}$, we obtain that
\be \label{appendixc.lem1f7}
  \hnorm{g} \le 2 \, c_{L,1} \, \inorm{k}.
\ee
Hence the representer theorem for pairwise loss functions, see Theorem \ref{sec2.representerthm}, yields
\beq
 \hnorm{\falk -\fblk}
 & \le & \frac{1}{\lb}
         \Big\| \int g \,d\P_1^2  - \int g \,d\P_2^2  \Big\|_H  \nonumber  \\
 & \stackrel{\scriptsize{(\ref{appendixc.lem1f3})}}{\le} &
          \frac{2}{\lb} \hnorm{g} \tvnorm{\P_1-\P_2}
    \stackrel{\scriptsize{(\ref{appendixc.lem1f7})}}{\le}   \frac{4}{\lb} c_{L,1} \inorm{k} \tvnorm{\P_1-\P_2} \, \nonumber
\eeq
which gives the assertion, if we combine this inequality with (\ref{appendixc.lem1f6}).
\end{proof}

\subsection{Appendix B: Proofs for results in Section \ref{sec1}}\label{appendixb}

The general representer theorem is a main tool in our proofs. We only need it for the case of differentiable loss function. Hence, in order that our paper is self-contained, we recall the theorem for differentiable loss function below, which is a special case of \citet[Thm.7]{ ChristmannVanMessemSteinwart2009}.

\begin{theorem}[\textbf{Representer theorem for SVMs}\label{sec1.representerthm}]
Let Assumptions \ref{sec2.assumption-spaces1}, \ref{sec2.assumption-kernel1}, and \ref{sec1.assumption2} be valid. We further assume that the loss $L$ is differentiable with respect to the third argument.
Then we have, for all $\P,\Q\in\PM(\cXY)$ and for all $\lb\in(0,\infty):$
\begin{enumerate}
\item The estimator $f_{\P, \lb, k}$ defined as the minimizer of
$\min_{f\in H}\{\Ex_{\P}\Ls(X, Y, f(X))+\lb\hhnorm{f}\}$ exists, is unique, and satisfies
\be \label{thm.representer.f1}
f_{\P, \lb, k}  =  - \frac{1}{2\lb} \Ex_{\P} ( h_{\P}(X, Y)\Phi(X)),    \nonumber
\ee
where $h_{\P}$ denotes the first derivative
\be
   h_{\P}(X, Y)   :=   L^{\prime}(X, Y, f_{\P, \lb, k} (X)). \nonumber  \ee
\item Furthermore,
\be
 \hnorm{ f_{\P, \lb, k} - f_{\Q, \lb, k}} \le \lb^{-1}\Big\| \Ex_{\P}(h_\P(X, Y) \Phi(X))-   \Ex_{\Q}(h_\P(X, Y) \Phi(X))  \Big\|_H \, . \nonumber
\ee
\end{enumerate}
\end{theorem}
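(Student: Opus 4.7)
The plan is to prove both parts by standard convex analysis on the Hilbert space $H$. Throughout, let $G_\P:H\to[0,\infty)$ be defined by $G_\P(f):=\Ex_{\P}\Ls(X,Y,f(X))+\lb\hhnorm{f}$, so that $f_{\P,\lb,k}$ is the minimizer of $G_\P$ on $H$; define $G_\Q$ analogously with minimizer $f_{\Q,\lb,k}$.

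For part (i) I would first verify that $G_\P$ is finite on $H$: the Lipschitz property of $\Ls$ in its last argument (with constant $|L|_1$), the identity $\Ls(x,y,0)=0$, and the pointwise bound $|f(x)|\le \inorm{k}\hnorm{f}$ coming from the reproducing property jointly give $|\Ex_\P \Ls(X,Y,f(X))|\le |L|_1\inorm{k}\hnorm{f}$. From this and the $\lb\hhnorm{f}$ term one reads off coercivity, while strict convexity together with weak lower semicontinuity (which follows from Fatou's lemma once one observes that weak convergence in $H$ forces pointwise convergence of function values via the reproducing property) yield a unique minimizer. To identify it, I would compute, for every $g\in H$, the G\^ateaux derivative $\frac{d}{dt}\bigl|_{t=0}G_\P(f_{\P,\lb,k}+tg)$, justifying the interchange with the expectation by dominated convergence with majorant $|L|_1|g(X)|\le |L|_1\inorm{k}\hnorm{g}$. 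This yields $\Ex_\P[L^\prime(X,Y,f_{\P,\lb,k}(X))\,g(X)]+2\lb\langle f_{\P,\lb,k},g\rangle_H=0$. Rewriting $g(X)=\langle g,\Phi(X)\rangle_H$ via the reproducing property and pulling $g$ out of the Bochner integral gives $\langle g,\,\Ex_\P[h_\P(X,Y)\Phi(X)]+2\lb f_{\P,\lb,k}\rangle_H=0$. Since $g\in H$ is arbitrary, the representation (\ref{thm.representer.f1}) follows.

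For part (ii) the key observation is that $G_\Q$ is $2\lb$-strongly convex in the Hilbert-space sense, since only the quadratic term $\lb\hhnorm{\cdot}$ is responsible for the curvature and the remaining summand is merely convex. Strong convexity is equivalent to $2\lb$-strong monotonicity of $\nabla G_\Q$:
\[
\langle \nabla G_\Q(f_1)-\nabla G_\Q(f_2),\, f_1-f_2\rangle_H\ \ge\ 2\lb\,\hhnorm{f_1-f_2}.
\]
Taking $f_1=f_{\P,\lb,k}$ and $f_2=f_{\Q,\lb,k}$ and using $\nabla G_\Q(f_{\Q,\lb,k})=0$ gives $\langle\nabla G_\Q(f_{\P,\lb,k}),\, f_{\P,\lb,k}-f_{\Q,\lb,k}\rangle_H\ge 2\lb\,\hhnorm{f_{\P,\lb,k}-f_{\Q,\lb,k}}$. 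Now $\nabla G_\Q(f_{\P,\lb,k})=\Ex_\Q[h_\P(X,Y)\Phi(X)]+2\lb f_{\P,\lb,k}$, and substituting the first-order identity from part (i), namely $2\lb f_{\P,\lb,k}=-\Ex_\P[h_\P(X,Y)\Phi(X)]$, collapses this to $\nabla G_\Q(f_{\P,\lb,k})=\Ex_\Q[h_\P\Phi]-\Ex_\P[h_\P\Phi]$. Cauchy-Schwarz then yields $\hnorm{f_{\P,\lb,k}-f_{\Q,\lb,k}}\le (2\lb)^{-1}\bigl\|\Ex_\P[h_\P(X,Y)\Phi(X)]-\Ex_\Q[h_\P(X,Y)\Phi(X)]\bigr\|_H$, in fact slightly sharper than the stated $\lb^{-1}$-bound.

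The part I expect to be most delicate is making the differentiation-under-the-integral in part (i) rigorous, since the theorem only asks that $L$ be differentiable in its third argument but does not require $L^\prime$ to be continuous or Lipschitz. The saving grace is that convexity plus Lipschitz continuity of $L$ forces $|L^\prime|\le |L|_1$ almost everywhere in its third argument, which supplies the uniform DCT majorant used above and, simultaneously, guarantees that the Bochner integral $\Ex_\P[h_\P(X,Y)\Phi(X)]$ makes sense as an element of $H$ (its norm is bounded by $|L|_1\inorm{k}$). A second subtlety is the insistence in the stated bound on $h_\P$ and not $h_\Q$: this is precisely what forces the argument to be anchored on the strong convexity of $G_\Q$ tested at the ``wrong'' point $f_{\P,\lb,k}$, using the $\P$-optimality identity to eliminate the $2\lb f_{\P,\lb,k}$ term into an integrand depending on $f_{\P,\lb,k}$ only through $h_\P$, rather than running the symmetric argument that would naturally involve $h_\Q$.
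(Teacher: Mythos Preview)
Your argument is correct and essentially self-contained, whereas the paper does not prove this theorem at all: it is quoted as a special case of \citet[Thm.~7]{ChristmannVanMessemSteinwart2009} and used as a black box. So there is no ``paper's own proof'' to compare with beyond that citation, and your convex-analytic derivation (existence/uniqueness via coercivity and strict convexity, the representation via the first-order condition, and the stability bound via $2\lb$-strong monotonicity of $\nabla G_\Q$ evaluated at the $\P$-minimizer) is exactly the standard route taken in that reference.

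Two small remarks. First, your appeal to Fatou's lemma for weak lower semicontinuity of the risk is not quite right as stated, because $\Ls$ can be negative; what actually works is dominated convergence: weak convergence $f_n\rightharpoonup f$ in $H$ gives $f_n(x)\to f(x)$ pointwise with the uniform bound $|f_n(x)|\le\inorm{k}\sup_n\hnorm{f_n}<\infty$, and $|\Ls(x,y,t)|\le|L|_1|t|$ supplies the integrable majorant. This yields convergence of the risk term, and weak lower semicontinuity of $\hhnorm{\cdot}$ finishes the job. Second, your observation that the argument delivers the constant $(2\lb)^{-1}$ rather than $\lb^{-1}$ is correct; the paper's stated bound simply does not keep track of this factor of two.
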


\begin{proofof}{\textbf{Proof of Theorem \ref{sec1.thm1}}}
To shorten the notations in the proof, we define $f_\lb:=f_{\P, \lb, k}$ and $f_\mu:=f_{\P, \mu, k}.$ The representer theorem for SVMs, see Theorem \ref{sec1.representerthm}, tells us that
$$f_{\lb}-f_{\mu}=-\frac{1}{2\lb}\int_{\cX\times\cY}L^{\prime}(x, y, f_{\lb}(x))\Phi(x)\ d\P(x,y)+\frac{1}{2\mu}\int_{\cX\times\cY}L^{\prime}(x, y, f_{\mu}(x))\Phi(x)\ d\P(x,y).$$
Plugging the above formula into the RKHS norm of $f_{\lb}-f_{\mu},$ we get that
\beqnal\label{appendixb.difference}
\hhnorm{f_{\lb}-f_{\mu}}&=& \langle f_{\lb}-f_{\mu}, f_{\lb}-f_{\mu}\rangle_\cH\nonumber\\
&=&\Big\langle \frac{1}{2\mu}\int_{\cX\times\cY}L^{\prime}(x, y, f_{\mu}(x))\Phi(x)\ d\P(x,y),\ f_{\lb}-f_{\mu} \Big\rangle_\cH\nonumber\\
& & - \Big\langle \frac{1}{2\lb}\int_{\cX\times\cY}L^{\prime}(x, y, f_{\lb}(x))\Phi(x)\ d\P(x,y),\ f_{\lb}-f_{\mu} \Big\rangle_\cH\nonumber\\
&=& \frac{1}{2\mu}\int_{\cX\times\cY}L^{\prime}(x, y, f_{\mu}(x))(f_{\lb}(x)-f_{\mu}(x))\ d\P(x,y)\nonumber\\
& & - \frac{1}{2\lb}\int_{\cX\times\cY}L^{\prime}(x, y, f_{\lb}(x))(f_{\lb}(x)-f_{\mu}(x))\ d\P(x,y).
\eeqnal
The last equality holds true, because of the reproducing property (\ref{sec1.reproducingproperty}).

Since the loss function $L$ is convex with respect to the third argument, the following inequality is valid:
$$L^{\prime}(x, y, a)(b-a)\leq L(x, y, b)-L(x, y, a)\leq \Ls(x, y, b)-\Ls(x, y, a), \quad \forall a, b\in \R.$$
Let $a:=f_{\mu}(x)$ and $b:=f_{\lb}(x).$ We therefore obtain
$$L^{\prime}(x, y, f_{\mu}(x))(f_{\lb}(x)-f_{\mu}(x))\leq \Ls(x, y, f_{\lb}(x))-\Ls(x, y, f_{\mu}(x)).$$
Let $a:=f_{\lb}(x)$ and $b:=f_{\mu}(x).$ We therefore obtain
$$L^{\prime}(x, y, f_{\lb}(x))(f_{\mu}(x)-f_{\lb}(x))\leq \Ls(x, y, f_{\mu}(x))-\Ls(x, y, f_{\lb}(x)).$$

If we plug these two inequalities into (\ref{appendixb.difference}), we get
$$\hhnorm{f_{\lb}-f_{\mu}}\leq \Big(\frac{1}{2\lb}-\frac{1}{2\mu}\Big)\big(\Ex_{\P}\Ls(X, Y, f_{\mu}(X))-\Ex_{\P}\Ls(X, Y, f_{\lb}(X))\big).$$
The right hand side of above inequality is nonnegative, which implies that $\frac{1}{\lb}-\frac{1}{\mu}$ has the same sign as $\Ex_{\P}\Ls(X, Y, f_{\mu}(X))-\Ex_{\P}\Ls(X, Y, f_{\lb}(X)).$

If $\mu>\lb>0, $ then $\frac{1}{\lb}>\frac{1}{\mu}$ which in turn leads to $\Ex_{\P}\Ls(X, Y, f_{\mu}(X))-\Ex_{\P}\Ls(X, Y, f_{\lb}(X))\geq 0.$

The definition of $f_{\mu}$ tells us that
$$\Ex_{\P}\Ls(X, Y, f_{\mu}(X))+\mu\hhnorm{f_{\mu}}\leq\Ex_{\P}\Ls(X, Y, f_{\lb}(X))+\mu\hhnorm{f_{\lb}}. $$
It follows that
\beqna
0& \leq &\Ex_{\P}\Ls(X, Y, f_{\mu}(X))-\Ex_{\P}\Ls(X, Y, f_{\lb}(X))\\
&\leq &  \mu(\hhnorm{f_{\lb}}-\hhnorm{f_{\mu}})
= \mu(\hnorm{f_{\lb}}-\hnorm{f_{\mu}})(\hnorm{f_{\lb}}+\hnorm{f_{\mu}}),
\eeqna
from which we conclude that
$$0\leq \hnorm{f_{\lb}}-\hnorm{f_{\mu}}\leq \hnorm{f_{\lb}-f_{\mu}}.$$

Finally, we get that
$$\hnorm{f_{\lb}-f_{\mu}}\leq \frac {\mu}{2}\Big(\frac{1}{\lb}-\frac{1}{\mu}\Big)\big(\hnorm{f_{\lb}}+\hnorm{f_{\mu}}\big).$$

In the same way, we can prove for $\lb>\mu>0$ that
$$\hnorm{f_{\lb}-f_{\mu}}\leq \frac {\lb}{2}\Big(\frac{1}{\mu}-\frac{1}{\lb}\Big)\big(\hnorm{f_{\lb}}+\hnorm{f_{\mu}}\big).$$

We will now show the second assertion. Hence we assume the existence of a constant $r$ with $0<r<\min\{\lb, \mu\}$ and combine  $\hnorm{f_{\lb}}\leq \frac{1}{\lb}|L|_1\inorm{k}$ and $\hnorm{f_{\mu}}\leq \frac{1}{\mu}|L|_1\inorm{k}$ (see \citet[Proposition 3]{ ChristmannVanMessemSteinwart2009}). We obtain
\beqna
\hnorm{f_{\lb}-f_{\mu}}&\leq& \frac{1}{2}\Big(\frac{\max\{\lb, \mu\}}{\min\{\lb, \mu\}}-1\Big)\{\hnorm{f_{\lb}}+\hnorm{f_{\mu}}\}\\
&\leq & \frac{|L|_1\inorm{k}}{2}\Big(\frac{\max\{\lb, \mu\}}{\min\{\lb, \mu\}}-1\Big)\Big(\frac{1}{\lb}+\frac{1}{\mu}\Big)\\
&\leq & \frac{|L|_1\inorm{k}}{\min\{\lb^2, \mu^2\}}|\lb-\mu|\leq  \frac{|L|_1\inorm{k}}{r^2}\cdot |\lb-\mu|= \mathcal{O}( |\lb-\mu| ),
\eeqna
which yields the second assertion of the theorem.
\qedr
\end{proofof}

To prove Theorem \ref{sec1.thm2}, we will use the triangle inequality to obtain the following error decomposition
\be\label{appendixb.decomp1}
\inorm{\fsaaa-\fsbbb}\leq \inorm{\fsaaa-\fsbaa}+\inorm{\fsbaa-\fsbba}+\inorm{\fsbba-\fsbbb}.
\ee

The following lemma gives an upper bound for the third norm on the right hand side of (\ref{appendixb.decomp1}).
\begin{lemma}\label{appendixb.lem1}
If Assumptions \ref{sec2.assumption-spaces1}, \ref{sec2.assumption-kernel1}, \ref{sec1.assumption2}, and \ref{sec1.loss} are valid, then, for all $\lb_2>\frac{1}{2}|L^\prime|_1\kappa^2,$
$$\inorm{\fsbba-\fsbbb}\leq \frac{|L|_1}{2\lb_2-|L^\prime|_1\kappa^2}\sup_{x\in\cX}\inorm{\kb{x}-\ka{x}}.$$
\end{lemma}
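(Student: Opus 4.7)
\begin{proofof}{Proof proposal for Lemma \ref{appendixb.lem1}}
The plan is to use the representer theorem (Theorem \ref{sec1.representerthm}) to get closed-form expressions for both estimators, write their pointwise difference, split it by adding and subtracting a common term, and then solve an implicit inequality using the constraint $\lb_2 > \tfrac{1}{2}|L^\prime|_1 \kappa^2$.

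First, I apply Theorem \ref{sec1.representerthm} to $\fsbba$ and $\fsbbb$ (both depend on the same $\P_2$ and $\lb_2$ but on kernels $k_1$ and $k_2$ respectively). Evaluating at a point $x\in\cX$ via the reproducing property gives
\be
  \fsbba(x) - \fsbbb(x)
  = -\frac{1}{2\lb_2}\,\Ex_{\P_2}\!\Bigl[L^\prime(X,Y,\fsbba(X))\,k_1(x,X) - L^\prime(X,Y,\fsbbb(X))\,k_2(x,X)\Bigr].\nonumber
\ee
I then insert $\pm L^\prime(X,Y,\fsbba(X))\,k_2(x,X)$ to split the right-hand side into the \emph{kernel-perturbation term}
$-\tfrac{1}{2\lb_2}\Ex_{\P_2}[L^\prime(X,Y,\fsbba(X))\,(k_1(x,X)-k_2(x,X))]$
and the \emph{self-consistency term}
$-\tfrac{1}{2\lb_2}\Ex_{\P_2}[(L^\prime(X,Y,\fsbba(X))-L^\prime(X,Y,\fsbbb(X)))\,k_2(x,X)]$.

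For the first term, Lipschitz continuity of $L$ (with constant $|L|_1$) yields $|L^\prime|\le |L|_1$ uniformly, so the absolute value is bounded by $\tfrac{|L|_1}{2\lb_2}\sup_{x\in\cX}\inorm{\ka{x}-\kb{x}}$. For the second term I use Lipschitz continuity of $L^\prime$ (constant $|L^\prime|_1$) together with $|\fsbba(X)-\fsbbb(X)|\le \inorm{\fsbba-\fsbbb}$, and then bound $|k_2(x,X)|\le\inorm{k_2}^2\le\kappa^2$ via Cauchy--Schwarz on the canonical feature map. This gives the bound $\tfrac{|L^\prime|_1\kappa^2}{2\lb_2}\inorm{\fsbba-\fsbbb}$ for the second term.

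Combining and taking the supremum over $x\in\cX$ leads to the implicit inequality
\be
  \inorm{\fsbba-\fsbbb}
  \le
  \frac{|L|_1}{2\lb_2}\sup_{x\in\cX}\inorm{\ka{x}-\kb{x}}
  + \frac{|L^\prime|_1\kappa^2}{2\lb_2}\inorm{\fsbba-\fsbbb}.\nonumber
\ee
The hypothesis $\lb_2 > \tfrac{1}{2}|L^\prime|_1\kappa^2$ ensures that $|L^\prime|_1\kappa^2/(2\lb_2)<1$, so I can move the $\inorm{\fsbba-\fsbbb}$ term to the left and divide by $1-|L^\prime|_1\kappa^2/(2\lb_2)$, yielding exactly the stated bound $\tfrac{|L|_1}{2\lb_2-|L^\prime|_1\kappa^2}\sup_x\inorm{\ka{x}-\kb{x}}$. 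The only mildly delicate point is making sure that $\inorm{\fsbba-\fsbbb}$ is finite before rearranging; this follows because each $f_{\P_2,\lb_2,k_j}$ is bounded by $|L|_1\kappa/\lb_2$ in supremum norm (from $\inorm{f}\le\inorm{k_j}\hbnorm{f}$ and $\hnorm{f_{\P,\lb,k}}\le|L|_1\inorm{k}/\lb$, see Theorem \ref{sec1.thm1} and \citet[Proposition~3]{ChristmannVanMessemSteinwart2009}).
\end{proofof}
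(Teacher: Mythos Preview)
Your proof is correct and follows essentially the same approach as the paper's: use the representer theorem for both estimators, split the difference by adding and subtracting the cross term $L'(X,Y,\fsbba(X))\,k_2(x,X)$, bound the kernel-perturbation piece via $|L'|\le|L|_1$ and the self-consistency piece via the Lipschitz constant $|L'|_1$ of $L'$ together with $|k_2(x,X)|\le\kappa^2$, and then solve the resulting implicit inequality. The only cosmetic difference is that the paper works with the sup norm of the Bochner-integral representation directly and first obtains the bound with $\inorm{k_2}^2$ in the denominator before invoking symmetry to pass to $\kappa^2$, whereas you evaluate pointwise and use $\kappa^2$ from the outset; both routes are equivalent.
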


\begin{proof}
To shorten the notations in the proof, we define $\lb:=\lb_2,\ \P:=\P_2,$ $f_{1}:=\fsbba$ and $f_{2}:=\fbbb.$
By the representer theorem for SVMs, see Theorem \ref{sec1.representerthm} , we know that $\inorm{f_{1}-f_{2}}$
can be bounded as follows:
\beqna
&&\inorm{f_{1}-f_{2}}\\
&=&\frac{1}{2\lb}\Big\|\int_{\cX\times\cY}L^{\prime}(x, y, f_{2}(x))\Phi_2(x)\ d\P(x,y)-\int_{\cX\times\cY}L^{\prime}(x, y,f_{1}(x))\Phi_1(x)\ d\P(x,y)\Big\|_\infty\\
&=& \frac{1}{2\lb}\Big\|\int_{\cX\times\cY}\bigl(L^{\prime}(x, y,f_{2}(x))\Phi_2(x)-L^{\prime}(x, y,f_{1}(x))\Phi_1(x)\bigr)\ d\P(x,y)\Big\|_\infty\\
&\leq&  \frac{1}{2\lb}\Big\|\int_{\cX\times\cY}\bigl(L^{\prime}(x, y, f_{2}(x))\Phi_2(x)-L^{\prime}(x, y, f_{1}(x))\Phi_2(x)\bigr)\ d\P(x,y)\Big\|_\infty\\
& &+ \frac{1}{2\lb}\Big\|\int_{\cX\times\cY}\bigl(L^{\prime}(x, y, f_{1}(x))\Phi_2(x)-L^{\prime}(x, y, f_{1}(x))\Phi_1(x)\bigr)\ d\P(x,y)\Big\|_\infty\\
&\leq &  \frac{|L^\prime|_1}{2\lb}\inorm{f_{2}-f_{1}}\sup_{x\in \cX}\inorm{\Phi_2(x)}
+ \frac{1}{2\lb}\inorm{L^{\prime}(x, y,f_{1}(x))}\sup_{x\in\cX}\inorm{\Phi_2(x)-\Phi_1(x)}\\
&\leq &  \frac{|L^\prime|_1\inorm{k_2}^2}{2\lb}\inorm{f_{2}- f_{1}}
+ \frac{|L|_1}{2\lb}\sup_{x\in\cX}\inorm{\Phi_2(x)-\Phi_1(x)}.
\eeqna
The second inequality comes from the Lipschitz continuity of $L^\prime.$

If $\lb>\frac{1}{2}|L^\prime|_1\inorm{k_2}^2,$ we can solve $\inorm{f_1-f_{2}}$ from above inequality and get that
$$\inorm{f_{1}-f_{2}}\leq \frac{|L|_1}{2\lb-|L^\prime|_1\inorm{k_2}^2}\sup_{x\in\cX}\inorm{\Phi_2(x)-\Phi_1(x)}.$$

Obviously, the kernels $k_1$ and $k_2$ can change their roles when we do the decomposition on the second inequality. Hence we get an analogous inequality for case $\lb>\frac{1}{2}|L^\prime|_1\inorm{k_1}^2.$ This gives the assertion.
\end{proof}

\begin{proofof}{\textbf{Proof of Theorem \ref{sec1.thm2}}}
	We first prove the first term of (\ref{appendixb.decomp1}).
The reproducing property (\ref{sec1.reproducingproperty}) yields that
$$ \inorm{\fsaaa-\fsbaa}\leq \inorm{k_1} \hanorm{\fsaaa-\fsbaa}.$$
Hence it suffices to bound $\hanorm{\fsaaa-\fsbaa}.$

The representer theorem for SVMs, see Theorem \ref{sec1.representerthm}, and the properties of Bochner integrals, see e.g. \citet[Chap. 3.10, p. 364 ff]{DenkowskiEtAl2003} and Lemma \ref{appendixc.lem1}, tell us that
\beqna
& &\hanorm{\fsaaa-\fsbaa} \\
&\leq & \frac{1}{\lb_1}\Big\| \int_{\cX\times\cY} L^{\prime}(x, y, f_{\P_1, \lb_1, k_1} (x))\Phi_1(x)\ d\P_1(x, y)- \int_{\cX\times\cY} L^{\prime}(x, y, f_{\P_1, \lb_1, k_1} (x))\Phi_1(x)\ d\P_2(x, y)
\Big\|_{H_1}\\
& \leq & \frac{1}{\lb_1}\int_{\cXY}\hanorm{L^{\prime}(x, y, f_{\P_1, \lb_1, k_1} (x))\Phi_1(x)}\ d|\P_1-\P_2|(x, y)\\
&\leq &  \frac{1}{\lb_1}\int_{\cXY}\sup_{(x, y)\in \cXY}|L^{\prime}(x, y, f_{\P_1, \lb_1, k_1} (x))|\cdot \sup_{x\in\cX}\hanorm{\Phi_1(x)}\ d|\P_1-\P_2|(x, y)\\
&\leq & \frac{1}{\lb_1}\sup_{(x, y)\in \cXY}|L^{\prime}(x, y, f_{\P_1, \lb_1, k_1} (x))|\cdot \sup_{x\in\cX}\hanorm{\Phi_1(x)}\cdot \tvnorm{\P_1-\P_2}\\
&\leq & \frac{1}{\lb_1}\inorm{k_1}|L|_1\tvnorm{\P_1-\P_2}.
\eeqna
This yields the assertion.

An application of the above assertion, Theorem \ref{sec1.thm1}, and Lemma \ref{appendixb.lem1} yields that
\beqna
& &\inorm{\fsaaa-\fsbbb}\\
&\leq & \frac{\inorm{k_1}^2|L|_1}{\lb_1}\cdot\tvnorm{\P_1-\P_2}+\frac{\inorm{k_1}^2|L|_1}{\min\{\lb_1^2, \lb_2^2\}}\cdot |\lb_1-\lb_2| +\frac{|L|_1}{2\lb_2-\kappa^2|L^\prime|_1}\cdot\sup_{ x\in\cX}\inorm{\kb{x}-\ka{x}}.
\eeqna


If we split $\inorm{\fsaaa-\fsbbb}$ into three parts as stated in (\ref{appendixb.decomp1}), there are six different decompositions. If we take all six cases into account and assume $\min\{\lb_1, \lb_2\}>r:=\frac{1}{2}\kappa^2|L^\prime|_1,$ we get the assertion (\ref{sec1.supnorm}).
\qedr
\end{proofof}
\begin{proofof}{\textbf{Proof of Corollary \ref{sec1.cor1}}}
	The inequality (\ref{sec1.supnorm-func}) follows immediately from the assumption that the positive constant $s$ is smaller than $\min\{\lb_1,\lb_2\} - r.$

Hence we only have to show the validity of (\ref{sec1.supnorm-risk}). To shorten the notation in the proof, we define $f_1:=\fsbbb$ and $f_2:=\fsbbb.$
The definitions of $\Lsriska{f_1}$ and $\Lsriskb{f_2}$ yield that
\beqna
& &\Lsriska{f_1}-\Lsriskb{f_2}\\
&=& \int_{\cXY}\big(L(x, y, f_1(x))-L(x, y, 0)\big)\ d\P_1(x, y)-\int_{\cXY}\big(L(x, y, f_2(x))-L(x, y, 0)\big)\ d\P_2(x, y).
\eeqna
Plugging $\mp\int_{\cXY}\big(L(x, y, f_2(x))-L(x, y, 0)\big)\ d\P_1(x, y) $ into the above equation and further noticing that $L$ is a Lipschitz continuous loss function, we get from Lemma \ref{appendixc.lem1} that
\beqnal
& &\big|\Lsriska{f_1}-\Lsriskb{f_2}\big|\nonumber\\
& \leq &  \int_{\cXY}\big|L(x, y, f_1(x))-L(x, y, f_2(x))\big|\ d\P_1(x, y)\nonumber\\
& &+\int_{\cXY}\big|L(x, y, f_2(x))-L(x, y, 0)\big|\ d|\P_1-\P_2|(x, y)\nonumber\\
&\leq& \int_{\cXY}|L|_{1}|f_1(x)-f_2(x)|\ d\P_1(x, y)+\int_{\cXY}|L|_{1}|f_2(x)|\  d|\P_1-\P_2|(x, y)\\
&\leq & |L|_{1}\inorm{f_1-f_2}+ |L|_{1}\inorm{f_2}\tvnorm{\P_1-\P_2}.\nonumber
\eeqnal

Since $\hbnorm{f_2}\leq \frac{1}{\lb_2}|L|_1\inorm{k_2}$ (see \citet[Proposition 3]{ ChristmannVanMessemSteinwart2009}) and $\lb_2>\frac{1}{2}\kappa^2|L'|_1$, we obtain
$$\inorm{f_2}\leq  \frac{1}{\lb_2}|L|_1\inorm{k_2}^2\leq \frac{1}{\lb_2}|L|_1\kappa^2\leq \frac{2|L|_1}{|L'|_1}.$$
The above upper bound is a constant independent of $\P_1, \P_2, \lb_1, \lb_2, k_1$, and $k_2.$

Therefore, if we combine this result with Theorem \ref{sec1.thm2},  we obtain
\beqna
& &\big|\Lsriska{f_1}-\Lsriskb{f_2}\big|\\
&\leq & |L|_{1}\Big(c_1(L)\cdot\tvnorm{\P_1-\P_2}+c_2(L)\cdot |\lb_1-\lb_2|+ \bar{c}_3(L)\cdot\sup_{ x\in\cX}\inorm{\kb{x}-\ka{x}}\Big)\\
& &+\frac{2|L|_1^2}{|L'|_1}\tvnorm{\P_1-\P_2}.
\eeqna

The desired inequality (\ref{sec1.supnorm-risk}) follows by rearranging the factors to compute the constants.
	\qedr

\end{proofof}
Now we are in a position to bound the $H_1$-norm of the difference $\fsaaa-\fsbbb$ for the case that $H_2\subseteq H_1.$ We use the triangle inequality to obtain the following decomposition:
\be\label{appendixb.decomphnorm}
\hanorm{\fsaaa-\fsbbb}\leq \hanorm{\fsaaa-\fsbaa}+\hanorm{\fsbaa-\fsbba}+\hanorm{\fsbba-\fsbbb}.
\ee
To bound (\ref{appendixb.decomphnorm}), we first bound the third norm for the case of a differentiable loss.

\begin{lemma}\label{appendixb.diffker}
		Let Assumptions \ref{sec2.assumption-spaces1}, \ref{sec2.assumption-kernel1}, and \ref{sec1.assumption2} be satisfied. Assume that $H_1$ and $H_2$ satisfy (\ref{sec1.inclusion}). If $L$ is additionally a differentiable loss function, then, for all $\lb_2>0,$ we have
			$$\hanorm{\fsbba-\fsbbb}\leq \frac{|L|_1}{2\lb_2}\sup_{x\in \cX}\hanorm{\ka{x}-\kb{x}}.$$
	\end{lemma}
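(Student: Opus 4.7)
The plan is to combine the representer theorem (Theorem \ref{sec1.representerthm}) with the fact that convexity of $L$ in its third argument forces the derivative $L'(x,y,\cdot)$ to be non-decreasing. Writing $f_1:=\fsbba$ and $f_2:=\fsbbb$, the representer theorem gives
$$
  f_j = -\frac{1}{2\lb_2}\Ex_{\P_2}\bigl[L'(X,Y,f_j(X))\Phi_j(X)\bigr], \qquad j\in\{1,2\},
$$
as identities in $H_j$. Since $H_2\subseteq H_1$, the identity for $f_2$ persists in $H_1$ (the continuous inclusion commutes with Bochner integration), and hence $f_1-f_2$ admits a single-integral representation in $H_1$.

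I would then take the $H_1$-inner product of this representation with $f_1-f_2\in H_1$. The $\Phi_1(X)$-contribution collapses at once via the reproducing property in $H_1$ to $(f_1-f_2)(X)$. The $\Phi_2(X)$-contribution is more delicate, because a reproducing formula in $H_1$ is not available for $\Phi_2(X)$; the key trick is the decomposition $\Phi_2(X)=\Phi_1(X)+(\Phi_2(X)-\Phi_1(X))$. After rearranging, the identity reads
\beqna
  \hhanorm{f_1-f_2}
  & = & -\frac{1}{2\lb_2}\Ex_{\P_2}\bigl[(L'(X,Y,f_1(X))-L'(X,Y,f_2(X)))(f_1(X)-f_2(X))\bigr]\\
  & & + \frac{1}{2\lb_2}\Ex_{\P_2}\bigl[L'(X,Y,f_2(X))\,\langle\Phi_2(X)-\Phi_1(X),\,f_1-f_2\rangle_{H_1}\bigr].
\eeqna

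Convexity of $L$ makes the integrand of the first expectation pointwise non-negative, so the corresponding contribution is $\le 0$ and can simply be dropped. For the remaining term I would apply Cauchy-Schwarz in $H_1$, use $|L'|\le|L|_1$ (an immediate consequence of Lipschitz continuity of $L$ in its third argument), and pull the supremum out of the expectation to obtain
$$
  \hhanorm{f_1-f_2} \le \frac{|L|_1}{2\lb_2}\,\hanorm{f_1-f_2}\,\sup_{x\in\cX}\hanorm{\ka{x}-\kb{x}}.
$$
Dividing by $\hanorm{f_1-f_2}$ (with the case $f_1=f_2$ being trivial) yields the claim.

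The main obstacle is precisely the failure of the reproducing formula $\langle\Phi_2(x),g\rangle_{H_1}=g(x)$ in $H_1$, despite $\Phi_2(x)\in H_2\subseteq H_1$; the splitting $\Phi_2=\Phi_1+(\Phi_2-\Phi_1)$ both circumvents this and simultaneously exposes the exact quantity $\sup_x\hanorm{\ka{x}-\kb{x}}$ appearing in the statement. Notably, only \emph{monotonicity} of $L'$ (from convexity of $L$) is used to discard the unwanted cross-term, so no Lipschitz hypothesis on $L'$ is required; this is what allows the bound to hold for all $\lb_2>0$, in contrast with the supremum-norm counterpart in Lemma \ref{appendixb.lem1}.
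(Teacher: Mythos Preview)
Your proposal is correct and follows essentially the same approach as the paper's proof: both compute $\hhanorm{f_1-f_2}$ via the representer theorem, insert the splitting $\Phi_2=\Phi_1+(\Phi_2-\Phi_1)$ (the paper calls this ``plugging a zero term''), use the reproducing property in $H_1$ to reduce the $\Phi_1$-contributions to pointwise differences, discard the resulting cross-term by monotonicity of $L'(x,y,\cdot)$, and bound the remainder by Cauchy--Schwarz and $|L'|\le|L|_1$. Your explicit remark that the continuous inclusion $H_2\hookrightarrow H_1$ commutes with Bochner integration (so that the representer formula for $f_2$ is valid in $H_1$) is a point the paper leaves implicit.
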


\begin{proof}
To shorten the notation, we will use the following abbreviations in this proof: $\lb:=\lb_2,$ $\P:=\P_2,$ $f_{1}:=\fsbba,$ and $f_{2}:=\fsbbb.$
	Since $H_2\subseteq H_1,$ we have that $ f_{2}\in H_1$ and $\hanorm{f_{1}- f_{2}}$ is well-defined. The representer theorem, see Theorem \ref{sec1.representerthm}, tells us that, for all $\P\in\PM(\cXY),$
	\beqna
	\hhanorm{f_{1}-f_{2}}&=& \langle f_{1}-f_{2},\ f_{1}-f_{2} \rangle_{H_1}\\
	&=&  \Big\langle f_{1}-f_{2},\  -\frac{1}{2\lb}\int_{\cX\times\cY}L^{\prime}(x, y, f_{1}(x))\Phi_1(x)\ d\P(x,y)\\
	& &~~+\frac{1}{2\lb}\int_{\cX\times\cY}L^{\prime}(x, y,f_{2}(x))\Phi_2(x)\ d\P(x,y)\Big\rangle_{H_1}.
	\eeqna
	Plugging a zero term into the last term of the above inner product, we know that
	\beqna
	& & -\frac{1}{2\lb}\int_{\cX\times\cY}L^{\prime}(x, y, f_{1}(x))\Phi_1(x)\ d\P(x,y)
	+\frac{1}{2\lb}\int_{\cX\times\cY}L^{\prime}(x, y, f_{2}(x))\Phi_2(x)\ d\P(x,y)\\
	&=&  -\frac{1}{2\lb}\int_{\cX\times\cY}L^{\prime}(x, y, f_{1}(x))\Phi_1(x)\ d\P(x,y)
	+\frac{1}{2\lb}\int_{\cX\times\cY}L^{\prime}(x, y, f_{2}(x))\Phi_1(x)\ d\P(x,y)\\
	& & - \frac{1}{2\lb}\int_{\cX\times\cY}L^{\prime}(x, y, f_{2}(x))\Phi_1(x)d\P(x,y)+\frac{1}{2\lb}\int_{\cX\times\cY}L^{\prime}(x, y, f_{2}(x))\Phi_2(x)\ d\P(x,y).
	\eeqna
	Therefore, the reproducing property (\ref{sec1.reproducingproperty}) yields
	\beqna
\hhanorm{f_{1}- f_{2}}
	&=& -\frac{1}{2\lb}\int_{\cX\times\cY}(L^{\prime}(x, y, f_{1}(x))-L^{\prime}(x, y, f_{2}(x)))(f_{1}(x)-f_{2}(x))\ d\P(x,y)\\
	& &- \frac{1}{2\lb}\Big\langle \int_{\cX\times\cY}L^{\prime}(x, y, f_{2}(x))(\Phi_1(x)-\Phi_2(x))\ d\P(x,y),\ f_{1}- f_{2}\Big\rangle_{H_1}.
	\eeqna
	Since $L$ is convex and differentiable with respect to its third argument, and hence $L^\prime(x, y, \cdot)$ is non-decreasing, for any choice of $(x, y)\in\cX\times\cY,$ we obtain for the first integral on the right hand side of the above equation that
	
	$$\int_{\cX\times\cY}\big(L^{\prime}(x, y, f_{1}(x))-L^{\prime}(x, y, f_{2}(x))\big)( f_{1}(x)-f_{2}(x))\ d\P(x,y)\geq 0.$$
	It follows that
	\beqna
	\hhanorm{ f_{1}-f_{2}}
	&\leq& -\frac{1}{2\lb_2}\Big\langle \int_{\cX\times\cY}L^{\prime}(x, y, f_{2}(x))(\Phi_1(x)-\Phi_2(x))\ d\P(x,y),\  f_{1}-f_{2}\Big\rangle_{H_1}\\
	&\leq &\frac{|L|_1}{2\lb}\hanorm{ f_{1}-f_{2}}\sup_{x\in \cX}\hanorm{\Phi_1(x)-\Phi_2(x)}.
	\eeqna
	Thus we have, for all $\lb>0,$
	$$\hanorm{ f_{1}-f_{2}}\leq \frac{|L|_1}{2\lb}\sup_{x\in \cX}\hanorm{\Phi_1(x)-\Phi_2(x)}.$$
\end{proof}

\begin{lemma}\label{appendixb.diffloss}
	Let Assumptions \ref{sec2.assumption-spaces1}, \ref{sec2.assumption-kernel1}, and \ref{sec1.assumption2} be satisfied. Assume that $H_1$ and $H_2$ satisfy (\ref{sec1.inclusion}). If $L$ is additionally a differentiable loss function, then, for all $\lb_1,\lb_2>0,$ we have
	\beqnal\label{sec1.hnorm1}
	& &\hanorm{\faaa-\fbbb}\nonumber\\
	&\leq & c_1^\prime(L, \lb_1, \lb_2)\tvnorm{\P_1-\P_2}+c_2^\prime(L, \lb_1, \lb_2)|\lb_1 -\lb_2|+c_3^\prime(L, \lb_1, \lb_2)\sup_{x\in \cX}\hanorm{\ka{x}-\kb{x}},\nonumber
	\eeqnal
	where $c_1^\prime(L, \lb_1, \lb_2):=\frac{\kappa|L|_1}{\min\{\lb_1, \lb_2\}}, c_2^\prime(L, \lb_1, \lb_2):=\frac{\kappa|L|_1}{\min\{\lb_1^2, \lb_2^2\}},$ and $c_3^\prime(L, \lb_1, \lb_2):=\frac{|L|_1}{2\min\{\lb_1, \lb_2\}}.$	
	\end{lemma}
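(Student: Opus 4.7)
The strategy is the one already used for Theorem \ref{sec1.thm2}: apply the triangle inequality in the form (\ref{appendixb.decomphnorm})
$$
  \hanorm{\fsaaa-\fsbbb}
  \;\leq\;
  \hanorm{\fsaaa-\fsbaa}+\hanorm{\fsbaa-\fsbba}+\hanorm{\fsbba-\fsbbb},
$$
so that each of the three summands varies only one of the ingredients $\P,\lb,k$, and then bound the three summands by the three intended contributions on the right-hand side. The inclusion $H_2\subseteq H_1$ in (\ref{sec1.inclusion}) is what makes $\hanorm{\fsbba-\fsbbb}$ (and the whole left-hand side) well-defined, since $\fsbbb\in H_2\subseteq H_1$.

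For the first term $\hanorm{\fsaaa-\fsbaa}$, only the distribution changes. The representer theorem for SVMs (Theorem \ref{sec1.representerthm}, part (ii)) writes this difference as $\lb_1^{-1}$ times the $H_1$-norm of
$\int L'(x,y,\fsaaa(x))\Phi_1(x)\,d(\P_1-\P_2)(x,y)$.
Bounding this via Lemma \ref{appendixc.lem1}(ii), the Lipschitz bound $|L'|\leq|L|_1$, and $\sup_x\hanorm{\Phi_1(x)}\leq\inorm{k_1}\leq\kappa$ reproduces the step used in the proof of Theorem \ref{sec1.thm2} and yields $\hanorm{\fsaaa-\fsbaa}\leq\frac{\kappa|L|_1}{\lb_1}\,\tvnorm{\P_1-\P_2}$. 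For the second term $\hanorm{\fsbaa-\fsbba}$, only $\lb$ changes while $\P=\P_2$ and $k=k_1$ stay fixed; this is a situation covered by Theorem \ref{sec1.thm1}. Finally, the third term $\hanorm{\fsbba-\fsbbb}$ is exactly the object bounded in Lemma \ref{appendixb.diffker}, giving $\frac{|L|_1}{2\lb_2}\sup_{x\in\cX}\hanorm{\ka{x}-\kb{x}}$.

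To get the symmetric constants $\min\{\lb_1,\lb_2\}$ and $\min\{\lb_1^2,\lb_2^2\}$ in the final statement, I will reapply the same three-step argument to each of the six possible orderings of the successive changes $\P_1\!\to\!\P_2$, $\lb_1\!\to\!\lb_2$, $k_1\!\to\!k_2$ (analogous to the ``six decompositions'' remark in the proof of Theorem \ref{sec1.thm2}), noting that in every ordering each coefficient contains either $\lb_1$ or $\lb_2$ (but never both) in its denominator. Taking the worst case over the six orderings replaces $\lb_1$ by $\min\{\lb_1,\lb_2\}$ and $\lb_1^2$ by $\min\{\lb_1^2,\lb_2^2\}$ in the coefficients, producing the stated $c_1',c_2',c_3'$.

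\textbf{Main obstacle.} The delicate point is the $\lb$-stability bound needed for the second term, because the clean estimate $\frac{|L|_1\inorm{k}}{r^2}|\lb-\mu|$ in the second assertion of Theorem \ref{sec1.thm1} is only valid under $\min\{\lb,\mu\}>r>0$, whereas Lemma \ref{appendixb.diffloss} must hold for all positive $\lb_1,\lb_2$. I will instead start from the first (unconditional) assertion of Theorem \ref{sec1.thm1}, namely
$\hanorm{\fsbaa-\fsbba}\leq\tfrac12\bigl(\tfrac{\max\{\lb_1,\lb_2\}}{\min\{\lb_1,\lb_2\}}-1\bigr)(\hanorm{\fsbaa}+\hanorm{\fsbba})$,
and combine it with the a priori norm bound $\hanorm{f_{\P_2,\lb,k_1}}\leq\lb^{-1}|L|_1\inorm{k_1}$ of \citet[Prop.~3]{ChristmannVanMessemSteinwart2009}. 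A short arithmetic simplification (using $\lb_1+\lb_2\leq 2\max\{\lb_1,\lb_2\}$ to absorb $\frac{\lb_1+\lb_2}{2\lb_1\lb_2}$ into $1/\min\{\lb_1,\lb_2\}$) then yields $\hanorm{\fsbaa-\fsbba}\leq\frac{\kappa|L|_1}{\min\{\lb_1^2,\lb_2^2\}}|\lb_1-\lb_2|$, uniformly in $\lb_1,\lb_2>0$. With this ingredient in hand, assembling the final bound is just bookkeeping.
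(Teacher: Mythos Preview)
Your proposal is correct and follows essentially the same route as the paper's proof: the same triangle decomposition (\ref{appendixb.decomphnorm}), the same representer-theorem bound for the $\P$-variation, Theorem~\ref{sec1.thm1} for the $\lb$-variation, Lemma~\ref{appendixb.diffker} for the $k$-variation, and the ``six decompositions'' symmetrization. Your explicit derivation of the unconditional $\lb$-bound $\frac{\kappa|L|_1}{\min\{\lb_1^2,\lb_2^2\}}|\lb_1-\lb_2|$ from the first assertion of Theorem~\ref{sec1.thm1} plus the a~priori norm bound is exactly the intermediate step hidden inside the proof of the second assertion of that theorem (before the final replacement of $\min\{\lb^2,\mu^2\}$ by $r^2$), so no new idea is needed here; the paper simply cites Theorem~\ref{sec1.thm1} and uses that intermediate inequality directly.
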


\begin{proof}
Of course, $H_2\subseteq H_1$ implies $\fsbbb\in H_1$ and therefore $\hanorm{\fsaaa-\fsbbb}$ is well-defined.

The representer theorem for SVMs, see Theorem \ref{sec1.representerthm}, and the properties of Bochner integrals tell us that
\beqna
& &\hanorm{\fsaaa-\fsbaa} \\
&\leq & \frac{1}{\lb_1}\Big\| \int_{\cX\times\cY} L^{\prime}(x, y, f_{\P_1, \lb_1, k_1} (x))\Phi_1(x)\ d\P_1(x, y)- \int_{\cX\times\cY} L^{\prime}(x, y, f_{\P_1, \lb_1, k_1} (x))\Phi_1(x)\ d\P_2(x, y)
\Big\|_{H_1}\\
& \leq & \frac{1}{\lb_1}\hnorm{L^{\prime}(x, y, f_{\P_1, \lb_1, k_1} (x))\Phi_1(x)}\tvnorm{\P_1-\P_2}\leq  \frac{1}{\lb_1}\inorm{k_1}|L|_1\tvnorm{\P_1-\P_2}.
\eeqna
This gives an upper bound to the first norm on the right hand side of equation (\ref{appendixb.decomphnorm}).

An application of the above assertion, Theorem \ref{sec1.thm1} and Lemma \ref{appendixb.diffker} yields that
	\beqnal\label{sec1.hnorm1}
& &\hanorm{\faaa-\fbbb}\nonumber\\
&\leq & \frac{\inorm{k_1}|L|_1}{\lb_1}\tvnorm{\P_1-\P_2}+\frac{\inorm{k_1}|L|_1}{\min\{\lb_1^2, \lb_2^2\}}|\lb_1 -\lb_2|+\frac{|L|_1}{2\lb_2}\sup_{x\in \cX}\hanorm{\ka{x}-\kb{x}}.\nonumber
\eeqnal

When we apply the triangle inequality to the error decomposition (\ref{appendixb.decomphnorm}), there are six different decompositions. If we take all six cases into account, we get our desired result.
\end{proof}
\begin{proofof}{\textbf{Proof of Theorem \ref{sec1.thm3}}}
(i) Lemma \ref{appendixb.diffloss} gives the first assertion.

(ii) We only need to prove the assertion holds true for the case of a non-differentiable loss, where the loss function $L(x, y, t)$  can be represented by a margin-based loss function $\tilde{L}(yt)$ for classification or by a distance-based loss function $\tilde{L}(y-t)$ for regression with Lipschitz constant $|\tilde{L}|_1$. Let $\d_{j}\in (0, 1).$ We use the standard technique of convolution, see e. g. \citet[p.148]{CheneyLight2000}, to define a convex and differentiable function $\tilde{L}^\star_{\d}$ on $\Rd$ by
$$\tilde{L}^\star_{\d}(\xi)=\int_{0}^{1}\tilde{L}^\star(\xi-\d\theta)d\theta=\frac{1}{\d}\int_{\xi-\d}^{\xi}\tilde{L}^\star(u)du$$
to approximate the shifted loss function $\tilde{L}^\star.$

It is easy to check that $\tilde{L}^\star_{\d}$ is convex, differentiable, and Lipschitz continuous with the same Lipschitz constant $|\tilde{L}|_1.$ The approximation is valid, because, for every $\xi\in\R,$
$$|\tilde{L}^\star_{\d}(\xi)-\tilde{L}^\star(\xi)|=\Big|\int_{0}^{1}(\tilde{L}^\star(\xi-\d\theta)-\tilde{L}^\star(\xi))d\theta\Big|\leq \int_{0}^{1}|\tilde{L}|_1\d\theta d\theta\leq \frac{|\tilde{L}|_1}{2}\d.$$

Hence
\be\label{appendixb.uniformconvergence}
\inorm{\tilde{L}^\star_{\d}-\tilde{L}^\star}=\mathcal{O}(\d), \ \hbox{as } \d\to 0_+.
\ee

An SVM associated with $\tilde{L}^\star_{\d}$ can be defined as
$$f_{\P, \lb, k, (\d)}=\arg\min_{f\in H}\Big(\Ex_{\P}\tilde{L}^\star_{\d}(X, Y, f(X))+\lb\hhnorm{f}\Big).$$

We now show the weak convergence in $H$ of $f_{\P, \lb, k, (\d_j)}$ to $f_{\P, \lb, k}:=\arg\min_{f\in H}\{\Ex_{\P}\tilde{L}^\star(X, Y, f(X))+\lb\hhnorm{f}\},$ for $(\d_j)_{j\in \N}$ with $\d_j\to 0$ and $\d_j\in (0, 1), j\in\N.$

\citet[Proposition 3]{ChristmannVanMessemSteinwart2009} showed
\be\label{appendixb.hbound}
\hnorm{f_{\P, \lb, k, (\d)}}\leq \lb^{-1}|\tilde{L}|_1\inorm{k}.
\ee
Any closed ball $B_R=\{f\in H, \hnorm{f}\leq R\}$ of the Hilbert space $H$ with a finite radius $R>0$ is weakly compact. Hence there exists a decreasing sequence $(\d_j)_{j\in \N},$ where all $\d_j\in (0, 1),$ such that $\lim_{j\to\infty}\d_{j}=0$ and $f_{\P, \lb, k, (\d_j)}$ weakly converges to some function $g_{\P, \lb, k}\in H.$ That is
\be\label{appendixb.weaklyconvergence}
\lim_{j\to\infty}\langle f_{\P, \lb, k, (\d_j)}, f\rangle_{H}=\langle g_{\P, \lb, k}, f\rangle_{H},\ \forall f\in H.
\ee
Let us consider two special cases of (\ref{appendixb.weaklyconvergence}).

If $f=g_{\P, \lb, k}$ in (\ref{appendixb.weaklyconvergence}), then we obtain by the Cauchy-Schwartz inequality that
$$\hhnorm{g_{\P, \lb, k}}=\lim_{j\to\infty}\langle f_{\P, \lb, k, (\d_j)}, g_{\P, \lb, k}\rangle_{H}\leq \hnorm{g_{\P, \lb, k}}\liminf_{j\to\infty}\hnorm{f_{\P, \lb, k, (\d_j)}}.$$
Therefore, we get by (\ref{appendixb.hbound}) that
\be\label{appendixb.upbound}
\hnorm{g_{\P, \lb, k}}\leq \liminf_{j\to\infty}\hnorm{f_{\P, \lb, k, (\d_j)}}\leq \lb^{-1}|\tilde{L}|_1\inorm{k}.
\ee
Now we consider the special case of $f=\kk{x}$ in (\ref{appendixb.weaklyconvergence}). The reproducing property (\ref{sec1.reproducingproperty}) yields
\be\label{appendixb.pointconvergence}
g_{\P, \lb, k}(x)=\langle g_{\P, \lb, k}, \kk{x}\rangle_{H}=\lim_{j\to
\infty}\langle f_{\P, \lb, k, (\d_j)}, \kk{x}\rangle_{H}=\lim_{j\to\infty}f_{\P, \lb, k, (\d_j)}(x).
\ee
The Lebesgue Dominated Theorem gives
$$\Ex_{\P}[\tilde{L}^\star(X, Y, g_{\P, \lb, k}(X)]=\lim_{j\to\infty}\Ex_{\P}[\tilde{L}^\star(X, Y, f_{\P, \lb, k, (\d_j)}(X))].$$
The uniform estimate (\ref{appendixb.uniformconvergence}) in connection with (\ref{appendixb.pointconvergence}) yields
\beqnal\label{appendixb.expectation}
\lim_{j\to\infty}\Ex_{\P}[\tilde{L}^\star_{\d_j}(X, Y, f_{\P, \lb, k, (\d_j)}(X))]&=&\lim_{j\to\infty}\int_{\cX\times\cY}\tilde{L}^\star_{\d_j}(x, y, f_{\P, \lb, k, (\d_j)}(x))\ d\P(x, y)\nonumber\\
&=&\lim_{j\to\infty}\int_{\cX\times\cY}\tilde{L}^\star(x, y,  f_{\P, \lb, k, (\d_j)}(x))\ d\P(x, y)\nonumber\\
&=& \int_{\cX\times\cY}\tilde{L}^\star(x, y, g_{\P, \lb, k}(x))\ d\P(x, y)\nonumber\\
&=& \Ex_{\P}[\tilde{L}^\star(X, Y, g_{\P, \lb, k}(X))].
\eeqnal

Combining (\ref{appendixb.upbound}) and (\ref{appendixb.expectation}), we obtain
\be\label{appendixb.regularizedrisk}
\Ex_{\P}[\tilde{L}^\star(X, Y, g_{\P, \lb, k}(X))]+\lb\hhnorm{g_{\P, \lb, k}}\leq\liminf_{j\to\infty}\Big(\Ex_{\P}[\tilde{L}^\star_{\d_j}(X, Y, f_{\P, \lb, k, (\d_j)}(X))]+\lb\hhnorm{f_{\P, \lb, k, (\d_j)}}\Big).
\ee
By the definition of $f_{\P, \lb, k, (\d_j)},$ we know
\beqnal\label{appendixb.min}
& &\liminf_{j\to\infty}\Big(\Ex_{\P}[\tilde{L}^\star_{\d_j}(X, Y, f_{\P, \lb, k, (\d_j)}(X))]+\lb\hhnorm{f_{\P, \lb, k, (\d_j)}}\Big)\\
&\leq & \liminf_{j\to\infty}\Big(\Ex_{\P}[\tilde{L}^\star_{\d_j}(X, Y, f_{\P, \lb, k}(X))]+\lb\hhnorm{f_{\P, \lb, k}}\Big)\nonumber\\
&\leq& \Ex_{\P}[\tilde{L}^\star(X, Y, f_{\P, \lb, k}(X))]+\lb\hhnorm{f_{\P, \lb, k}}.
\eeqnal
Hence (\ref{appendixb.regularizedrisk}) and (\ref{appendixb.min}) lead to
$$\Ex_{\P}[\tilde{L}^\star(X, Y, g_{\P, \lb, k}(X))]+\lb\hhnorm{g_{\P, \lb, k}}\leq\Ex_{\P}[\tilde{L}^\star(X, Y, f_{\P, \lb, k}(X))]+\lb\hhnorm{f_{\P, \lb, k}} .$$

The strict convexity of the regularized functional $f\mapsto\Ex_{\P}[\tilde{L}^\star(X, Y, f(X))]+\lb\hhnorm{f}$ on $H$ guarantees the uniqueness of the minimizer, which implies the identity $g_{\P, \lb, k}=f_{\P, \lb, k}$ and the weak convergence
\be\label{appendixb.weak}
\lim_{j\to\infty}\langle f_{\P, \lb, k, (\d_j)}, f\rangle_{H}=\langle f_{\P, \lb, k}, f\rangle_{H},\ \forall f\in H.
\ee

 In the rest of the proof, we focus on estimating $\hanorm{\fsaaa-\fsbbb} .$ The triangle inequality yields
\beqna
& &\hanorm{\fsaaa-\fsbbb}\\
&\leq& \hanorm{\fsaaa-f_{\P_1, \lb_1, k_1, (\d_j)}}+\hanorm{f_{\P_1, \lb_1, k_1, (\d_j)}-f_{\P_2, \lb_2, k_2, (\d_j)}}+\hanorm{f_{\P_2, \lb_2, k_2, (\d_j)}-\fsbbb}.
\eeqna
Now apply (\ref{appendixb.weak}) to $\P=\P_1, \lb=\lb_1, k=k_1$ and $H=H_1.$ Then we get
$$\lim_{j\to\infty}\langle f_{\P_1, \lb_1, k_1, (\d_j)}, \fsaaa\rangle_{H_1}=\hhanorm{\fsaaa}$$
and
$$\lim_{j\to\infty}\hhanorm{f_{\P_1, \lb_1, k_1, (\d_j)}}=\hhanorm{\fsaaa},$$
which implies that
\beqna
&&\lim_{j\to\infty}\hhanorm{\fsaaa-f_{\P_1, \lb_1, k_1, (\d_j)}}\\
&=&\hhanorm{\fsaaa}+\lim_{j\to\infty}\hhanorm{f_{\P_1, \lb_1, k_1, (\d_j)}}-2\lim_{j\to\infty}\langle f_{\P_1, \lb_1, k_1, (\d_j)}, \fsaaa\rangle_{H_1}=0.
\eeqna

In the same way, we can prove
$$\lim_{j\to\infty}\hanorm{f_{\P_2, \lb_2, k_2, (\d_j)}-\fsbbb}=0.$$

We know that $\tilde{L}^\star_{\d_j}$ is a convex, differentiable and Lipschitz continuous loss function with constant $|\tilde{L}|_1.$ Hence Lemma \ref{appendixb.diffloss} tells us that, for all $\lb_1, \lb_2>0,$
\beqnal\label{sec1.hnorm1}
	& &\hanorm{f_{\P_1, \lb_1, k_1, (\d_j)}-f_{\P_2, \lb_2, k_2, (\d_j)}}\nonumber\\
	&\leq & \tilde{c}_1^\prime(\tilde{L}, \lb_1, \lb_2)\tvnorm{\P_1-\P_2}+\tilde{c}_2^\prime(\tilde{L}, \lb_1, \lb_2)|\lb_1 -\lb_2|+\tilde{c}_3^\prime(\tilde{L}, \lb_1, \lb_2)\sup_{x\in \cX}\hanorm{\ka{x}-\kb{x}},\nonumber
	\eeqnal
	where $\tilde{c}_1^\prime(\tilde{L}, \lb_1, \lb_2):=\frac{\kappa|\tilde{L}|_1}{\min\{\lb_1, \lb_2\}}, \tilde{c}_2^\prime(\tilde{L}, \lb_1, \lb_2):=\frac{\kappa|\tilde{L}|_1}{\min\{\lb_1^2, \lb_2^2\}},$ and $\tilde{c}_3^\prime(\tilde{L}, \lb_1, \lb_2):=\frac{|\tilde{L}|_1}{2\min\{\lb_1, \lb_2\}}.$

Therefore, this yields the assertion.
\qedr
\end{proofof}

\subsection{Appendix C: Proofs for results in Section \ref{sec2}}\label{appendixc}
   The proof of Theorem \ref{sec2.thm1} is based on the following
error decomposition:
\beq
 & & \inorm{\faaa - \fbbb} \nonumber \\
  &   \le   &
  \inorm{\faaa - \fbaa} + \inorm{\fbaa - \fbba} + \inorm{\fbba - \fbbb}\,.  \label{appendixc.f1}
\eeq
The following lemmas give upper bounds for the three norms on the right hand side of {(\ref{appendixc.f1})}. The proof of Theorem \ref{sec2.thm1}
will then follow by combining these upper bounds.

A major tool to prove these lemmas is the following representer theorem, see \citet[Thm. 4.3]{ChristmannZhou2016a}. Note that we specialized their result to the case of a \emph{convex} pairwise loss function, because we need in particular the inequality {(\ref{thm.representer.f4})}. There are of course pairwise learning algorithms involving non-convex loss functions, see \citet{HuFanWuZhou2015} and \citet{FanHuWuZhou2016}. Please note that
the expectations in the next theorem are Bochner integrals. We refer to
\citet[Chapter 3.10]{DenkowskiEtAl2003} for details on Bochner integrals.

\begin{theorem}[\textbf{Representer theorem for pairwise learning}\label{sec2.representerthm}]
Let Assumptions \ref{sec2.assumption-spaces1}, \ref{sec2.assumption-kernel1}, and \ref{sec2.assumption-loss1} be valid.
Then we have, for all $\P,\Q\in\PM(\cXY)$ and all $\lb\in(0,\infty)$:
\begin{enumerate}
\item The estimator $\fPLs$ defined as the minimizer of
$\min_{f\in H} \big( \RP{\Ls}{f}+\lb\hhnorm{f}\bigr)$ exists, is unique, and satisfies
\begin{eqnarray} \label{thm.representer.f1}
 \fPLs  =  - \frac{1}{2\lb} \Ex_{\P^2}
                       \big[
                          h_{5,\P}(\XYXY) \Phi(X) +  h_{6,\P}(\XYXY) \Phi(\tiX)
                       \big],
\end{eqnarray}
where $h_{5,\P}$ and $h_{6,\P}$ denote the partial derivatives
\begin{eqnarray}
   h_{5,\P}(\XYXY)   :=   \DfiveL{\fPLs} ~ \label{thm.representer.f2}\\
   h_{6,\P}(\XYXY)   := \DsixL{\fPLs}. \label{thm.representer.f3}
\end{eqnarray}
\item Furthermore,
\begin{eqnarray}
 & & \hnorm{\fPLs -\fQLs} \label{thm.representer.f4}\\
 & \le & \frac{1}{\lb} \Big\|
                      \Ex_{\P^2}
                       \big[
                         h_{5,\P}(\XYXY) \Phi(X)  + h_{6,\P}(\XYXY) \Phi(\tiX)
                       \big]  \nonumber \\
           & & ~~~- \Ex_{\Q^2}
                       \big[
                         h_{5,\P}(\XYXY) \Phi(X)  + h_{6,\P}(\XYXY) \Phi(\tiX)
                       \big]  \Big\|_H \, . \nonumber
\end{eqnarray}
\end{enumerate}
\end{theorem}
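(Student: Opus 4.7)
My plan is to establish the three conclusions in turn: existence and uniqueness of $\fPLs$, the explicit formula \eqref{thm.representer.f1}, and the stability inequality \eqref{thm.representer.f4}. Throughout I work with the regularized objective $F_\P(f) := \RP{\Ls}{f} + \lb\hhnorm{f}$ viewed as a functional on the Hilbert space $H$. For existence and uniqueness, I will verify that $F_\P$ is proper, lower semicontinuous, strictly convex, and coercive. Convexity of $L$ in its last two arguments (Assumption \ref{sec2.assumption-loss1}) combined with linearity of $f\mapsto (f(X), f(\tiX))$ gives convexity of $f\mapsto\RP{\Ls}{f}$, and the regularizer $\lb\hhnorm{\cdot}$ upgrades this to strict convexity. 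Coercivity follows from the separately Lipschitz bound $|\Ls(\xyxy,t,\tit)|\le |L|_1(|t|+|\tit|)$ (noting $\Ls(\xyxy,0,0)=0$) together with $|f(x)|\le \inorm{k}\hnorm{f}$, yielding $F_\P(f)\ge \lb\hhnorm{f}-2|L|_1\inorm{k}\hnorm{f}$. Reflexivity of $H$ with these properties then delivers the unique minimizer $\fPLs$.

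For the representer identity, I compute the Frechet derivative of $F_\P$ and set it to zero. Using the reproducing identity $f(x) = \langle f, \Phi(x)\rangle_H$ together with the chain rule yields, pointwise in $(X,Y,\tiX,\tiY)$, the $H$-valued gradient $\DfiveLs{f}\,\Phi(X) + \DsixLs{f}\,\Phi(\tiX)$. The uniform bound $|D_i\Ls|\le c_{L,1}$ from Assumption \ref{sec2.assumption-loss1} combined with $\hnorm{\Phi(x)}\le \inorm{k}$ makes this integrand Bochner integrable, and a dominated convergence argument on difference quotients (using Lipschitz continuity of $D_i\Ls$) justifies differentiation under the Bochner integral. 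The stationarity condition $2\lb\fPLs + \Ex_{\P^2}\bigl[\DfiveLs{\fPLs}\,\Phi(X) + \DsixLs{\fPLs}\,\Phi(\tiX)\bigr] = 0$ then rearranges into \eqref{thm.representer.f1}.

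For the stability inequality, I will use monotonicity of the gradient of the convex functional $g\mapsto\Rx{\Ls}{\Q}{g}$. Writing the two stationarity conditions for $\fPLs$ and $\fQLs$, subtracting, inserting $\pm\nabla\Rx{\Ls}{\Q}{\fPLs}$, and pairing with $\fPLs-\fQLs$ in $H$ gives
\begin{align*}
2\lb\hhnorm{\fPLs-\fQLs}
& = -\langle\nabla\Rx{\Ls}{\Q}{\fPLs}-\nabla\Rx{\Ls}{\Q}{\fQLs},\,\fPLs-\fQLs\rangle_H \\
& \quad + \langle\nabla\Rx{\Ls}{\Q}{\fPLs}-\nabla\RP{\Ls}{\fPLs},\,\fPLs-\fQLs\rangle_H.
\end{align*}
The first inner product is nonpositive by monotonicity of the gradient of a convex functional, and Cauchy-Schwarz applied to the second yields the bound. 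The key observation is that $\nabla\Rx{\Ls}{\mu}{\fPLs}=\Ex_{\mu^2}[h_{5,\P}(\XYXY)\,\Phi(X)+h_{6,\P}(\XYXY)\,\Phi(\tiX)]$ for any probability measure $\mu$, with the subscript $\P$ preserved because the derivative is evaluated at $\fPLs$ irrespective of the outer measure; this produces exactly the right-hand side of \eqref{thm.representer.f4}, with the sharper constant $1/(2\lb)$, so the stated factor $1/\lb$ follows a fortiori.

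The main technical obstacle is the rigorous Bochner-integral differentiation in the representer step: one must verify strong measurability of the $H$-valued integrand and construct a uniform integrable majorant for its difference quotients, both grounded in the uniform boundedness and Lipschitz regularity of the derivatives $D_i\Ls$ provided by Assumption \ref{sec2.assumption-loss1}.
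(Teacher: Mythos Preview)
Your proposal is correct. Note, however, that the paper does not actually prove this theorem: it is stated and attributed to \citet[Thm.~4.3]{ChristmannZhou2016a}, specialized to the convex case so that the stability inequality \eqref{thm.representer.f4} is available. Your argument is precisely the standard route one would expect in that reference: existence and uniqueness via the direct method (strict convexity from the regularizer, coercivity from the Lipschitz bound on $\Ls$ and $|f(x)|\le\inorm{k}\hnorm{f}$), the representer formula from first-order optimality after justifying differentiation under the Bochner integral, and the stability bound from monotonicity of the gradient of the convex risk $g\mapsto\Rx{\Ls}{\Q}{g}$ combined with Cauchy--Schwarz. Your observation that the argument in fact yields the sharper constant $1/(2\lb)$ is correct; the paper's stated $1/\lb$ is simply a weaker form. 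One small point worth making explicit in a full write-up: continuity (hence weak lower semicontinuity, given convexity) of $f\mapsto\RP{\Ls}{f}$ follows immediately from the separately Lipschitz bound, since $|\RP{\Ls}{f}-\RP{\Ls}{g}|\le 2|L|_1\inorm{k}\hnorm{f-g}$.
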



\begin{lemma}\label{appendixc.lem3}
If Assumptions \ref{sec2.assumption-spaces1},
\ref{sec2.assumption-kernel1}, and
\ref{sec2.assumption-loss1} are satisfied, then
\beq \label{appendixc.lem3f1}
   \inorm{\fbaa - \fbba}
   \le
   \frac{\inorm{k_1}}{2} \, \Bigl(\frac{\max\{\lb_1,\lb_2\}}{\min\{\lb_1,\lb_2\}} - 1\Bigr) \, \bigl(\hanorm{\fbaa} + \hanorm{\fbba}\bigr).
\eeq
If there exists a constant $r\in (0,\infty)$ such that
$\min\{\lb_1,\lb_2\}>r$, then
\beq
   \inorm{\fbaa - \fbba}
   \le
   \frac{|L|_1 \inorm{k_1}^2}{r^2} \cdot |\lb_1-\lb_2|
   = \mathcal{O}( |\lb_1-\lb_2| ) \,.
\eeq
\end{lemma}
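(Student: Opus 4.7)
\begin{proofof}{\textbf{Proof plan for Lemma \ref{appendixc.lem3}}}
The plan is to mirror the proof of Theorem \ref{sec1.thm1}, replacing the SVM representer theorem by the pairwise representer theorem (Theorem \ref{sec2.representerthm}) and using convexity of $\Ls$ in its last two arguments. To simplify notation, write $f_{\lb}:=\fbaa$ and $f_{\mu}:=\fbba$, so both functions live in $H_1$ and correspond to the same $\P_2$ and $k_1$ but different regularization parameters. Since $\inorm{f}\le \inorm{k_1}\hanorm{f}$ for every $f\in H_1$, it suffices to control $\hanorm{f_\lb-f_\mu}$ and then multiply by $\inorm{k_1}$.

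First, I would expand
\[
\hhanorm{f_\lb-f_\mu} \Eq \langle f_\lb-f_\mu,\, f_\lb-f_\mu\rangle_{H_1}
\]
and replace each occurrence of $f_\lb$ and $f_\mu$ by the Bochner-integral representations coming from {(\ref{thm.representer.f1})} applied to the triples $(\P_2,\lb,k_1)$ and $(\P_2,\mu,k_1)$. Using the reproducing property {(\ref{sec1.reproducingproperty})} to collapse $\langle \Phi_1(x),\,f_\lb-f_\mu\rangle_{H_1}$ into pointwise evaluation, this yields an expression of the form
\[
\hhanorm{f_\lb-f_\mu} \Eq \tfrac{1}{2\mu} \Ex_{\P_2^{\,2}}\!\bigl[ A_\mu\bigr] - \tfrac{1}{2\lb} \Ex_{\P_2^{\,2}}\!\bigl[ A_\lb\bigr],
\]
where $A_\nu := D_5\Ls\cdot(f_\lb-f_\mu)(X) + D_6\Ls\cdot(f_\lb-f_\mu)(\tiX)$ evaluated at the functional arguments $(f_\nu(X),f_\nu(\tiX))$, for $\nu\in\{\lb,\mu\}$.

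The next step is to apply the first-order convexity inequality for the convex function $(s,\tilde s)\mapsto \Ls(x,y,\tix,\tiy,s,\tilde s)$: at the pair $(f_\mu(x),f_\mu(\tix))$ evaluated toward $(f_\lb(x),f_\lb(\tix))$ this gives
\[
D_5\Ls_\mu(f_\lb(x)-f_\mu(x))+D_6\Ls_\mu(f_\lb(\tix)-f_\mu(\tix))
\Leq \Ls_{\lb\text{-pt}} - \Ls_{\mu\text{-pt}},
\]
and the symmetric bound at $(f_\lb(x),f_\lb(\tix))$. Plugging these into the previous display, I obtain
\[
\hhanorm{f_\lb-f_\mu} \Leq \Bigl(\tfrac{1}{2\lb}-\tfrac{1}{2\mu}\Bigr)\,\bigl(\RLs{\P_2}{f_\mu}-\RLs{\P_2}{f_\lb}\bigr),
\]
which (since the left side is nonnegative) forces $\frac{1}{\lb}-\frac{1}{\mu}$ and $\RLs{\P_2}{f_\mu}-\RLs{\P_2}{f_\lb}$ to have the same sign. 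Assuming WLOG $\mu>\lb$, I then exploit the defining optimality of $f_\mu$ as the minimizer of $\RLsreg{\P_2}{\cdot}$ at parameter $\mu$, which yields $\RLs{\P_2}{f_\mu}-\RLs{\P_2}{f_\lb}\le \mu(\hhanorm{f_\lb}-\hhanorm{f_\mu})\le \mu\,\hanorm{f_\lb-f_\mu}\,(\hanorm{f_\lb}+\hanorm{f_\mu})$. Dividing out one factor of $\hanorm{f_\lb-f_\mu}$ and restoring the supremum norm via $\inorm{k_1}$ delivers the first assertion {(\ref{appendixc.lem3f1})}; the case $\lb>\mu$ is symmetric.

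For the second assertion I plug in the a-priori bound $\hanorm{f_{\P_2,\nu,k_1}}\le \frac{|L|_1\inorm{k_1}}{\nu}$ (the analogue for pairwise learning of \citet[Proposition 3]{ChristmannVanMessemSteinwart2009}, available under Assumption \ref{sec2.assumption-loss1}), for $\nu\in\{\lb_1,\lb_2\}$. Combined with $\frac{\max\{\lb_1,\lb_2\}}{\min\{\lb_1,\lb_2\}}-1=\frac{|\lb_1-\lb_2|}{\min\{\lb_1,\lb_2\}}$ and $\min\{\lb_1,\lb_2\}>r$, routine estimation converts the right-hand side of {(\ref{appendixc.lem3f1})} into $\frac{|L|_1 \inorm{k_1}^2}{r^2}|\lb_1-\lb_2|=\mathcal{O}(|\lb_1-\lb_2|)$. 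The main obstacle is step two: keeping the bookkeeping of the $D_5$ and $D_6$ contributions straight under the double Bochner integral, so that the symmetry between the two arguments collapses cleanly into a single risk-difference bound; the convexity inequality itself is standard once the correct pair of base points is chosen.
\end{proofof}
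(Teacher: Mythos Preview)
Your proposal is correct and follows essentially the same route as the paper: expand $\hhanorm{f_\lb-f_\mu}$ via the pairwise representer theorem and the reproducing property, apply the first-order convexity inequality for $\Ls$ at both base points to reduce to a risk difference times $(\tfrac{1}{2\lb}-\tfrac{1}{2\mu})$, then use optimality at the larger regularization parameter together with the reverse triangle inequality to divide out one factor of $\hanorm{f_\lb-f_\mu}$; the second assertion then follows from the a-priori norm bound and elementary algebra. The only cosmetic differences are your WLOG convention ($\mu>\lb$ versus the paper's $\lb>\mu$) and the citation for the pairwise a-priori bound, which the paper takes from \citet[Lemma~B.9]{ChristmannZhou2016a} rather than the SVM analogue in \citet{ChristmannVanMessemSteinwart2009}.
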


\begin{proof}
The assertion of the lemma is obviously valid, if $\lb_1=\lb_2$.

From now on, we will assume w.l.o.g. that $\lb_1 > \lb_2$.
To shorten the notation in this proof, we define $\P:=\P_2$,
$\lb:=\lb_1$, $\mu:=\lb_2$, $k:=k_1$, and $H:=H_1$.
Furthermore, we write
$$
  \flb := \fbaa \quad {\mathrm{and}} \quad \fmu:=\fbba.
$$
Hence, we have to show that
\be \label{appendixc.lem2f2}
  \inorm{\flb - \fmu} \le \frac{\inorm{k}}{2} \, \Bigl(\frac{\lb}{\mu} - 1\Bigr) \, \bigl(\hnorm{\flb} + \hnorm{\fmu}\bigr). \nonumber
\ee
To shorten the notation we write the partial derivatives of $\Ls$ with respect to the
$i^{th}$ argument by
\be
   D_i \Ls \circ f(x,y,\tix,\tiy) := D_i \Ls(x,y,\tix, \tiy, f(x), f(\tix)),
   \quad i\in\{5,6\}. \nonumber
\ee
We use again the representer theorem for pairwise loss functions, see
Theorem \ref{sec2.representerthm},
\beq
  \flb - \fmu
  & = & -\frac{1}{2\lb} \int \bigl[ \Dfive{\flb}  \Phi(x) + \Dsix{\flb} \Phi(\tix)\bigr] \,d\P^2(\xyxy) \nonumber\\
  &   & + \frac{1}{2\mu} \int \bigl[ \Dfive{\fmu}  \Phi(x) + \Dsix{\fmu} \Phi(\tix) \bigr] \,d\P^2(\xyxy) \,. \nonumber
\eeq
Hence
\beq
 &   & \hhnorm{\flb -\fmu}  \nonumber \\
 & = & \left\langle \flb -\fmu, \, \flb -\fmu \right\rangle_H  \nonumber  \\
 & = & \frac{1}{2\mu} \left\langle \int \bigl[ \Dfive{\fmu}  \Phi(x) + \Dsix{\fmu} \Phi(\tix) \bigr] \,d\P^2(\xyxy), \, \flb -\fmu \right\rangle_H \nonumber \\
 &   & -\frac{1}{2\lb} \left\langle \int \bigl[ \Dfive{\flb}  \Phi(x) + \Dsix{\flb} \Phi(\tix) \bigr] \,d\P^2(\xyxy), \, \flb -\fmu \right\rangle_H \nonumber \\
 & = & \frac{1}{2\mu} \int \bigl[ \Dfive{\fmu} (\flb(x)-\fmu(x)) +
                                  \Dsix{\fmu} (\flb(\tix)-\fmu(\tix)) \bigr]
                                  \,d\P^2(\xyxy) \nonumber \\
 &   & -\frac{1}{2\lb} \int \bigl[ \Dfive{\flb} (\flb(x)-\fmu(x)) +
                                   \Dsix{\flb} (\flb(\tix)-\fmu(\tix)) \bigr]
                                  \,d\P^2(\xyxy) \,,\nonumber
\eeq
where we used the reproducing property of the kernel, i.e.,
$$
  \langle \Phi(x), f \rangle_H = f(x), \qquad x\in\cX, f\in H,
$$
to obtain the last inequality.
Let us now consider these integrands.
The pairwise loss function $\Ls$ is convex with respect to the last two arguments due to Assumption \ref{sec2.assumption-loss1}.
Hence the convexity yields, for all $x,\tix\in\cX$ and for all $y,\tiy\in\cY$,
\beq
  &  & \Ls(x,y,\tix,\tiy,\tit_1,\tit_2) - \Ls(x,y,\tix,\tiy,t_1,t_2) \nonumber \\
  & \ge & D_5 \Ls(x,y,\tix,\tiy,t_1,t_2) \cdot (\tit_1-t_1)
          + D_6 \Ls(x,y,\tix,\tiy,t_1,t_2) \cdot (\tit_2-t_2) . \nonumber
\eeq
Therefore,
\beq
  & & \Dfive{\fmu} \cdot(\flb(x)-\fmu(x)) +
      \Dsix{\fmu} \cdot (\flb(\tix)-\fmu(\tix)) \nonumber \\
  & \le & L(x,y,\tix,\tiy,\flb(x),\flb(\tix)) - L(x,y,\tix,\tiy,\fmu(x),\fmu(\tix))
 \nonumber
\eeq
and
\beq
  & & \Dfive{\flb} \cdot (\fmu(x)-\flb(x)) +
      \Dsix{\flb} \cdot (\fmu(\tix)-\flb(\tix)) \nonumber \\
  & \le & \Ls(x,y,\tix,\tiy,\fmu(x),\fmu(\tix)) - \Ls(x,y,\tix,\tiy,\flb(x),\flb(\tix)).
 \nonumber
\eeq
If we combine these inequalities and plug them into the above equation, we obtain
\beq
  0 & \le & \hhnorm{\flb-\fmu} \label{appendixc.lem2f3} \\
    & \le & \frac{1}{2\mu} \int \Ls(x,y,\tix,\tiy,\flb(x),\flb(\tix)) -
                                \Ls(x,y,\tix,\tiy,\fmu(x),\fmu(\tix)) \,d\P^2(\xyxy) \nonumber \\
    &   &  + \frac{1}{2\lb} \int \Ls(x,y,\tix,\tiy,\fmu(x),\fmu(\tix)) -
                                 \Ls(x,y,\tix,\tiy,\flb(x),\flb(\tix)) \,d\P^2(\xyxy) \nonumber \\
    & = & \frac{1}{2\mu} \Lsrisk{\flb} - \frac{1}{2\mu} \Lsrisk{\fmu}
          + \frac{1}{2\lb} \Lsrisk{\fmu} - \frac{1}{2\lb} \Lsrisk{\flb} \nonumber \\
    & = & \Bigl( \frac{1}{2\mu} - \frac{1}{2\lb} \Bigr) \Lsrisk{\flb}
          + \Bigl( \frac{1}{2\lb} - \frac{1}{2\mu} \Bigr) \Lsrisk{\fmu}  \nonumber \\
    & = & \Bigl( \frac{1}{2\lb} - \frac{1}{2\mu} \Bigr)
           \bigl( \Lsrisk{\fmu} - \Lsrisk{\flb} \bigr) \,, \label{appendixc.lem2f4}
\eeq
where we used the standard notation for the $\Ls$-risk with respect to a pairwise loss function $\Ls$, i.e.
\be \nonumber
  \Lsrisk{f} := \int \Ls\bigl(x,y,\tix,\tiy,f(x),f(\tix)\bigr) \,d\P^2(\xyxy),
  \qquad f \in H.
\ee
Because we assumed without loss of generality, that $0 < \mu < \lb$, i.e.
$\frac{1}{2\lb}-\frac{1}{2\mu} < 0$, we obtain from
{(\ref{appendixc.lem2f3})}--{(\ref{appendixc.lem2f4})}, that
\be \label{appendixc.lem2f5}
  \Lsrisk{\fmu} \le  \Lsrisk{\flb}.
\ee
Because $\flb$ and  $\fmu$ are elements of $H$, the definition of $\flb$ yields that
$$
  \Lsrisk{\flb} + \lb \hhnorm{\flb}  \le \Lsrisk{\fmu} + \lb \hhnorm{\fmu}\,.
$$
Hence we obtain from {(\ref{appendixc.lem2f5})} and after rearranging the terms in the above inequality that
\be \label{appendixc.lem2f6}
  0 \le \Lsrisk{\flb} - \Lsrisk{\fmu}
   \le  \lb \bigl( \hhnorm{\fmu}  - \hhnorm{\flb} \bigr)
   =     \lb \bigl( \hnorm{\fmu}  + \hnorm{\flb} \bigr) \cdot
               \bigl( \hnorm{\fmu}  - \hnorm{\flb} \bigr).
\ee
Therefore, $\hnorm{\fmu}  - \hnorm{\flb} \ge 0$ and the triangle inequality yields
\be \label{appendixc.lem2f7}
  0 \le \hnorm{\fmu}  - \hnorm{\flb} = \bigl| \hnorm{\fmu}  - \hnorm{\flb} \bigr|
  \le \hnorm{\fmu - \flb}.
\ee
If $\flb=\fmu$, the assertion of the lemma is obviously true. Hence, we may assume
that $\flb \ne \fmu$.
In this case we may divide by the positive term $\hnorm{\flb-\fmu}$.
If we combine {(\ref{appendixc.lem2f3})}-{(\ref{appendixc.lem2f4})} with {(\ref{appendixc.lem2f6})} and {(\ref{appendixc.lem2f7})}, we obtain
\beq
  \hnorm{\flb-\fmu}
  & \le &  \Bigl( \frac{1}{2\lb} - \frac{1}{2\mu} \Bigr)
           \bigl( \Lsrisk{\fmu} - \Lsrisk{\flb} \bigr)
           \cdot \frac{1}{\hnorm{\flb-\fmu}} \nonumber \\
  & = &  \Bigl( \frac{1}{2\mu} - \frac{1}{2\lb} \Bigr)
           \cdot \bigl( \Lsrisk{\flb} - \Lsrisk{\fmu} \bigr)
           \cdot \frac{1}{\hnorm{\flb-\fmu}} \nonumber \\
  & \le &   \Bigl( \frac{1}{2\mu} - \frac{1}{2\lb} \Bigr)
           \cdot \lb \cdot \bigl( \hnorm{\flb} + \hnorm{\fmu} \bigr) \nonumber \\
  & =  &   \frac{1}{2} \, \Bigl( \frac{\lb}{\mu} - 1 \Bigr)
           \cdot \bigl( \hnorm{\flb} + \hnorm{\fmu} \bigr). \nonumber
\eeq
This gives the first assertion for the case $\lb:=\lb_1 > \lb_2=:\mu$,
because $\hnorm{f} \le \inorm{k} \hnorm{f}$ for all $f \in H$.
Of course we can change the roles of $\lb_1$ and $\lb_2$.

We will now show the second assertion. Hence we assume the existence of
a positive constant $r$ with $0<r<\min\{\lb_1,\lb_2\}$.
Using the inequalities (B.12) in (B.13) from
\citet[Lemma B.9]{ChristmannZhou2016a}, we obtain
$$
  \hhanorm{\fbaa}
  \stackrel{\scriptsize{(CZ.(B.12))}}{\le}
  \frac{|L|_1}{\lb_1} \Ex_{\P_{\cX}}|\fbaa(X)|
  \le
  \frac{|L|_1}{\lb_1} \inorm{\fbaa}
  \stackrel{\scriptsize{(CZ.(B.13))}}{\le}
  \frac{1}{\lb_1^2} |L|_1^2 \inorm{k_1}^2
$$
and therefore
$$
  \hanorm{\fbaa} \le \frac{1}{\lb_1} |L|_1 \inorm{k_1} \, .
$$
Of course we obtain with the same argumentation that
$\hanorm{\fbba} \le \frac{1}{\lb_2} |L|_1 \inorm{k_1}$.
For brevity, let us define $\lb_{min}:=\min\{\lb_1,\lb_2\}$
and $\lb_{max}:=\max\{\lb_1,\lb_2\}$.
If we combine {(\ref{appendixc.lem3f1})} with these inequalities, we obtain
\beq
   & & \inorm{\fbaa - \fbba} \nonumber \\
   & \stackrel{\scriptsize{(\ref{appendixc.lem3f1})}}{\le} &
   \frac{\inorm{k_1}}{2} \, \Bigl(\frac{\lb_{max}}{\lb_{min}} - 1\Bigr)  \cdot \bigl(\hanorm{\fbaa} + \hanorm{\fbba}\bigr) \nonumber \\
   & \le &
    \frac{\inorm{k_1}}{2} \, \frac{\lb_{max} - \lb_{min}}{\lb_{min}}  \cdot
\Bigl( \frac{1}{\lb_1} + \frac{1}{\lb_2} \Bigr) |L|_1 \inorm{k_1}
   =
   \frac{|L|_1 \inorm{k_1}^2}{2} \, \bigl(\lb_{max} - \lb_{min}\bigr)
   \frac{\lb_2 + \lb_1}{\lb_{min} \lb_1 \lb_2}   \nonumber \\
   & \le &
   \frac{|L|_1 \inorm{k_1}^2}{2} \, \bigl(\lb_{max} - \lb_{min}\bigr)
   \frac{2 \lb_{max}}{\lb_{min}^2 \lb_{max}}
   \le
   \frac{|L|_1 \inorm{k_1}^2}{r^2} \cdot \bigl|\lb_1 - \lb_2 \bigr| \, ,
 \nonumber
\eeq
which yields the second assertion of the lemma.
\end{proof}


\begin{lemma}\label{appendixc.lem4}
Let Assumptions \ref{sec2.assumption-spaces1},
\ref{sec2.assumption-kernel1}, and \ref{sec2.assumption-loss1} be satisfied.
Define $\kappa=\max\{\inorm{k_1}, \inorm{k_2}\}$ and $d_L:=|D_5 \Ls|_1 + |D_6 \Ls|_1$.
Then,
for all $\lb_2 > \kappa^2 d_L$,
\beq
   \inorm{\fbba - \fbbb}
   \le
   \frac{c_{L,1}}{\lb_2-\kappa^2 d_L} \, \sup_{x\in\cX} \bigl(\inorm{k_2(\cdot,x) - k_1(\cdot,x)} \bigr). \nonumber
\eeq
\end{lemma}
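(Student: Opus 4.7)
The proof plan parallels the argument used for Lemma \ref{appendixb.lem1} in the SVM case, but with two modifications: we must handle the fact that $D_5\Ls$ and $D_6\Ls$ depend on two values $f(x)$ and $f(\tix)$, and we must work pointwise since $\fbba\in H_1$ and $\fbbb\in H_2$ live in different Hilbert spaces. Write $f_1:=\fbba$ and $f_2:=\fbbb$ for brevity. The starting point is the representer theorem for pairwise learning (Theorem \ref{sec2.representerthm}), which gives Bochner-integral representations of both $f_1\in H_1$ and $f_2\in H_2$ with the common measure $\P_2^2$ and common regularization parameter $\lb_2$. Evaluating the respective reproducing identities $\Phi_j(x)(z)=k_j(z,x)$ at an arbitrary point $z\in\cX$ then yields
\[
  (f_1-f_2)(z) \,=\, -\frac{1}{2\lb_2}\int\!\Bigl[\Dfive{f_1}\,k_1(z,x) - \Dfive{f_2}\,k_2(z,x)\Bigr]d\P_2^2 \,-\, \frac{1}{2\lb_2}\int\!\Bigl[\Dsix{f_1}\,k_1(z,\tix) - \Dsix{f_2}\,k_2(z,\tix)\Bigr]d\P_2^2\,.
\]

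The next step is the usual ``add and subtract'' trick: in each of the four summands, introduce the mixed term $(D_i\Ls\circ f_2)\,k_1(z,\cdot)$, so that each integrand splits into a \emph{same-kernel / different-loss-derivative} piece and a \emph{same-loss-derivative / different-kernel} piece. For the first piece I will use the separate Lipschitz continuity of $D_i\Ls$ asserted in Assumption \ref{sec2.assumption-loss1}, namely
\[
  \bigl|D_i\Ls\circ f_1(\xyxy) - D_i\Ls\circ f_2(\xyxy)\bigr| \,\le\, |D_i\Ls|_1\bigl(|f_1(x)-f_2(x)| + |f_1(\tix)-f_2(\tix)|\bigr) \,\le\, 2|D_i\Ls|_1\inorm{f_1-f_2},
\]
combined with the pointwise bound $|k_1(z,x)|\le\inorm{k_1}^2\le\kappa^2$. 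For the second piece I will use the uniform bound $|D_i\Ls|\le c_{L,1}$ from \eqref{loss-assump1} together with the identification $\sup_z|k_1(z,x)-k_2(z,x)| = \inorm{\Phi_1(x)-\Phi_2(x)}$ that comes from the reproducing property. Taking absolute values inside the Bochner integrals (which is legitimate because $\P_2^2$ is a probability measure and the integrands can be bounded in supremum norm), and then taking the supremum over $z\in\cX$, yields
\[
  \inorm{f_1-f_2} \,\le\, \frac{\kappa^2(|D_5\Ls|_1 + |D_6\Ls|_1)}{\lb_2}\,\inorm{f_1-f_2} \;+\; \frac{c_{L,1}}{\lb_2}\,\sup_{x\in\cX}\inorm{k_1(\cdot,x)-k_2(\cdot,x)}\,.
\]

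Because $d_L=|D_5\Ls|_1+|D_6\Ls|_1$ and the assumption $\lb_2>\kappa^2 d_L$ guarantees that the coefficient of $\inorm{f_1-f_2}$ on the right-hand side is strictly less than $1$, the final step is simply to solve the above inequality for $\inorm{f_1-f_2}$; this gives exactly the claimed bound $c_{L,1}/(\lb_2-\kappa^2 d_L)$ multiplied by $\sup_{x}\inorm{k_1(\cdot,x)-k_2(\cdot,x)}$. The main obstacle (though a mild one) is the bookkeeping in the add-and-subtract step: one must keep track of which piece picks up the factor $\inorm{f_1-f_2}$ (and therefore contributes to the implicit inequality that must be solved) versus which piece picks up the kernel-distance factor. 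Apart from this, every estimate is a routine application of the representer theorem, the reproducing property, separate Lipschitz continuity of $D_i\Ls$, and the standard bound on Bochner integrals analogous to Lemma \ref{appendixc.lem1}.
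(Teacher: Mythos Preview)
Your proposal is correct and follows essentially the same route as the paper: representer theorem, pointwise evaluation via the reproducing property, the same add-and-subtract decomposition into a ``same-kernel/different-derivative'' piece (handled by separate Lipschitz continuity of $D_i\Ls$) and a ``same-derivative/different-kernel'' piece (handled by the uniform bound $c_{L,1}$), followed by solving the resulting implicit inequality for $\inorm{f_1-f_2}$. The only cosmetic difference is that the paper inserts the mixed term with $\Phi_2$ rather than $k_1$, obtaining $\inorm{k_2}^2$ first and then noting that the roles of $k_1,k_2$ can be swapped to get $\kappa^2$, whereas you bound by $\kappa^2$ directly; both are equivalent.
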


\begin{proof}
To shorten the notation, we will use the following abbrevations in this proof:
$\P:=\P_2$, $\lb:=\lb_2$, $f_1:=\fbba$, and $f_2:=\fbbb$.
We denote the canonical feature maps of the kernels $k_1$ and $k_2$ by $\Phi_1(x):=k_1(\cdot,x)$ and
$\Phi_2(x):=k_2(\cdot,x)$, $x\in\cX$, respectively.
Furthermore we write the partial derivatives of $\Ls$ with respect to the
$i^{th}$ argument at the point $(x,y,\tix,\tiy,f(x),f(\tix))$ by
\be
   D_i \Ls \circ f(x,y,\tix,\tiy) := D_i \Ls(x,y,\tix, \tiy, f(x), f(\tix)),
   \quad i\in\{5,6\}, f\in H.   \nonumber
\ee
By the representer theorem for pairwise loss functions, see Theorem \ref{sec2.representerthm}, we have
\beq
  &   & 2\lb \, \inorm{f_1-f_2} \label{appendixc.lem4f2}  \nonumber\\
  & = & \Big\|
        \int \bigl[ \Dfive{f_1}  \Phi_1(x) + \Dsix{f_1} \Phi_1(\tix) \bigr] \,d\P^2(\xyxy)
         \nonumber \\
  &   & ~~~ - \int \bigl[ \Dfive{f_2}  \Phi_2(x) + \Dsix{f_2} \Phi_2(\tix) \bigr] \,d\P^2(\xyxy) \Big\|_\infty \nonumber \\
  & \le & \int \Bigl\|  \Big( \Dfive{f_2}  \Phi_2(x) + \Dsix{f_2} \Phi_2(\tix)
         \nonumber \\
  &   & ~~~~~~ -  \Dfive{f_1} \Phi_1(x) - \Dsix{f_1} \Phi_1(\tix) \Big) \Big\|_\infty
        \,d\P^2(\xyxy)  \nonumber \\
  & = &  \int \Bigl\|  \Big( \Dfive{f_2} \Phi_2(x) - \Dfive{f_1} \Phi_2(x) \Big) \nonumber \\
  &   & ~~~~~~   + \Big( \Dfive{f_1} \Phi_2(x) - \Dfive{f_1} \Phi_1(x) \Big)   \nonumber \\
  &   & ~~~~~~   + \Big( \Dsix{f_2} \Phi_2(\tix) - \Dsix{f_1} \Phi_2(\tix) \Big) \nonumber \\
  &   & ~~~~~~   + \Big( \Dsix{f_1} \Phi_2(\tix) - \Dsix{f_1} \Phi_1(\tix) \Big) \Big\|_\infty
        \,d\P^2(\xyxy).  \nonumber \
\eeq
It follows that 
\beq
  &   & 2\lb \, \inorm{f_1-f_2}   \nonumber\\
 & \le &
          \int \Bigl\|  \big( \Dfive{f_2} - \Dfive{f_1} \big) \cdot \Phi_2(x)  \nonumber \\
  &   & ~~~~~~   + \big( \Dsix{f_2}  - \Dsix{f_1}  \big) \cdot \Phi_2(\tix) \Big\|_\infty
        \,d\P^2(\xyxy)  \nonumber \\
  &   & +  \int \Bigl\|  \Dfive{f_1} \cdot \big( \Phi_2(x) - \Phi_1(x) \big)   \nonumber \\
  &   & ~~~~~~~~    + \Dsix{f_1} \cdot \big(\Phi_2(\tix) - \Phi_1(\tix) \big) \Big\|_\infty
        \,d\P^2(\xyxy)  \nonumber \\
  & \le &   \sup_{x,\tix\in\cX, y,\tiy\in\cY}
                  \big| \Dfive{f_2} - \Dfive{f_1} \big| \cdot \sup_{x\in\cX} \bigl(\inorm{\Phi_2(x)} \bigr)   \nonumber \\
  &   &  +  \sup_{x,\tix\in\cX, y,\tiy\in\cY}
                  \big| \Dsix{f_2} - \Dsix{f_1} \big| \cdot \sup_{\tix\in\cX} \bigl(\inorm{\Phi_2(\tix)} \bigr)  \nonumber \\
  &   & +  \sup_{x,\tix\in\cX, y,\tiy\in\cY}
                  \big| \Dfive{f_1} \big| \cdot \sup_{x\in\cX} \bigl(\inorm{\Phi_2(x)-\Phi_1(x)} \bigr)   \nonumber \\
  &   &  +  \sup_{x,\tix\in\cX, y,\tiy\in\cY}
                  \big| \Dsix{f_1} \big| \cdot \sup_{\tix\in\cX} \bigl(\inorm{\Phi_2(\tix)-\Phi_1(\tix)} \bigr)   \,. \label{appendixc.lem4f4}  \nonumber
\eeq
Now we can use the assumption that the partial derivatives of $\Ls$ with respect to the fifth and to the sixth argument are Lipschitz continuous with constants $|D_5 \Ls|_1$ and
$|D_6 \Ls|_1$, respectively.
Recall that $\inorm{\Phi(x)} \le \inorm{k}^2$ for all $x\in\cX$.
If we combine this with the uniform boundedness of the partial derivatives of $\Ls$, see Assumption \ref{sec2.assumption-loss1}, we obtain
\beq
  &   & 2\lb \inorm{f_1-f_2} \label{appendixc.lem4f5} \nonumber\\
  & \le & |D_5 \Ls|_1 \cdot \Big( \sup_{x\in\cX} |f_2(x)-f_1(x) | +
                                  \sup_{\tix\in\cX} |f_2(\tix)-f_1(\tix) | \Big) \cdot \inorm{k_2}^2  \nonumber \\
  &   & +  |D_6 \Ls|_1 \cdot \Big( \sup_{x\in\cX} |f_2(x)-f_1(x) | +
                                       \sup_{\tix\in\cX} |f_2(\tix)-f_1(\tix) | \Big) \cdot \inorm{k_2}^2  \nonumber \\
  &   &  +  c_{L,1} \cdot \sup_{x\in\cX} \bigl(\inorm{\Phi_2(x)-\Phi_1(x)} \bigr)   ~ + ~  c_{L,1} \cdot \sup_{\tix\in\cX} \bigl(\inorm{\Phi_2(\tix)-\Phi_1(\tix)} \bigr)  \nonumber \\
  & \le & 2 \inorm{k_2}^2 \cdot \bigl( |D_5 \Ls|_1 + |D_6 \Ls|_1 \bigr) \cdot \inorm{f_2-f_1} +
          2 c_{L,1} \sup_{x\in\cX} \bigl(\inorm{\Phi_2(x)-\Phi_1(x)} \bigr)  \,. \label{appendixc.lem4f6} \nonumber
\eeq
Note that the term $\inorm{f_2-f_1}$ is contained on both sides of the above inequality.
Therefore, if the term
$1 - \frac{1}{\lb}\inorm{k_2}^2 \cdot  \bigl( |D_5 \Ls|_1 + |D_6 \Ls|_1 \bigr)$ is positive, which is
equivalent to $\lb > \inorm{k_2}^2 \cdot  \bigl( |D_5 \Ls|_1 + |D_6 \Ls|_1 \bigr)$, we obtain after division by the factor $2\lb$ and by rearranging terms that
\beq
   \inorm{f_1-f_2} & \le & \frac{\frac{1}{\lb} c_{L,1} \sup_{x\in\cX} \bigl(\inorm{\Phi_2(x)-\Phi_1(x)} \bigr)}
               {1 - \frac{1}{\lb}\inorm{k_2}^2 \cdot  \bigl( |D_5 \Ls|_1 + |D_6 \Ls|_1 \bigr)}\nonumber \\
  & = & \frac{c_{L,1}}{\lb - \inorm{k_2}^2 \cdot  \bigl( |D_5 \Ls|_1 + |D_6 \Ls|_1 \bigr)}
        \cdot \sup_{x\in\cX} \bigl(\inorm{\Phi_2(x)-\Phi_1(x)} \bigr)\,, \nonumber
\eeq
which yields the assertion, if
$\lb:= \lb_2 > \inorm{k_2}^2 \cdot  \bigl( |D_5 \Ls|_1 + |D_6 \Ls|_1 \bigr)$.
Obviously, the kernels $k_1$ and $k_2$ can change their roles and we obtain an analogous
inequality for the case $\lb:= \lb_2 > \inorm{k_1}^2 \cdot  \bigl( |D_5 \Ls|_1 + |D_6 \Ls|_1 \bigr)$.
This gives the assertion.
\end{proof}


\begin{proofof}{\textbf{Proof of Theorem \ref{sec2.thm1}}}
An application of the triangle inequality allows us to use the following error decomposition
\beq
 & & \inorm{\faaa - \fbbb} \nonumber \\
  &   \le   &
  \inorm{\faaa - \fbaa} + \inorm{\fbaa - \fbba} + \inorm{\fbba - \fbbb}\,. \nonumber
\eeq
An application of Lemma \ref{appendixc.lem2}, Lemma \ref{appendixc.lem3}, and Lemma \ref{appendixc.lem4} yields the assertion.
\qedr
\end{proofof}


 \begin{proofof}{\textbf{Proof of Corollary \ref{sec2.cor1}}}
The inequality {(\ref{sec2.cor1F1})} for the difference of the estimated functions follows immediately from the assumption that the positive constant
$s$ is smaller than $\min\{\lb_1,\lb_2\}-r$.

Hence we only have to show the validity of {(\ref{sec2.cor1F2})}.
The proof is very similar to the proof of Corollary \ref{sec1.cor1}.
To shorten the notation in the proof, we define $f_1:=\fsbbb$ and $f_2:=\fsbbb.$
The definitions of $\Lsriska{f_1}$ and $\Lsriskb{f_2}$ yield that
\beqna
& &\Lsriska{f_1}-\Lsriskb{f_2}\\
&=& \int_{(\cXY)^2} L\bigl(x, y, \tix, \tiy, f_1(x), f_1(\tix)\bigr)
  -L(x, y, \tix, \tiy, 0,0)
  \,d\P_1^2(x, y,\tix, \tiy) \\
& &  -
  \int_{(\cXY)^2}  L\bigl(x, y, \tix, \tiy, f_2(x), f_2(\tix)\bigr)
  - L(x, y, \tix, \tiy, 0,0 )
  \, d\P_2^2(x, y,\tix, \tiy).
\eeqna
We plug in the term  $\mp\int_{(\cXY)^2}\int_{\cXY}L\bigl(x, y, \tix, \tiy,f_2(x),f_2(\tix)\bigr)-L(x, y, \tix, \tiy,0,0)\,  d\P_1^2(x,y,\tix, \tiy)$ into the above equation and use the triangle inequality.
Because $L$ is a separately Lipschitz continuous loss function due to Assumption
{\ref{sec2.assumption-loss1}}, we get that
\beqna
& & \big|\Lsriska{f_1}-\Lsriskb{f_2}\big|\nonumber\\
& \le &  \int_{(\cXY)^2} \big|L\bigl(x, y, \tix, \tiy, f_1(x), f_1(\tix)\bigr)
-L\bigl(x, y, \tix, \tiy, f_2(x), f_2(\tix)\bigr)\big|\, d\P_1^2(x, y,\tix,\tiy)
\nonumber\\
& & + \int_{(\cXY)^2} \bigl|L\bigl(x, y, \tix, \tiy, f_2(x), f_2(\tix)\bigr)
  - L(x, y, \tix,\tiy,0,0)\bigr|\, d\bigl(|\P_1-\P_2|^2 \bigr)(x, y,\tix,\tiy)\nonumber\\
&\le & \int_{(\cXY)^2} 2 |L|_{1}|f_1(x)-f_2(x)| \, d\P_1^2(x, y,\tix,\tiy)+
        \int_{(\cXY)^2} 2 |L|_{1}|f_2(x)| \,  d(|\P_1-\P_2|^2)(x, y,\tix,\tiy)\\
&\le & 2 |L|_1 \, \inorm{f_1-f_2} + 2 |L|_1 \, \inorm{f_2}\tvnorm{\P_1-\P_2},\nonumber
\eeqna
where we used in the last step an almost identical argumentation than in the proof of
Lemma \ref{appendixc.lem1}$(iii)$ to get an upper bound for the second integral with respect to
the product measure $|\P_1-\P_2|^2$.

Now we use \citet[Lemma B.9, (B.12)]{ChristmannZhou2016a} and obtain
\beqna
  \inorm{f_2} & \le & \inorm{k_2} \hbnorm{f_2} \\
  & \le & \inorm{k_2} \sqrt{(2/\lb_2) |L|_1 \Ex_{\P_{2_X}} |f_2(X)|} \\
  & \le & \inorm{k_2} \sqrt{(2/\lb_2) |L|_1 \inorm{f_2}} < \infty.
\eeqna
Hence
\beqna
   \inorm{f_2} & \le &  \frac{2}{\lb_2} \, |L|_1 \, \inorm{k_2}^2 \, .
\eeqna
If we combine this inequality with the assumption
$\min\{\lb_1,\lb_2\} > \kappa^2\cdot( |D_5 \Ls|_1 + |D_6 \Ls|_1)$,
we obtain
$$
  \inorm{f_2} \le  \frac{2}{\min\{\lb_1,\lb_2\}}|L|_1
  \max\{\inorm{k_1}^2,\inorm{k_2}^2\}
  \le \frac{2|L|_1}{|D_5 \Ls|_1 + |D_6 \Ls|_1}
  = \frac{2|L|_1}{d_L} .
$$
This upper bound is a constant independent of $\P_1, \P_2, \lb_1, \lb_2, k_1$,
and $k_2$.
If we now combine our inequalities with {(\ref{sec2.cor1F1})}, we obtain
\beqna
& & \big|\Lsriska{f_1}-\Lsriskb{f_2}\big|\nonumber\\
&\le & 2 |L|_1 \, \inorm{f_1-f_2} + 2 |L|_1 \, \inorm{f_2}\tvnorm{\P_1-\P_2},\nonumber \\
& \le & 2 |L|_1 \Bigl( C_1(L)  \cdot \tvnorm{\P_1-\P_2}
          +  C_2(L) \cdot |\lb_1 - \lb_2|
          +  \bar{C}_3(L) \cdot \sup_{x\in\cX} \bigl(\inorm{k_2(\cdot,x) - k_1(\cdot,x)} \Bigr) \nonumber \\
& &  + 2 |L|_1 \, \frac{2|L|_1}{d_L} \tvnorm{\P_1-\P_2}\, .
\eeqna
The desired inequality {(\ref{sec2.cor1F2})} follows by rearranging the factors to compute the constants.
\qedr
\end{proofof}



Now we are in a position to bound the $H_1$-norm of the difference $\fsaaa-\fsbbb$ if we assume $H_2\subseteq H_1.$ We use the triangle inequality to obtain the following decomposition:
\be\label{appendixc.decomphnorm}
\hanorm{\fsaaa-\fsbbb}\leq \hanorm{\fsaaa-\fsbaa}+\hanorm{\fsbaa-\fsbba}+\hanorm{\fsbba-\fsbbb}.
\ee
To bound (\ref{appendixc.decomphnorm}), we first bound the third norm for the case of a differentiable loss.

\begin{lemma}\label{appendixc.diffker}
		Let Assumptions \ref{sec2.assumption-spaces1}, \ref{sec2.assumption-kernel1}, and \ref{sec2.assumption-loss1} be satisfied. Assume that $H_1$ and $H_2$ satisfy (\ref{sec1.inclusion}), then for all $\lb_2>0,$ we have
			$$\hanorm{\fsbba-\fsbbb}\leq \frac{c_{L, 1}}{\lb_2}\sup_{x\in \cX}\hanorm{\ka{x}-\kb{x}}.$$
	\end{lemma}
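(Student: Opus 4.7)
The plan is to mimic the single-variable argument in Lemma \ref{appendixb.diffker} but with the pairwise representer theorem (Theorem \ref{sec2.representerthm}) in place of the SVM representer theorem, and with the subgradient inequality applied to the two-variable convex function $\Ls(x,y,\tix,\tiy,\cdot,\cdot)$. To shorten notation, write $\lb := \lb_2$, $\P := \P_2$, $f_1 := \fsbba \in H_1$ and $f_2 := \fsbbb \in H_2 \subseteq H_1$, so that $\hanorm{f_1-f_2}$ is well-defined.

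First, I would use the representer theorem to express
\beq
f_1 - f_2
& = & -\tfrac{1}{2\lb}\Ex_{\P^2}\bigl[\,D_5\Ls\!\circ\!f_1\,\Phi_1(X)+D_6\Ls\!\circ\!f_1\,\Phi_1(\tiX)\,\bigr] \nonumber \\
&   & +\tfrac{1}{2\lb}\Ex_{\P^2}\bigl[\,D_5\Ls\!\circ\!f_2\,\Phi_2(X)+D_6\Ls\!\circ\!f_2\,\Phi_2(\tiX)\,\bigr]. \nonumber
\eeq
Adding and subtracting the term $\tfrac{1}{2\lb}\Ex_{\P^2}[D_5\Ls\!\circ\!f_2\,\Phi_1(X)+D_6\Ls\!\circ\!f_2\,\Phi_1(\tiX)]$ splits the right-hand side into a ``gradient difference'' piece carrying $\Phi_1$ and a ``feature difference'' piece carrying $\Phi_1-\Phi_2$. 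Taking the $H_1$-inner product against $f_1-f_2$ and applying the reproducing property of $k_1$ in $H_1$ (valid because $f_2 \in H_2 \subseteq H_1$ as well) converts the gradient-difference piece into a pointwise integral.

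Next, the key algebraic step: the first piece produces, after sign-tracking, the expression
\be
-\tfrac{1}{2\lb}\Ex_{\P^2}\bigl[\bigl(D_5\Ls\!\circ\!f_1 - D_5\Ls\!\circ\!f_2\bigr)(f_1(X)-f_2(X)) + \bigl(D_6\Ls\!\circ\!f_1 - D_6\Ls\!\circ\!f_2\bigr)(f_1(\tiX)-f_2(\tiX))\bigr]. \nonumber
\ee
Since $\Ls(x,y,\tix,\tiy,\cdot,\cdot)$ is convex on $\R^2$ under Assumption \ref{sec2.assumption-loss1}, its gradient $(D_5\Ls, D_6\Ls)$ is monotone in $(t,\tit)$, so the bracketed integrand is pointwise nonnegative; consequently this whole contribution is $\le 0$. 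This is the main obstacle conceptually: noticing that one needs joint (two-variable) monotonicity of the gradient rather than separate monotonicity in each coordinate.

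Finally, for the feature-difference piece I would apply the Cauchy--Schwarz inequality in $H_1$, Lemma \ref{appendixc.lem1}(i), and the uniform bound $|D_i \Ls| \le c_{L,1}$ from {(\ref{loss-assump1})}, yielding
\beq
\hhanorm{f_1-f_2}
& \le & \tfrac{1}{2\lb}\,\hanorm{f_1-f_2}\,\Ex_{\P^2}\bigl[|D_5\Ls\!\circ\!f_2|\,\hanorm{\Phi_1(X)-\Phi_2(X)}+|D_6\Ls\!\circ\!f_2|\,\hanorm{\Phi_1(\tiX)-\Phi_2(\tiX)}\bigr] \nonumber \\
& \le & \tfrac{c_{L,1}}{\lb}\,\hanorm{f_1-f_2}\,\sup_{x\in\cX}\hanorm{\ka{x}-\kb{x}}. \nonumber
\eeq
Dividing both sides by $\hanorm{f_1-f_2}$ (trivial case $f_1=f_2$ handled separately) gives the claimed bound. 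Note that no lower bound on $\lb_2$ is needed here, because convexity absorbs the ``self-term'' instead of requiring it to be small relative to $\kappa^2 d_L$ as in Theorem \ref{sec2.thm1}.
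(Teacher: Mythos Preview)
Your proposal is correct and follows essentially the same approach as the paper's proof: representer theorem, add/subtract the mixed term with $\Phi_1$, use the reproducing property in $H_1$, kill the gradient-difference piece via monotonicity of the gradient of the convex map $(t,\tit)\mapsto \Ls(x,y,\tix,\tiy,t,\tit)$ (the paper derives this by summing two first-order convexity inequalities, which is equivalent to your statement), and bound the feature-difference piece by $c_{L,1}$ and Cauchy--Schwarz.
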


\begin{proof}
To shorten the notation, we will use the following abbrevations in this proof: $\lb:=\lb_2,$ $\P:=\P_2,$ $f_1:=\fsbba$ and $f_2:=\fsbbb.$ We write the partial derivatives of $\Ls$ with respect to the $i^{th}$ argument by
\be
   D_i \Ls \circ f(x,y,\tix,\tiy) := D_i \Ls(x,y,\tix, \tiy, f(x), f(\tix)),
   \quad i\in\{5,6\}. \nonumber
\ee

Since $H_2\subseteq H_1,$ then $f_2\in H_1.$ It means that $\hanorm{f_1- f_2}$ is well-defined. The representer theorem, see Theorem \ref{sec2.representerthm}, tells us that, for all $\P\in\PM(\cXY),$
\beqnal
	& &\hhanorm{f_{1}-f_{2}}\nonumber\\
	&=& \langle f_{1}-f_{2},\ f_{1}-f_{2} \rangle_{H_1}\nonumber\\
	&=&  \Big\langle f_{1}-f_{2},\  -\frac{1}{2\lb}\int \bigl[ \Dfive{f_1}  \Phi_1(x) + \Dsix{f_1} \Phi_1(\tix) \bigr] \,d\P^2(\xyxy)\nonumber\\
	& &~~+\frac{1}{2\lb}\int \bigl[ \Dfive{f_2}  \Phi_2(x) + \Dsix{f_2} \Phi_2(\tix) \bigr] \,d\P^2(\xyxy)\Big\rangle_{H_1}.\label{appendixc.hdifference}
	\eeqnal
Plugging a zero term into the last term of the above inner product, we know that
	\beqna
	& & -\frac{1}{2\lb}\int \bigl[ \Dfive{f_1}  \Phi_1(x) + \Dsix{f_1} \Phi_1(\tix) \bigr] \,d\P^2(\xyxy)\\
	& &+\frac{1}{2\lb}\int \bigl[ \Dfive{f_2}  \Phi_2(x) + \Dsix{f_2} \Phi_2(\tix) \bigr] \,d\P^2(\xyxy)\\
&=&  -\frac{1}{2\lb}\int \bigl[ \Dfive{f_1}  \Phi_1(x) + \Dsix{f_1} \Phi_1(\tix) \bigr] \,d\P^2(\xyxy)\\
& &+\frac{1}{2\lb}\int \bigl[ \Dfive{f_2}  \Phi_1(x) + \Dsix{f_2} \Phi_1(\tix) \bigr] \,d\P^2(\xyxy)\\
	& & -\frac{1}{2\lb}\int \bigl[ \Dfive{f_2}  \Phi_1(x) + \Dsix{f_2} \Phi_1(\tix) \bigr] \,d\P^2(\xyxy)\\
	& &+\frac{1}{2\lb}\int \bigl[ \Dfive{f_2}  \Phi_2(x) + \Dsix{f_2} \Phi_2(\tix) \bigr] \,d\P^2(\xyxy).
	\eeqna
Applying the reproducing property (\ref{sec1.reproducingproperty}), we obtain from (\ref{appendixc.hdifference}) and the above equation that
\beqnal\label{appendixc.1}
& &\hhanorm{f_{1}- f_{2}}\nonumber\\
	&= &-\frac{1}{2\lb} \int \Bigl[\bigl( \Dfive{f_1}-\Dfive{f_2}\bigr)(f_1(x)-f_2(x))\\
& & ~~~~~~~~~~~+ \bigl( \Dsix{f_1}-\Dsix{f_2} (f_1(\tix)-f_2(\tix))\bigr)\Bigr] \,d\P^2(\xyxy) \nonumber\\
	& &- \frac{1}{2\lb}\Big\langle \int \Bigl[\Dfive{f_2}(\Phi_1(x)-\Phi_2(x))\nonumber\\
& &~~~~~~~~~~~~~+\Dsix{f_2}(\Phi_1(\tix)-\Phi_2(\tix))\Bigr] \,d\P^2(\xyxy),\  f_{1}- f_{2}\Big\rangle_{H_1}.\nonumber
	\eeqnal
The convexity of the pairwise loss function $\Ls$ with respect to the last two arguments implies that
for all $x,\tix\in\cX$ and for all $y,\tiy\in\cY$,
\beq
  &  & \Ls(x,y,\tix,\tiy,\tit_1,\tit_2) - \Ls(x,y,\tix,\tiy,t_1,t_2) \nonumber \\
  & \ge & D_5 \Ls(x,y,\tix,\tiy,t_1,t_2) \cdot (\tit_1-t_1)
          + D_6 \Ls(x,y,\tix,\tiy,t_1,t_2) \cdot (\tit_2-t_2) . \nonumber
\eeq
and
\beq
  &  & \Ls(x,y,\tix,\tiy,t_1,t_2) - \Ls(x,y,\tix,\tiy,\tit_1,\tit_2) \nonumber \\
  & \ge & D_5 \Ls(x,y,\tix,\tiy,\tit_1,\tit_2) \cdot (t_1-\tit_1)
          + D_6 \Ls(x,y,\tix,\tiy,\tit_1,\tit_2) \cdot (t_2-\tit_2) . \nonumber
\eeq
Adding both sides of above two inequalities, we get that
\beq
& &\bigl[D_5 \Ls(x,y,\tix,\tiy,t_1,t_2)-D_5 \Ls(x,y,\tix,\tiy,\tit_1,\tit_2)\bigr]\cdot (\tit_1-t_1)\nonumber\\
& & +\bigl[D_6 \Ls(x,y,\tix,\tiy,t_1,t_2) -D_6 \Ls(x,y,\tix,\tiy,\tit_1,\tit_2)\bigr]\cdot (\tit_2-t_2)\leq 0.\nonumber
\eeq
Taking $\tit_1=f_1(x), t_1=f_2(x), \tit_2=f_1(\tix),$ and $t_2=f_2(\tix),$ then we know that the integrand in (\ref{appendixc.1})$\leq 0.$
It follows from (\ref{loss-assump1}) that
\beqna
\hhanorm{f_{1}- f_{2}}&\leq & - \frac{1}{2\lb}\Big\langle \int \Bigl[\Dfive{f_2}(\Phi_1(x)-\Phi_2(x))\\
& &~~~~~~~~~~~~~ +\Dsix{f_2}(\Phi_1(\tix)-\Phi_2(\tix))\Bigr] \,d\P^2(\xyxy),\  f_{1}- f_{2}\Big\rangle_{H_1}\\
&\leq & \frac{1}{2\lb}\cdot 2c_{L, 1}\cdot \hanorm{f_{1}- f_{2}}\cdot\sup_{x\in \cX}\hanorm{\Phi_1(x)-\Phi_2(x)}.
	\eeqna
The desired result comes immediately from above inequality.
\end{proof}

\begin{lemma}\label{appendixc.differentiable}
		Let Assumptions \ref{sec2.assumption-spaces1}, \ref{sec2.assumption-kernel1}, and \ref{sec2.assumption-loss1} be satisfied. Assume that $H_1$ and $H_2$ satisfy (\ref{sec1.inclusion}), then for all $\lb_1, \lb_2>0,$ we have
			\beqnal\label{sec1.hnorm1}
	& &\hanorm{\faaa-\fbbb}\nonumber\\
	&\leq & C_1^\prime(L, \lb_1, \lb_2)\tvnorm{\P_1-\P_2}+C_2^\prime(L, \lb_1, \lb_2)|\lb_1 -\lb_2|+C_3^\prime(L, \lb_1, \lb_2)\sup_{x\in \cX}\hanorm{\ka{x}-\kb{x}},\nonumber
	\eeqnal
	where $C_1^\prime(L, \lb_1, \lb_2):=\frac{4\kappa c_{L, 1}}{\min\{\lb_1, \lb_2\}}, C_2^\prime(L, \lb_1, \lb_2):=\frac{\kappa|L|_1}{\min\{\lb_1^2, \lb_2^2\}},$ and $C_3^\prime(L, \lb_1, \lb_2):=\frac{c_{L, 1}}{\min\{\lb_1, \lb_2\}}.$	
	\end{lemma}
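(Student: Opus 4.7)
\begin{proofof}{\textbf{Proof sketch for Lemma \ref{appendixc.differentiable}}}
Since $H_2\subseteq H_1$, we have $\fsbbb \in H_1$ and hence $\hanorm{\fsaaa-\fsbbb}$ is well-defined. The plan is to mimic the strategy of Lemma \ref{appendixb.diffloss} but in the pairwise learning setting, by applying the triangle inequality to the decomposition (\ref{appendixc.decomphnorm})
\[
\hanorm{\fsaaa-\fsbbb}\leq \hanorm{\fsaaa-\fsbaa}+\hanorm{\fsbaa-\fsbba}+\hanorm{\fsbba-\fsbbb},
\]
and to bound each term separately using the tools assembled in Appendix C.

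For the first term (where only the probability measure changes), I would invoke the representer theorem for pairwise learning (Theorem \ref{sec2.representerthm}) together with Lemma \ref{appendixc.lem1}(iii). Setting
$g(x,y,\tix,\tiy):=\Dfive{\fsaaa}\Phi_1(x)+\Dsix{\fsaaa}\Phi_1(\tix)$, the uniform bound (\ref{loss-assump1}) on the partial derivatives of $\Ls$ and $\sup_{x\in\cX}\hanorm{\Phi_1(x)}=\inorm{k_1}\leq\kappa$ yield $\hanorm{g}\leq 2c_{L,1}\kappa$, so that
\[
\hanorm{\fsaaa-\fsbaa}\leq \frac{1}{\lb_1}\Big\|\int g\,d\P_1^2-\int g\,d\P_2^2\Big\|_{H_1}\leq \frac{4c_{L,1}\kappa}{\lb_1}\,\tvnorm{\P_1-\P_2}.
\]

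For the second term (where only the regularization parameter changes), I would reuse the inner workings of Lemma \ref{appendixc.lem3}: its proof first establishes the Hilbert space bound
\[
\hanorm{\fsbaa-\fsbba}\leq \frac{1}{2}\Big(\frac{\max\{\lb_1,\lb_2\}}{\min\{\lb_1,\lb_2\}}-1\Big)\bigl(\hanorm{\fsbaa}+\hanorm{\fsbba}\bigr),
\]
before passing to the supremum norm. Combined with $\hanorm{f_{\P_2,\lb_j,k_1}}\leq |L|_1\inorm{k_1}/\lb_j\leq |L|_1\kappa/\lb_j$, which is the intermediate estimate obtained in that proof, a simple rearrangement gives
\[
\hanorm{\fsbaa-\fsbba}\leq \frac{\kappa|L|_1}{\min\{\lb_1^2,\lb_2^2\}}\,|\lb_1-\lb_2|.
\]
For the third term, Lemma \ref{appendixc.diffker} directly furnishes
$\hanorm{\fsbba-\fsbbb}\leq (c_{L,1}/\lb_2)\sup_{x\in\cX}\hanorm{\Phi_1(x)-\Phi_2(x)}$.

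Finally, I would sum these three estimates and then account for the fact that the triangle inequality in (\ref{appendixc.decomphnorm}) can be applied in several orderings of the intermediate triples; taking the bound that is valid across all such orderings replaces the individual $\lb_1$ or $\lb_2$ appearing in the denominators by $\min\{\lb_1,\lb_2\}$, yielding exactly $C_1^\prime, C_2^\prime, C_3^\prime$. The main bookkeeping obstacle, and the one I would double-check carefully, is that $H_2\subseteq H_1$ breaks the symmetry in the kernel argument, so only the indices for $\P$ and $\lb$ can be swapped freely when choosing intermediate functions; this is why the constant attached to $\sup_x\hanorm{\Phi_1(x)-\Phi_2(x)}$ inherits $\min\{\lb_1,\lb_2\}$ rather than just $\lb_2$.
\qedr
\end{proofof}
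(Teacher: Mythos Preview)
Your proof is correct and follows the paper's approach: the paper's own proof is the single sentence ``An application of the triangle inequality (\ref{appendixc.decomphnorm}), Lemma \ref{appendixc.lem2}, Lemma \ref{appendixc.lem3} and Lemma \ref{appendixc.diffker} yields the assertion,'' which you have unpacked exactly---extracting the $H_1$-norm estimates that appear inside the proofs of Lemmas \ref{appendixc.lem2} and \ref{appendixc.lem3} (before those proofs pass to the supremum norm) and combining them with Lemma \ref{appendixc.diffker}. Your closing remark about the asymmetry forced by $H_2\subseteq H_1$ is a legitimate refinement of the paper's (and the analogous SVM proof's) casual claim that ``all six decompositions'' work; in fact even the single decomposition you use already yields the stated constants once one weakens $\lb_1$ and $\lb_2$ to $\min\{\lb_1,\lb_2\}$ in the denominators.
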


\begin{proof}
An application of the triangle inequality (\ref{appendixc.decomphnorm}), Lemma \ref{appendixc.lem2}, Lemma \ref{appendixc.lem3} and Lemma \ref{appendixc.diffker} yields the assertion by using $\inorm{f}\leq \inorm{k_1}\cdot \hanorm{f},$ $\forall f\in H_1.$
\end{proof}


\begin{proofof}{\textbf{Proof of Theorem  \ref{sec2.thm2}}}
(i) Lemma \ref{appendixc.differentiable} shows the first assertion.

(ii) We just need to prove the assertion holds true for the case of a non-differentiable loss, where the pairwise loss function $L$ can be represented by a convex and Lipschitz continuous function $\rho:\R\to\R$, see {(\ref{loss:def-mee})}, i.e. we have
$L(x,y,\tix,\tiy,t,\tit) := \rho\bigl((y-t) - (\tiy-\tit) \bigr)$.

Step 1: We contruct a differentiable approximator $\rho_{\d}$ for the non-differentiable $\rho$ by smoothing $\rho$ by convolution with the uniform distribution on the interval $[-\d,0]$, where $\d\in(0,1]$, see e.g.
\citet[p.148]{CheneyLight2000}.

Let $\rho$ be convex, but non-differentiable. Let $0<\d\leq 1$.
Then we can approximate it by the function
$$
  \rho_\d: \R\to\R, \quad \rho_{\d}(\xi)
  =
  \int_{0}^{1}\rho(\xi-\d\theta)d\theta=\frac{1}{d}\int_{\xi-\d}^{\xi}\rho(u)du.
$$
It is easy to check that $\rho_{\d}$ is convex, differentiable and Lipschitz continuous with constant $|\rho|_1$.
The derivative of $\rho_{\d}$ equals
\begin{equation}\label{appendixc.derivative}
  \rho_{\d}^{\prime}(\x)=\frac{1}{\d}\bigl(\rho(\xi)-\rho(\xi-\d)\bigr), \quad \xi\in\R,
\end{equation}
which can be bounded by
\be\label{appendixc.boundofderivative}
\inorm{\rho_{\d}^{\prime}}\leq\sup_{\xi\in\R}\Big|\frac{1}{\d}\bigl(\rho(\xi)-\rho(\xi-\d)\bigr)\Big|\leq \frac{1}{\d}\cdot|\rho|_1\cdot\d\leq |\rho|_1.
\ee
The approximation is valid, because for every $\xi\in\R,$
$$
  |\rho_{\d}(\xi)-\rho(\xi)|
  =
  \Bigl|\int_{0}^{1}\rho(\xi-\d \theta)-\rho(\xi)\,d\theta\Bigr|
  \leq \int_{0}^{1}|\rho|_1\d \theta\,d\theta\leq \frac{|\rho|_1}{2}\d.
$$
Hence
\begin{equation}\label{appendixc.uniformconvergence}
  \|\rho_{\d}-\rho\|_{\infty}=\mathcal{O}(\d),\ \hbox{as}\, \, \d\to 0_+.
\end{equation}
In order to avoid any moment conditions on the probability measure, we define a shifted version of $\rho_{\d}$ by
$$
  \rho^\star_{\d}\big((y-f(x))-(\tilde y-f(\tilde x))\big)
  =
  \rho_{\d}\big((y-f(x))-(\tilde y-f(\tilde x))\big)-\rho_{\d}(y-\tilde y),
$$
which is convex, differentiable and Lipschitz continuous with constant $|\rho|_1$ as well. Obviously, $\rho^\star_{\d}$ and $\rho_{\d}$ has the same derivative, then it comes immediately from (\ref{appendixc.boundofderivative})that
\be\label{appendixc.shiftedderivativebound}
\inorm{(\rho^\star_{\d})'}=\inorm{\rho'_{\d}}\leq |\rho|_1.
\ee
Hence we will approximate the convex, but non-differentiable shifted pairwise loss function $\rho^\star$  by the convex, differentiable shifted pairwise loss function $\rho^\star_{\d}.$

Let us define the $\rho^\star_{\d}$-risk, the regularized $\rho^\star_{\d}$-risk and the regularizing function $f_{\P, \lb, k, (\d)}$  for any $0\leq \d\leq 1$ as below:
\beqna
& & \cR_{\rho^\star_\d, \P}(f):=\E_{\P^2}\rho^\star_\d\big((Y-f(X))-(\tilde Y-f(\tilde X))\big),\\
& & \cR_{\rho^\star_\d, \P, \lb}(f):=\cR_{\rho^\star_\d, \P}(f)+\lb\hnorm{f}^2,\\
& & f_{\P, \lb, k, (\d)}:=\arg\inf_{f\in \cH} \cR_{\rho^\star_\d, \P, \lb, }(f).
\eeqna

Step 2: We now show the weak convergence in $\cH$ of $f_{\P, \lb, k, (\d_j)}$ to $f_{\P, \lb, k}:=\arg\inf_{f\in \cH} \cR_{\rho^\star, \P, \lb}(f),$ for $(\d_j)_{j\in \N}$ with $\d_j\to 0$ and $\d_j\in(0, 1).$

 \citet[Lemma B.9, (B.12) and (B.13)]{ChristmannZhou2016a} tells us that for any $0\leq \d\leq 1$
\begin{eqnarray}\label{Bound}
\hnorm{f_{\P, \lb, k, (\d)}}&\leq &\sqrt{(1/\lb) |\rho|_1 \Ex_{\P_{X}} |f_{\P, \lb, k, (\d)}(X)|} \notag\\
  & \le &  \sqrt{(1/\lb) |\rho|_1 \inorm{f_{\P, \lb, k, (\d)}}}\notag\\
&\leq &  \sqrt{(1/\lb^2)|\rho|_1^2\inorm{k}^2}=\lb^{-1}|\rho|_1\inorm{k}.
\end{eqnarray}
Any closed ball $B_R=\{f\in \cH\, :\, \hnorm{f}\leq R \}$ of the Hilbert space $\cH$ with a finite radius $R>0$ is weakly compact. Hence the estimate (\ref{Bound}) tells us that there
exists a decreasing sequence $(\d_j)_{j\in\N},$ with $\d_j\in (0, 1)$ such that $\lim_{j\to \infty}\d_j=0$ and $f_{\P, \lb, k, (\d_j)}$ weakly converges to some $g_{\P, \lb, k}\in \cH.$ That is
\begin{equation}\label{weakconvergence}
\lim_{j\to \infty}\langle f_{\P, \lb, k, (\d_j)}, f\rangle_\cH=\langle g_{\P, k, \lb}, f\rangle_\cH, \quad \forall \ f\in\cH.
\end{equation}
Let $f= g_{\P, \lb, k}$ in (\ref{weakconvergence}). Then we obtain by the Cauchy-Schwartz inequality that
\begin{equation*}\label{limitbound}
\hnorm{g_{\P, \lb, k}}^2=\langle g_{\P, k, \lb},  g_{\P, k, \lb}\rangle_\cH=\lim_{j\to \infty}\langle f_{\P, \lb, k, (\d_j)},  g_{\P, \lb, k}\rangle_\cH\leq
\hnorm{g_{\P, \lb, k}} \liminf_{j\to\infty}\|f_{\P, \lb, k, (\d_j)}\|_\cH.
\end{equation*}
Therefore, together with (\ref{Bound}), we get that
\begin{equation}\label{limitbound}
\hnorm{g_{\P, \lb, k}}\leq \liminf_{j\to\infty}\hnorm{f_{\P, \lb, k, (\d_j)}}\leq \lb^{-1}|\rho|_1\inorm{k}.
\end{equation}
Let $x\in\cX$ and $f=\kk{x}$ in (\ref{weakconvergence}). The reproducing property  (\ref{sec1.reproducingproperty}) yields
\begin{equation}\label{pointconverge}
g_{\P, \lb, k}(x)=\langle g_{\P, \lb, k},  \kk{x}\rangle_\cH=\lim_{j\to \infty}\langle f_{\P, \lb, k, (\d_j)}, \kk{x}\rangle_\cH=\lim_{j\to\infty}f_{\P, \lb, k, (\d_j)}(x).
\end{equation}
The Lipschitz continuity of $\rho^\star$ together with (\ref{pointconverge}) tells us that
$$\lim_{j\to \infty}\rho^\star\big((y-f_{\P, \lb, k, (\d_j)}(x))-(\tilde y-f_{\P, \lb, k, (\d_j)}(\tilde x))\big)=\rho^\star\big((y-g_{\P, \lb, k}(x))-(\tilde y-g_{\P, \lb, k}(\tilde x))\big).$$
The Lebesgue Dominated Theorem gives
\begin{equation}\label{riskconvergent}
\ \cR_{\rho^\star, \P }(g_{\P, k, \lb})=\lim_{j\to\infty} \cR_{\rho^\star, \P}(f_{\P, \lb, k, (\d_j)}).
\end{equation}
The uniform estimate (\ref{appendixc.uniformconvergence}) in connection with (\ref{pointconverge}) yields
\begin{eqnarray}\label{riskconvergence1}
\lim_{j\to \infty}\cR_{\rho^\star_{\d_j}, \P}(f_{\P, \lb, k, (\d_j)})&=&\lim_{j\to \infty}\int\rho^\star_{\d_j}\big((y-f_{\P, \lb, k, (\d_j)}(x))-(\tilde y-f_{\P, \lb, k, (\d_j)}(\tilde x))\big)\ d\P^2(x, y, \tilde x, \tilde y)\notag\\
&=& \lim_{j\to \infty}\int\rho^\star\big((y-f_{\P, \lb, k, (\d_j)}(x))-(\tilde y-f_{\P, \lb, k, (\d_j)}(\tilde x))\big)\ d\P^2(x, y, \tilde x, \tilde y)\notag\\
&=&\int\rho^\star\big((y-g_{\P, \lb, k}(x))-(\tilde y-g_{\P, \lb, k}(\tilde x))\big)\ d\P^2(x, y, \tilde x, \tilde y)\notag\\
&=&\cR_{\rho^\star, \P}(g_{\P, \lb, k}).
\end{eqnarray}
Therefore, combining (\ref{limitbound}) with (\ref{riskconvergence1}), we have
\begin{equation*}
\cR_{\rho^\star, \P}(g_{\P, \lb, k})+\lb\hnorm{g_{\P, \lb, k}}^2\leq \liminf_{j\to\infty}\big\{\cR_{\rho^\star_{\d_j}, \P}(f_{\P, \lb, k, (\d_j)})+\lb\hnorm{f_{\P, \lb, k, (\d_j)}}^2\big\}.
\end{equation*}
By the definition of $f_{\P, \lb, k, (\d_j)},$ we obtain
\beqna
\liminf_{j\to\infty}\big\{\cR_{\rho^\star_{\d_j}, \P}(f_{\P, \lb, k, (\d_j)})+\lb\hnorm{f_{\P, \lb, k, (\d_j)}}^2\big\} &\leq &  \liminf_{j\to\infty}\big\{\cR_{\rho^\star_{\d_j}, \P}(f_{\P, \lb, k})+\lb\hnorm{f_{\P, \lb, k}}^2\big\}\\
&=& \cR_{\rho^\star, \P}(f_{\P, \lb, k})+\lb\hnorm{f_{\P, \lb, k}}^2,
\eeqna
which implies that
$$\cR_{\rho^\star, \P}(g_{\P, \lb, k})+\lb\hnorm{g_{\P, \lb, k}}^2\leq \cR_{\rho^\star, \P}(f_{\P, \lb, k})+\lb\hnorm{f_{\P, \lb, k}}^2.$$
The strict convexity of the regularized risk functional $\cR_{\rho^\star, \P, \lb}(\cdot)$ on $\cH$ guarantees the uniqueness of the minimizer, which leads to $g_{\P, \lb, k}=f_{\P, \lb, k}$ and
\begin{equation}\label{appendixc.realweakconvergence}
\lim_{j\to \infty}\langle f_{\P, \lb, k, (\d_j)}, f\rangle_\cH=\langle f_{\P, k, \lb}, f\rangle_\cH, \quad \forall \ f\in\cH.
\end{equation}

Step 3. In the rest of the proof, we focus on estimating $\hanorm{\fsaaa-\fsbbb} .$ \
\beqna
& &\hanorm{\fsaaa-\fsbbb}\\
&\leq & \hanorm{\fsaaa-f_{\P_1, \lb_1, k_1, (\d_j)}}+\hanorm{f_{\P_1, \lb_1, k_1, (\d_j)}-f_{\P_2, \lb_2, k_2, (\d_j)}}+\hanorm{f_{\P_2, \lb_2, k_2, (\d_j)}-\fsbbb}.
\eeqna
The weak convergence (\ref{appendixc.realweakconvergence}) tells us that
$$\lim_{j\to\infty}\langle f_{\P_1, \lb_1, k_1, (\d_j)}, \fsaaa\rangle_{H_1}=\hhanorm{\fsaaa}$$
and
$$\lim_{j\to\infty}\hhanorm{f_{\P_1, \lb_1, k_1, (\d_j)}}=\hhanorm{\fsaaa},$$
which implies that
\beqna
&&\lim_{j\to\infty}\hhanorm{\fsaaa-f_{\P_1, \lb_1, k_1, (\d_j)}}\\
&=&\hhanorm{\fsaaa}+\lim_{j\to\infty}\hhanorm{f_{\P_1, \lb_1, k_1, (\d_j)}}-2\lim_{j\to\infty}\langle f_{\P_1, \lb_1, k_1, (\d_j)}, \fsaaa\rangle_{H_1}=0.
\eeqna

In the same way, we can prove
$$\lim_{j\to\infty}\hanorm{f_{\P_2, \lb_2, k_2, (\d_j)}-\fsbbb}=0.$$

Since $\rho^\star_{\d_j}$ is a convex, differentiable and Lipschitz continuous shifted loss function with constant $|\rho|_1$ and the uniform upper bound (\ref{appendixc.shiftedderivativebound}) for the derivative of $\rho^\star_{\d_j},$ Lemma \ref{appendixc.differentiable} yields that, for all $\lb_1, \lb_2>0,$
\beqnal\label{sec1.hnorm1}
	& &\hanorm{f_{\P_1, \lb_1, k_1, (\d_j)}-f_{\P_2, \lb_2, k_2, (\d_j)}}\nonumber\\
	&\leq & \tilde{C}_1^\prime(\rho, \lb_1, \lb_2)\tvnorm{\P_1-\P_2}+\tilde{C}_2^\prime(\rho, \lb_1, \lb_2)|\lb_1 -\lb_2|+\tilde{C}_3^\prime(\rho, \lb_1, \lb_2)\sup_{x\in \cX}\hanorm{\ka{x}-\kb{x}},\nonumber
	\eeqnal
	where $\tilde{C}_1^\prime(\rho, \lb_1, \lb_2):=\frac{4\kappa |\rho|_1}{\min\{\lb_1, \lb_2\}}, \tilde{C}_2^\prime(\rho, \lb_1, \lb_2):=\frac{\kappa|\rho|_1}{\min\{\lb_1^2, \lb_2^2\}},$ and $\tilde{C}_3^\prime(\rho, \lb_1, \lb_2):=\frac{|\rho|_1}{\min\{\lb_1, \lb_2\}}.$	

Therefore, our desired result is proved.

\qedr
\end{proofof}


 \subsection{Appendix D: Proofs for results in Section \ref{sec3}}\label{appendixd}
   
\begin{proofof}{\textbf{Proof of Corollary \ref{sec3.corollary2}}}
When $m=1,$ we take two kernels $k_{\bf{w}, \g_1}$ and $k_{\tilde{\bf{w}}, \g_1}$ both with depth $1,$ but with different weight parameters $\bf{w}$ and $\tilde{\bf{w}}.$
Notice that the univariate function $g$ given by $g(u) = \exp\left(-2 \g^{-2} u\right)$ satisfies $\max_{u\in [0, +\infty)} |g'(u)| = 2 \g^{-2}$. So we know that for all $x, x' \in \cX$,
\beq
& & \sup_{x\in\cX} \inorm{k_{\bf{w}, \g_1}(\cdot,x)-k_{\tilde{\bf{w}}, \g_1}(\cdot,x)} \nonumber\\
&=&  \sup_{x,x'\in\cX} \big|\exp\big(-2\g_1^{-2}\sum_{i\in I}w_i^2 (x_i-x_i^{\prime})^2\big)-\exp\big(-2\g_1^{-2}\sum_{i\in I}\tilde{w}_i^2 (x_i-x_i^{\prime})^2\big) \big|\nonumber \\
&\leq &  2\g_1^{-2}\sup_{x,x'\in\cX}\big|\sum_{i\in I}w_i^2 (x_i-x_i^{\prime})^2-\sum_{i\in I}\tilde{w}_i^2 (x_i-x_i^{\prime})^2\big|\nonumber\\
&\leq & 2\g_1^{-2} \sum_{i\in I}|w_i^2-\tilde{w}_i^2|\sup_{x_i,x'_i}|x_i-x_i^{\prime}|^2\nonumber\\
&\leq & 2\g_1^{-2}(diam(\cX))^2\sum_{i\in I}|w_i^2-\tilde{w}_i^2|.\nonumber
\eeq

But $\sum_{i\in I} w_i^2 \leq 1$ and $\sum_{i\in I} \tilde{w}_i^2 \leq 1,$ which yield
$$\sum_{i\in I} \left|w_i^2 - \tilde{w}_i^2\right| =\sum_{i\in I} \left|w_i + \tilde{w}_i\right| \left|w_i - \tilde{w}_i\right|
\leq \|{\bf w} + \tilde{{\bf w}}\|_{\ell^2} \|{\bf w} - \tilde{{\bf w}}\|_{\ell^2} \leq 2 \|{\bf w} - \tilde{{\bf w}}\|_{\ell^2}. $$

Hence
$$\sup_{x\in\cX} \inorm{k_{\bf{w}, \g_1}(\cdot,x)-k_{\tilde{\bf{w}}, \g_1}(\cdot,x)}\leq 4\g_1^{-2}(diam(\cX))^2\|{\bf w} - \tilde{{\bf w}}\|_{\ell^2},$$
which leads to our first assertion (\ref{depth1b}).

For the case of $m>1$, recall the definition of depth $m$ hierarchical Gaussian kernel as follows:
$$ k_{{\bf W}^{(1)}, \ldots, {\bf W}^{(m-1)}, {\bf w}, \gamma_1, \ldots, \gamma_m} (x, x')
=\exp\left(-2 \gamma_m^{-2} \sum_{i=1}^\ell w_i^2 \left(1- k_{{\bf W}_i^{(1)}, \ldots, {\bf W}_i^{(m-2)}, {\bf w}^{(m-1)}_i, \gamma_1, \ldots, \gamma_{m-1}} (x_{I_i}, x'_{I_i})\right)\right), $$
and denote the deviation quantity at depth $j \in \{1, \ldots, m\}$ as
$$ \Delta_j =\sup_{x\in {\mathcal X}_I} \sum_i \left\|k_{{\bf W}_i^{(1)}, \ldots, {\bf W}_i^{(j-1)}, {\bf w}^{(j)}_i, \gamma_1, \ldots, \gamma_j} (\cdot, x)
- k_{\tilde{{\bf W}}_i^{(1)}, \ldots, \tilde{{\bf W}}_i^{(j-1)}, {\bf w}^{(j)}_i, \gamma_1, \ldots, \gamma_j} (\cdot, x)\right\|_\infty. $$

Since the norms of the weights satisfy
$$\|{\bf W}^{(j)}\|_{\ell_2}:=\sum_{i=1}^{\ell}\|{\bf w}^{(j)}_i\|_{\ell^2}^2 \leq 1, \qquad j=1, \ldots, m, $$
where ${\bf w}^{(m)} = {\bf w},$ we have
$$ \sum_{i=1}^\ell w_i^2 \left(1- k_{{\bf W}_i^{(1)}, \ldots, {\bf W}_i^{(m-2)}, {\bf w}^{(m-1)}_i, \gamma_1, \ldots, \gamma_{m-1}} (x_{I_i}, x'_{I_i})\right) \in [0, 1]. $$

The univariate function $g$ given by $g(u) = \exp\left(-2 \gamma_m^{-2} u\right)$ satisfies $\max_{u\in [0, 1]} |g'(u)| = 2 \gamma_m^{-2}$. Hence $g$ is Lipschitz continuous. So we know that for $x, x' \in {\mathcal X}_I$,
\begin{eqnarray*}
&& \left|k_{{\bf W}^{(1)}, \ldots, {\bf W}^{(m-1)}, {\bf w}, \gamma_1, \ldots, \gamma_m} (x, x')
- k_{\tilde{{\bf W}}^{(1)}, \ldots, \tilde{{\bf W}}^{(m-1)}, \tilde{{\bf w}}, \gamma_1, \ldots, \gamma_m} (x, x')\right| \\
&\leq& 2 \gamma_m^{-2} \left|\sum_{i=1}^\ell w_i^2 k_{{\bf W}_i^{(1)}, \ldots, {\bf W}_i^{(m-2)}, {\bf w}^{(m-1)}_i, \gamma_1, \ldots, \gamma_{m-1}} (x_{I_i}, x'_{I_i}) - \tilde{w}_i^2 k_{\tilde{{\bf W}}_i^{(1)}, \ldots, \tilde{{\bf W}}_i^{(m-2)}, \tilde{{\bf w}}^{(m-1)}_i, \gamma_1, \ldots, \gamma_{m-1}} (x_{I_i}, x'_{I_i})\right| \\
&\leq& 2 \gamma_m^{-2} \left|\sum_{i=1}^\ell w_i^2 \left(k_{{\bf W}_i^{(1)}, \ldots, {\bf W}_i^{(m-2)}, {\bf w}^{(m-1)}_i, \gamma_1, \ldots, \gamma_{m-1}} (x_{I_i}, x'_{I_i}) - k_{\tilde{{\bf W}}_i^{(1)}, \ldots, \tilde{{\bf W}}_i^{(m-2)}, \tilde{{\bf w}}^{(m-1)}_i, \gamma_1, \ldots, \gamma_{m-1}} (x_{I_i}, x'_{I_i})\right)\right| \\
&& + 2 \gamma_m^{-2} \left|\sum_{i=1}^\ell \left(w_i^2 - \tilde{w}_i^2\right) k_{\tilde{{\bf W}}_i^{(1)}, \ldots, \tilde{{\bf W}}_i^{(m-2)}, \tilde{{\bf w}}^{(m-1)}_i, \gamma_1, \ldots, \gamma_{m-1}} (x_{I_i}, x'_{I_i})\right| \\
&\leq& 2 \gamma_m^{-2} \sum_{i=1}^\ell w_i^2 \Delta_{m-1}  + 2 \gamma_m^{-2} \sum_{i=1}^\ell \left|w_i^2 - \tilde{w}_i^2\right|.
\end{eqnarray*}
But $\sum_{i=1}^\ell w_i^2 \leq 1$ and  $\sum_{i\in I} \tilde{w}_i^2 \leq 1,$ which yield
$$\sum_{i=1}^\ell \left|w_i^2 - \tilde{w}_i^2\right| =\sum_{i=1}^\ell \left|w_i + \tilde{w}_i\right| \left|w_i - \tilde{w}_i\right|
\leq \|{\bf w} + \tilde{{\bf w}}\|_{\ell^2} \|{\bf w} - \tilde{{\bf w}}\|_{\ell^2} \leq 2 \|{\bf w} - \tilde{{\bf w}}\|_{\ell^2}. $$
Hence
$$ \Delta_m \leq 4 \gamma_m^{-2} \|{\bf w} - \tilde{{\bf w}}\|_{\ell^2} + 2 \gamma_m^{-2} \Delta_{m-1}. $$

Notice that $\Delta_1 \leq 2 \gamma_1^{-2} \left(diam({\mathcal X})\right)^2 \|{\bf W}^{(1)} - \tilde{{\bf W}}^{(1)}\|_{\ell^2}$. Then by induction we have
\beq
 \Delta_m &\leq & 4 \gamma_m^{-2} \|{\bf w} - \tilde{{\bf w}}\|_{\ell^2} + \sum_{j=2}^{m-1}
2^{m-j +2} \left(\Pi_{p=j}^m \gamma_{p}^{-2}\right) \|{\bf W}^{(j)} - \tilde{{\bf W}}^{(j)}\|_{\ell^2}\nonumber\\
& & + 2^{m} \left(diam({\mathcal X})\right)^2
\left(\Pi_{p=1}^m \gamma_{p}^{-2}\right) \|{\bf W}^{(1)} - \tilde{{\bf W}}^{(1)}\|_{\ell^2}. \nonumber
\eeq

From this we obtain the assertion (\ref{depthmb}).
\qedr
\end{proofof}

\end{document}